\documentclass[preprint,12pt]{elsarticle}

\usepackage{amssymb}
\usepackage{amsmath}
\usepackage{graphicx} % Required for inserting images
\usepackage{algorithm2e}
\usepackage[english]{babel}
\usepackage{multirow}

\usepackage[letterpaper,top=2cm,bottom=2cm,left=3cm,right=3cm,marginparwidth=1.75cm]{geometry}

\usepackage{graphicx}

\usepackage[utf8]{inputenc}
\usepackage{color}
\usepackage{graphicx}
\usepackage{float}
\usepackage{subfigure}
\usepackage{amsfonts}
\usepackage{amsthm}

\usepackage{hyperref}

\usepackage[capitalise,nameinlink]{cleveref}

\crefname{theorem}{Theorem}{Theorems}     % 这样 \cref 输出也是大写的 "Theorem"
\crefname{lemma}{Lemma}{Lemmas}
\crefname{corollary}{Corollary}{Corollaries}
\crefname{proposition}{Proposition}{Propositions}
\crefname{definition}{Definition}{Definitions}
\crefname{remark}{Remark}{Remarks}
\crefname{example}{Example}{Examples}
\crefname{figure}{Figure}{Figures}
\crefname{table}{Table}{Tables}

\newtheorem{theorem}{Theorem}[section]

\newtheorem{lemma}[theorem]{Lemma}
\newtheorem{corollary}[theorem]{Corollary}

\newtheorem{definition}[theorem]{Definition}

\journal{Neural Networks}

\begin{document}

\begin{frontmatter}

%% Title, authors and addresses

%% use the tnoteref command within \title for footnotes;
%% use the tnotetext command for theassociated footnote;
%% use the fnref command within \author or \affiliation for footnotes;
%% use the fntext command for theassociated footnote;
%% use the corref command within \author for corresponding author footnotes;
%% use the cortext command for theassociated footnote;
%% use the ead command for the email address,
%% and the form \ead[url] for the home page:
%% \title{Title\tnoteref{label1}}
%% \tnotetext[label1]{}
%% \author{Name\corref{cor1}\fnref{label2}}
%% \ead{email address}
%% \ead[url]{home page}
%% \fntext[label2]{}
%% \cortext[cor1]{}
%% \affiliation{organization={},
%%             addressline={},
%%             city={},
%%             postcode={},
%%             state={},
%%             country={}}
%% \fntext[label3]{}

\title{Efficient Approximation to Analytic and $L^p$ functions by Height-Augmented ReLU Networks}

%% use optional labels to link authors explicitly to addresses:
\author[CUHK]{ZeYu Li}
\ead{1155184076@link.cuhk.edu.hk}
\author[CityU]{FengLei Fan}
\ead{fenglfan@cityu.edu.hk}
\author[GZNC,UIC]{TieYong Zeng\corref{Cor1}}
\ead{tieyongzeng@bnbu.edu.cn}

\cortext[cor1]{Corresponding author.}

\affiliation[CUHK]{organization={Chinese University of Hong Kong, Department of Mathematics},
            city={Hong Kong},
            postcode={999077},
            state={NT},
            country={Hong Kong SAR, China}}

\affiliation[CityU]{organization={City University of Hong Kong, Department of Data Science},
            city={Hong Kong},
            postcode={999077},
            state={Kowloon},
            country={Hong Kong SAR, China}}

\affiliation[GZNC]{organization={Guangzhou Nanfang College, School of Mathematics and Statistics},
            city={Guangzhou},
            postcode={510970},
            state={Guangdong},
            country={China}}

\affiliation[UIC]{organization={Beijing Normal Hong Kong Baptist University, Institute for Advanced Study},
            city={Zhuhai},
            postcode={519087},
            state={Guangdong},
            country={China}}

%% Abstract
\begin{abstract}
%% Text of abstract
This work addresses two fundamental limitations in neural network approximation theory. We demonstrate that a three-dimensional network architecture enables a significantly more efficient representation of sawtooth functions, which serves as the cornerstone in the approximation of analytic and $L^p$ functions. First, we establish substantially improved exponential approximation rates for several important classes of analytic functions and offer a parameter-efficient network design. Second, for the first time, we derive a quantitative and non-asymptotic approximation of high orders for general $L^p$ functions. Our techniques advance the theoretical understanding of the neural network approximation in fundamental function spaces and offer a theoretically grounded pathway for designing more parameter-efficient networks.
\end{abstract}

% %%Graphical abstract
% \begin{graphicalabstract}
% %\includegraphics{grabs}
% \end{graphicalabstract}

%%Research highlights
% \begin{highlights}
% \item \textbf{Improvement of the approximation rate for analytic functions.} By introducing the height into ReLU networks, we substantially improve the approximation
% efficiency for three types of analytic function studied in the previous work.
% \item \textbf{Quantitative and non-asymptotic approximation of high-order in $L^p$ space.} We derive a quantitative and non-asymptotic approximation error bound of any order of general $L^p$ functions for the first time.
% \end{highlights}

%% Keywords
\begin{keyword}
Deep Neural Network,  
Approximation Theory,
Analytic Functions,
$L^p$ Functions.
%% keywords here, in the form: keyword \sep keyword

%% PACS codes here, in the form: \PACS code \sep code

%% MSC codes here, in the form: \MSC code \sep code
%% or \MSC[2008] code \sep code (2000 is the default)

\end{keyword}

\end{frontmatter}

%% Add \usepackage{lineno} before \begin{document} and uncomment 
%% following line to enable line numbers
%% \linenumbers

%% main text
%%

%% Use \section commands to start a section
\section{Introduction}
\label{sec1}
%% Labels are used to cross-reference an item using \ref command.
Over the past few years, deep neural networks (NNs) have enabled remarkable advances across many important domains such as computer vision \cite{Tian2025survey,bourceanu2025foundations}, natural language processing \cite{Ilman2025role,Tucudean2024transformers}, and scientific computing \cite{Huang2025pde,jha2025theory}. This success is largely attributed to their ability to express highly complex information by composing multiple layers of affine transformations interleaved with non-linear activation functions. In this context, a substantial body of research, often referred to as neural network approximation theory \cite{Elbrächter2021deep}, explores the expressivity of deep networks, aiming to understand their strong approximation capability. Specifically, it is investigated how well a network can express a specific class of functions, \textit{e.g.}, (Hölder) continuous functions \cite{shen2020deep,voigtlaender2019approximation}, smooth functions \cite{Lu_2021}, analytic functions \cite{wang2018exponential,Beknazaryan2021Analytic,schwab2021deep}, $L^p$ functions \cite{achour2022general,CAO2009errors}. 

\begin{itemize}
    \item \textit{Continuous functions}: In \cite{shen2020deep}, a ReLU network of width $\mathcal{O}\left(W\right)$ and depth $\mathcal{O}(K)$ can approximate any $\alpha$-Hölder continuous function on $[0,1]^d$ with a nearly optimal approximation error $\mathcal{O}\left((WK)^{-2\alpha/d}\right)$. From the perspective of parameter count, \cite{voigtlaender2019approximation} showed that to reach an error $\epsilon$ in approximating a $\alpha$-Hölder continuous function on $\left[-\frac{1}{2},\frac{1}{2}\right]$, the minimal number of weights required in a ReLU NN is $\mathcal{O}\left(\epsilon^{-d/\alpha}\right)$, which is optimal for certain quantized network architectures.
    \item \textit{Smooth functions}: \cite{Lu_2021} established the approximation of arbitrary $s$-order continuously differentiable functions on $[0,1]^d$, where a ReLU NN of width $\mathcal{O}\left(W\ln W \right)$ and depth $\mathcal{O}(K\ln K)$ can reach a nearly tight error bound of $\mathcal{O}\left((WK)^{-2s/d}\right)$.
    \item \textit{Sobolev functions}: For Sobolev functions in $W^{k,\infty}\left([0,1]^d\right)$, \cite{yarotsky18Optimal} showed that given $M$ parameters, the optimal approximation error that a ReLU NN can achieve is $\mathcal{O}\left(M^{-k/d}\right)$. For the approximation in $W^{k,p}\left([0,1]^d\right)$ with $1\leq p<\infty$, \cite{siegel2024optimal} showed that the optimal approximation error a fixed-width and $L$-depth ReLU NN can reach is $\mathcal{O}\left(L^{-2k/d}\right)$.

    %\item \textit{Functions in Korobov space}:
    \item \textit{Analytic functions}: Some studies can establish an exponential approximation rate with a narrow yet deep network. For example, \cite{wang2018exponential} used the power series expansion to construct a ReLU network of fixed width and depth $\mathcal{O}\bigl(K^{2d}\bigr)$ to approximate a real analytic function on $[0,1]^d$ with an error $\mathcal{O}\bigl(\exp(-K)\bigr)$. Furthermore, similar results for analytic functions under specific holomorphic conditions were also developed. \cite{Beknazaryan2021Analytic} applied the Chebyshev approximation to the analytic functions that can be analytically continued to a Bernstein ellipse, showing that they can be approximated by networks of width $\mathcal{O}\bigl(N^{d+2}\bigr)$ and depth $\mathcal{O}\bigl(N^{2}\bigr)$ with an error $\mathcal{O}(\exp{-N})$ on $[0,1]^d$. Moreover, \cite{schwab2021deep} utilized Hermite polynomials to approximate holomorphic functions on $\mathbb{R}^d$ under Gaussian measure, achieving an $L^2$ error $\mathcal{O}\bigl(\exp(-N)\bigr)$ by networks of depth $\mathcal{O}\bigl(N^3\log^2N\bigr)$.

    \item \textit{$L^p$ functions}: \cite{CAO2009errors,Zhao2009LpError} studied the approximation of univariate $L^p$ functions using single‑hidden‑layer networks. \cite{achour2022general} provided lower and upper approximation bounds of non‑decreasing $L^p$ functions using  networks with Heaviside activation. 
\end{itemize}
From the above summary, it can be seen that i) the theory for continuous functions, smooth functions, and Sobolev functions is almost well-established, reaching nearly tight error bounds achieved by the VC-dimension analysis; ii) the approximation of analytic functions can achieve an exponential error rate on depth or width. Since they utilize polynomials as intermediate approximation, the constructed networks need at least $\mathcal{O}\left(N^2\right)$ layers to give an error of $\mathcal{O}\left(\exp{(-N)}\right)$; iii) the existing approximation results are not established directly for the general $L^p$ space. The quantitative and non-asymptotic approximation of $L^p$ functions are confined to univariate functions as targets. This is likely because the $L^p$ space lacks structural regularity that most constructive approximation proofs rely on, not like its subspaces such as the Sobolev \cite{YANG2025optimal,liu2025integral} space.
\vspace{-0.2cm}
\begin{figure}[H]
\centering  
\label{Fig_3DNet}
\includegraphics[width=1\textwidth]{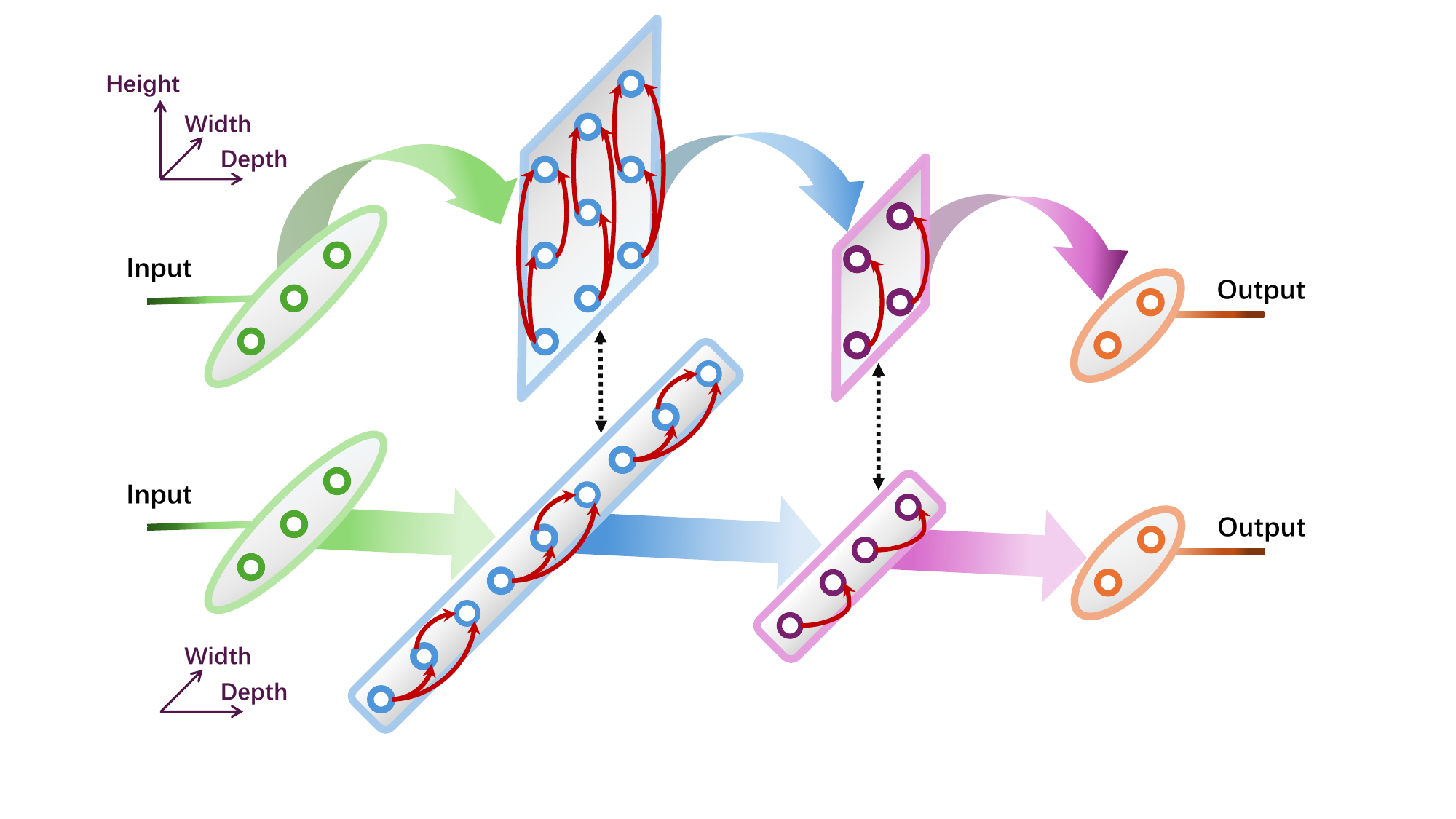}
\vspace{-1.3cm}
\caption{Adding intra-layer links creates a new hierarchy among neurons in the same layer, which induces a new dimension referred to as height. Note that a 2D network can be regarded as a 3D network with height=1.}
\label{Fig_Sparse3D}
\vspace{-0.2cm}
\end{figure}

Based on this analysis, we ask research questions regarding the universal approximation of analytic and $L^p$ functions, respectively: 1) \textit{can we improve the approximation rate for analytic functions?} and 2) \textit{can we give quantitative and non-asymptotic approximation for the general $L^p$ functions?} These two questions are theoretically important. Analytic functions represent an ideal yet practically significant class that concerns a vast array of critical phenomena in partial differential equations (PDEs) \cite{Treves2022analyticPDE}, complex analysis \cite{Bak2010complex}, and algebraic geometry \cite{Serre1956Geo}. Therefore, demonstrating such improved rates can shed light on the field of AI for science. Moreover, current empirical scaling laws often exhibit diminishing returns with increased model size. A theoretically better exponential convergence rate suggests the potential for achieving drastically higher accuracy ceilings without necessarily requiring exponentially larger models or datasets, thereby bending the scaling curve towards more favorable compute-accuracy trade-offs. The $L^p$ functions constitute foundational objects in modern analysis and its applications, forming the core of functional analysis \cite{Stein2011Functional}, harmonic analysis \cite{stein1993Harmonic}, and PDEs \cite{Evans2010pde}. Due to the well-behaved
 properties of $L^p$ spaces, \textit{i.e.}, completeness, reflexivity, and compatibility with fundamental inequalities, the $L^p$ metric is also widely used in deep learning theory. The quantitative and non-asymptotic approximation of general $L^p$ functions provides an explicit and computable error bound. Such a bound will enrich our understanding of the network's approximation ability. We find that answering these two questions leads to the same issue: how to construct a better network for representing the sawtooth function? On one hand, approximating an analytic function involves the construction of polynomials, which, according to \cite{YAROTSKY2017Error}, requires a ReLU network to first express a sawtooth function in order to construct power functions. On the other hand, we can use a trigonometric polynomial to provide a quantitative and non-asymptotic construction for general $L^p$ functions, while expressing a trigonometric polynomial also involves the sawtooth function as a core building block.

An additional topological dimension, termed height, was introduced in \cite{Fan2025Expressivity}, which is realized via intra-layer links. As seen in \cref{Fig_Sparse3D}, height-augmented three-dimensional (3D) neural networks can be realized by introducing intra-layer connections into classical two-dimensional (2D) architectures. Topologically, a 2D network with width $W$ and depth $K$ can be seen as a 3D network with width $W$, depth $K$, and height $1$. Although not specifying intra-layer links as a new dimension, \cite{AllenZhu2025Physics} applied intra-layer links in transformers as canon layers to add horizontal information flow, while \cite{zhang2022theoretically} introduced intra-layer links to improve the generalization capability in spiking networks. This hierarchical structure can significantly enhance network expressivity without substantially increasing parameter counts. At the functional level, the 3D architecture enables an exponential reduction in neuron count when representing sawtooth functions. Thus, it attains exponential approximation rates on both depth and width for polynomials, while also representing high-frequency components in trigonometric series more efficiently. Our results are summarized in Table \ref{table_summary}. In this work, we leverage this height-augmented architecture in Rectified Linear Unit (ReLU) NNs. Our key contributions are summarized as follows:

\begin{itemize}
\item \textbf{Improvement of the approximation rate for analytic functions.} 
By introducing the height in ReLU networks, we substantially improve the approximation efficiency for three common types of analytic function studied in the previous work. First, to achieve error $(1-\delta)^N$ in approximating a real analytic function with absolutely convergent power series, we improve the excessively deep network in \cite{wang2018exponential} with fixed width and depth $\mathcal{O}(N^{2d})$ to a 3D ReLU NN with width $\mathcal{O}(N^{d-1})$, depth $\mathcal{O}(N)$, and height $\mathcal{O}(N)$. Second, compared to \cite{Beknazaryan2021Analytic} that needs an $\mathcal{O}(N^2)$-depth and $\mathcal{O}(N^{d+2})$-width network for an error of $\mathcal{O}(\exp(-N))$ in approximating analytic functions on $[0,1]^d$ with holomorphic continuation to a Bernstein ellipse, we achieve the same error with only depth $\mathcal{O}(N)$,  width $\mathcal{O}(N^{d-1})$, and height $\mathcal{O}(N)$. Third, for the analytic functions in $L^2(\mathbb{R}^d,\gamma_d)$, which are holomorphic in a complex strip, \cite{schwab2021deep} utilizes a network of depth $\mathcal{O}\left(N \log^2 N\right)$ to reach an error $\mathcal{O}\left(\exp\left(-N^{\frac{1}{3}}\right)\right)$, while we only need $\mathcal{O}(N)$ layers and reduce the error to $\mathcal{O}\left(\exp\left(-N^{\frac{1}{2}}\right)\right)$.

\item \textbf{Quantitative and non-asymptotic approximation of high-order in $L^p$ space.} For any $r\in\mathbb{N}^+$ and $1\leq p\leq \infty$, we derive a quantitative and non-asymptotic approximation error bound of order $r$ for general $L^p$ functions. To the best of our knowledge, this is the first such approximation for general $L^p$ functions.
\end{itemize}

\begin{table}[!h]
\centering\footnotesize
\label{Result_3Dvs2D}
\renewcommand{\arraystretch}{1.2}
\setlength{\tabcolsep}{7pt}
\scalebox{0.9}{
\begin{tabular}{c|l|l|l|l|l}
\hline
Target Function & Result & Width & Depth & Height & Error \\
\hline
\multirow{2}{*}{\shortstack{Polynomial \\ on [0,1]}} 
& Proposition \ref{approx_poly} & $\mathcal{O}(1)$ & $\mathcal{O}(1)$ & $\mathcal{O}(N)$ & $\mathcal{O}\left(2^{-N}\right)$ \\
\cline{2-6}
& Proposition III.5, \cite{Elbrächter2021deep} & $\mathcal{O}(1)$ & $\mathcal{O}(N)$ & 1 & $\mathcal{O}(\exp(-N))$ \\
\hline

\multirow{2}{*}{\shortstack{Analytic on\\ $[0,1-\delta]^d$}} 
& Theorem \ref{real_analytic} & $\mathcal{O}(N^{d-1})$ & $\mathcal{O}(N)$ & $\mathcal{O}(N)$ & $\mathcal{O}\left((1-\delta)^{N}\right)$ \\
\cline{2-6}
& Lemma III.7, \cite{wang2018exponential} & $\mathcal{O}(1)$ & $\mathcal{O}(N^{2d})$ & 1 & $\mathcal{O}\left((1-\delta)^{N}\right)$ \\
\hline

\multirow{2}{*}{\shortstack{Analytic on $[0,1]^d$,\\ holomorphic in an ellipse}} 
& Theorem \ref{holomorphic_ellipse} & $\mathcal{O}(N^{d-1})$ & $\mathcal{O}(N)$ & $\mathcal{O}(N)$ & $\mathcal{O}\left(\rho^{-N}\right)$ \\
\cline{2-6}
& Theorem2.2, \cite{Beknazaryan2021Analytic} & $\mathcal{O}(N^{d+2})$ & $\mathcal{O}(N^2)$ & 1 & $\mathcal{O}\left(2^{-N}\right)$ \\
\hline

\multirow{2}{*}{\shortstack{Analytic on $L^2\left(\mathbb{R}^d,\gamma_d\right)$, \\holomorphic in a strip}} 
& Theorem \ref{holomorphic_R^d} & $\mathcal{O}(N^{d+1})$ & $\mathcal{O}(N)$ & $\mathcal{O}(N)$ & $\mathcal{O}\left(\exp\left(-N^{\frac{1}{2}}\right)\right)$ \\
\cline{2-6}
& Theorem4.7, \cite{schwab2021deep} & - & $\mathcal{O}\left(N \log^2 {N}\right)$ & 1 & $\mathcal{O}\left(\exp\left(-N^{\frac{1}{3}}\right)\right)$ \\
\hline

% \shortstack{$L^p$ functions\\ on $[-1,1]^d$} & Theorem \ref{approx_Lp} & $\mathcal{O}\left(N^d\right)$ &$\mathcal{O}(L)$ &$\mathcal{O}\left(L+\log_2 N\right)$ &\shortstack{$\mathcal{O}\left(\omega_r^d\left(f,N^{-1}\right)\right.$\\+$\left. N^d 2^{-L}\right)$}\\
\multirow{2}{*}{\shortstack{$L^p$ functions\\ on $[-1,1]^d$}} & \multirow{2}{*}{Theorem \ref{approx_Lp}} & \multirow{2}{*}{$\mathcal{O}\left(N_1^d\right)$} & \multirow{2}{*}{$\mathcal{O}(N_2)$} & \multirow{2}{*}{$\mathcal{O}\left(N_2+\log_2 N_1\right)$} & $\mathcal{O}\left(\omega_r^d\left(f,N_1^{-1}\right)_p\right.$ \\
& & & & & +$\left. N^d 2^{-N_2}\right)$ \\

\hline
\end{tabular}}
\label{table_summary}
\caption{Comparisons between our work and the existing results. $\gamma_d$ is the $d$-dimensional Gaussian measure. $\omega_d^r(f,\cdot)_p$ is the $L^p$ modulus of smoothness of order $r$ in $\mathbb{R}^d$.  Note that a 3D network of width $W$, depth $K$, and height $H$ is topologically equivalent to a 2D intra-linked network of width $W\times H$ and depth $K$.}
\end{table}

\section{Notation and Definition}
\label{sec_notation}

First, we summarize the basic notations used in this work.

\begin{itemize}
    \item Vectors and matrices are bold-faced. For example, for any $\boldsymbol{x}\in\mathbb{R}^d$, $\boldsymbol{x}=(x_1,\ldots,x_d)$. In particular, $\boldsymbol{j}$ is a multi-index $\boldsymbol{j}=(j_1,\ldots,j_d)\in\mathbb{N}^d$. 
    \item For vectors $\boldsymbol{x},\boldsymbol{y}\in\mathbb{R}^d$ and $a,b\in\mathbb{R}$, the inequality $a<\boldsymbol{x}<b$ means that $a<x_k<b$ for all $1\leq k\leq d$. Similarly, the inequality $\boldsymbol{x}<\boldsymbol{y}$ means that $x_k<y_k$ for all $1\leq k\leq d$.

    \item  For $\boldsymbol{x}\in\mathbb{R}^d$ and $\boldsymbol{j}\in\mathbb{N}^d$, we use the notation $\boldsymbol{x}^{\boldsymbol{j}}$ to denote the monomial $\boldsymbol{x}^{\boldsymbol{j}}=x_1^{j_1}\cdots x_d^{j_d}$.
    
    \item $\Pi_{n}^d$ is the set of all $2$-periodic trigonometric polynomials in $\mathbb{R}^d$ of degree at most $n$:
 \begin{equation*}
     \Pi_{n}^d=\left\{\sum_{0\leq \boldsymbol{j} \leq n}a_{\boldsymbol{j}}\cos{\left(j_1\pi x_1-\frac{\eta_1\pi}{2}\right)}\cdots \cos{\left(j_d\pi x_d-\frac{\eta_d\pi}{2}\right)}:a_{\boldsymbol{j}}\in\mathbb{R}\right\},
 \end{equation*}
 where $\eta_k=0$ corresponds to a cosine term and $\eta_k=1$ corresponds to a sine term (since $\cos\left(\theta-\frac{\pi}{2}\right)=\sin\theta$).

    \item For $d\in\mathbb{N}$, we denote the standard Gaussian measure on $\mathbb{R}^d$ as $\gamma_d$, whose density w.r.t. the Lebesgue measure on $\mathbb{R}^d$ is given by
\begin{equation}
\label{eq_Gaussian}
    \frac{1}{(2\pi)^{d/2}}\exp{\left(-\frac{\left\|\boldsymbol{x}\right\|_2^2}{2}\right)},\quad\forall \boldsymbol{x}\in\mathbb{R}^d.
\end{equation}

    \item  We denote the $n$-th probabilists' Hermite polynomial normalized in $L^2(\mathbb{R},\gamma_1)$ as
\begin{equation}
\label{eq_Hermite}
    \Xi_n(x)=\frac{(-1)^n}{\sqrt{n!}}e^{x^2/2}\frac{d^n}{dx^n}e^{-x^2/2}
\end{equation}
with the usual convention $0!=1$. Note that $\left\{ \Xi_n \right\}_{n\in\mathbb{N}_0}$ is an orthonormal basis of $L^2(\mathbb{R},\gamma_1)$, \textit{i.e.},
\begin{equation*}
\langle \Xi_n,\Xi_m\rangle=\int_\mathbb{R}\Xi_n(x)\Xi_m(x)\ d\gamma_1(x)=\delta_{n,m}.
\end{equation*}

For a multi-index $\boldsymbol{\nu}\in \mathbb{N}^d$ and $x\in\mathbb{R}^d$, we use $\Xi_{\boldsymbol{\nu}}(\boldsymbol{x})$ to denote $\Xi_{\boldsymbol{\nu}}(\boldsymbol{x})=\prod_{j=1}^d \Xi_{\nu_j}(x_j)$.
\end{itemize}

Then, we provide the formal definitions of function classes of interest, including three common types of analytic functions: real analytic functions on an interval $[0,1-\delta]^d$, and analytic functions on $[0,1]^d$ with holomorphic extensions to a complex ellipse, and analytic functions in $L^2\left(\mathbb{R}^d,\gamma_d\right)$ with holomorphic extensions to a complex strip. The reason why they are special is because they can be approximated by polynomials series with an error that decays exponentially in the polynomial degree

\begin{definition}[Real analytic functions]
    A function $f:[0,1-\delta]^d$ is said to be real analytic, if it has an absolutely convergent power series $\sum_{\boldsymbol{j}\geq 0}a_{\boldsymbol{j}}\boldsymbol{x}^{\boldsymbol{j}}$.
\end{definition}

\begin{definition}[Holomorphic functions]
    Given a function $f$ defined on a subset of $\mathbb{R}^d$, we say $f$ is analytic (or holomorphic) on a complex region if $f$ can be analytically continued to that region.
\end{definition}
Especially, we are interested in the following two types of complex regions that are  widely studied in literature:
\begin{itemize}
    \item Complex ellipse \cite{Trefethen2016Multivariate,Beknazaryan2021Analytic}:
    \begin{equation}
    \label{eq_ellipse}
        E_{d,\rho}=\left\{\boldsymbol{z}\in\mathbb{C}^d:\left|\boldsymbol{z}\right|+\left|\left|\boldsymbol{z}\right|-d\right|\leq d+2h^2\right\},
    \end{equation}
    for some $\rho>0$, where $h=(\rho-\rho^{-1})/2$.
    \item Complex strip \cite{Wang2023Convergence,schwab2021deep}:
    \begin{equation}
    \label{eq_strip}
        S_{\boldsymbol{\tau}}=\otimes_{j=1}^d\left\{z_j=x_j+\operatorname{i}y_j\in\mathbb{C}:|y_j|<\tau_j\right\}\subseteq\mathbb{C}^d,
    \end{equation}
    for some $\boldsymbol{\tau}>0$.
\end{itemize}
For approximating holomorphic functions on $\mathbb{R}^d$, we need to use the Hermite polynomials and the Gaussian measure.

\begin{definition}
Let $\Omega \subseteq \mathbb{R}^d$ be a measurable set. For $1 \leq p \leq \infty$, we define the  $L^p(\Omega)$ space as the set of all measurable functions $f: \Omega \to \mathbb{R}$ such that the $L^p$ norm
\begin{equation*}
\|f\|_{L^p(\Omega)} := 
\begin{cases}
\displaystyle \left( \int_{\Omega} |f(\boldsymbol{x})|^p \, d\boldsymbol{x} \right)^{1/p}, & 1 \leq p < \infty, \\
\displaystyle \operatorname*{ess\,sup}_{\boldsymbol{x} \in \Omega} |f(\boldsymbol{x})|, & p = \infty,
\end{cases}
\end{equation*}
is finite. Functions that are equal almost everywhere are identified as the same element in $L^p(\Omega)$.
\end{definition}

For $f\in L^p\left([-T,T]^d\right)$ with $T>0$ and $p\geq 1$, we characterize the approximation behavior by the periodic modulus of smoothness \cite{Aliev2023Jackson}. For this purpose, we extend $f$ periodically to the entire space $\mathbb{R}^d$ by defining
 \begin{equation}
     f(\boldsymbol{x}+2\boldsymbol{k}T)=f(\boldsymbol{x}), \quad \forall \boldsymbol{x}\in [-T,T]^d,\boldsymbol{k}\in \mathbb{Z}^d.
 \end{equation}
 
\begin{definition}[$L^p$ modulus of smoothness]
    The $L^p$ modulus of smoothness of order $r$ is given by
\begin{equation}
\label{eq_modulus}
    \omega_{r}(f, t)_{p}=\sup _{1 \leq i \leq d, 0<h \leq t}\left(\int_{[-T,T]^d}\left|\Delta_{i, h}^{r} f(\boldsymbol{x})\right|^{p} d \boldsymbol{x}\right)^{1/p},
\end{equation}
for $1\leq p<\infty$, and
\begin{equation}
    \omega_{r}(f, t)_{\infty}=\sup _{1 \leq i \leq d, 0<h \leq t}\sup_{\boldsymbol{x}\in [-T,T]^d}\left|\Delta_{i, h}^{r} f(\boldsymbol{x})\right|,
\end{equation}
where $\Delta_{i, h}^{r} f$ is the $r$-th difference in the direction of the $i$-th coordinate. Formally, we have
\begin{equation}
\label{eq_diff}
    \Delta_{i,h}^{r} f(\boldsymbol{x})=\sum_{k=0}^{r}(-1)^{r-k}\binom{r}{k} f\left(\boldsymbol{x}+kh \boldsymbol{e}_i\right).
\end{equation}
\end{definition}

In this work, we apply the ReLU activation, \textit{i.e.}, for any $\boldsymbol{x}\in\mathbb{R}^d$,
\begin{equation*}
    \sigma(\boldsymbol{x})=\left( \sigma(x_1),\ldots,\sigma(x_d) \right)
    =\left( \max\{x_1,0\},\ldots,\max\{x_d,0\} \right).
\end{equation*}

\begin{definition}[3D networks \cite{Fan2025Expressivity}]
\label{def_3DNet}
    For a 3D $\mathbb{R}^{w_{0}} \rightarrow \mathbb{R}$ ReLU NN with $K$ hidden layers, where in the $k$-th layer, there are $H_k$ floors with $(w_{k1},\cdots,w_{kH_k})$ neurons at each floor. Let $\mathbf{G}_{h_1,h_2}^k \in \mathbb{R}^{w_{kh_1}\times w_{kh_2}}$, $h_2>h_1$ denote the connecting operations between the $h_1$-th and $h_2$-th floors within the $k$-th hidden layer. If $\mathbf{G}^k_{h_1,h_2}(n_1,n_2)\neq 0$, it means that the output of the $n_1$-th neuron in the $h_1$-th floor is fed into the $n_2$-th neuron in the $h_2$-th floor, and multiplied by a coefficient $\mathbf{G}^k_{h_1,h_2}(n_1,n_2)$; otherwise, the output of the $n_1$-th neuron is not. $\mathbf{G}^k_{h_1,h_2 \leq h_1}=0$ by default, since no loops are allowed in a network.We use 
    
$$    \tilde{\mathbf{f}}_{0}=\mathbf{x} \in \mathbb{R}^{w_{0}} \text{ and } \tilde{\mathbf{f}}_{k}=\left[\tilde{\mathbf{f}}_{k}^{(1)}, \ldots, \tilde{\mathbf{f}}_{k}^{\left(w_{k1}\right)},\ldots, \tilde{\mathbf{f}}_{k}^{(\sum_{t=1}^{H_k-1}w_{kt}+1)}, \ldots, \tilde{\mathbf{f}}_{k}^{\left(\sum_{t=1}^{H_k}w_{kt}\right)}\right] \in \mathbb{R}^{\sum_{t=1}^{H_k}w_{kt}}$$
to denote the input and the outputs of the $k$-th layer, respectively. The $j$-th pre-activation in the $k$-th layer and the output of the network are computed as the following:  
$$g_{k}^{(j)}=\left\langle\mathbf{a}_{i}^{(j)}, \tilde{\mathbf{f}}_{k-1}\right\rangle+b_{i}^{(j)} \text{ and }
\tilde{f}_{k}^{(j)}=\sigma\left(g_{k}^{(j)}+\sum_{p<j}\mathbf{G}_{h(p),h(j)}^k(n(p),n(j))\tilde{f}_{k}^{(p)}\right)$$  
for each $j$, where $h(l)$ is a function mapping the $l$-th neuron into their floors, and $n(l)$ is a function mapping the $l$-th neuron into the order in its floor.
\end{definition}

For an $\mathbb{R}^{w_{0}} \rightarrow \mathbb{R}$ 3D ReLU NN with neurons $[(w_{11},..,w_{1H_1}), \ldots, (w_{K1},..,w_{KH_K})]$ in $K$ hidden layers, respectively, the depth of a network is $K+1$, the width is $W=\max_k\max\{w_{k1},\ldots, w_{kH_k}\}$, and the height is $H=\max\{H_1,\ldots,H_K\}$. We denote the set of all 3D ReLU NNs of width $W$, depth $K$ and height $H$ by $\mathcal{N}_{W,K,H}$. In this work, we use a sparse 3D network, where $G^k$ is a sparse matrix.

\begin{definition}
  The sawtooth function $g_s$ on $[0,1]$ with $2^{s-1}$ “sawteeth" is  formally defined by
\begin{equation}
\label{eq_sawtooth}
    g_s=\begin{cases} -2\left|x-\frac{1}{2}\right|+1,  & \text{if }s=1, \\  
    g_1\circ g_{s-1} ,  & \text{if }s>1.
    \end{cases}
\end{equation}
\end{definition}

\section{Approximation of Analytic Functions}
In this section, we establish constructive approximations of three types of analytic functions of interest. First, in Section \ref{sec_poly}, we utilize 3D ReLU NNs to approximate arbitrary univariate polynomials and $C^{\infty}$ functions based on the sawtooth function. Then in Section \ref{sec_analytic_cube}, we leverage the construction in Section \ref{sec_poly} for polynomials and establish an exponential error rate for two types of analytic functions: real analytic functions with absolutely convergent power series on $[0,1-\delta]^d$ and analytic functions in $[0,1]^d$ with holomorphic extension to a specific complex ellipse. Because the approaches we deal with these two types of analytic functions are similar, we put them in the same place. Finally, in Section \ref{sec_analytic}, we introduce the Hermite polynomials and Gaussian measure to establish the global approximation in $\mathbb{R}^d$ of any analytic functions that can be analytically continued to a specific complex strip.

\subsection{Approximation to Polynomials and $C^{\infty}$ Functions}
\label{sec_poly}
\begin{lemma}
    For any $H>0$, there is a 3D ReLU network $\mathcal{N}_{2,1,H}$, whose function $f_H: [0,1]\to [0,1]$ satisfies 
\begin{equation}
    \sup_{x\in[0,1]}\left|f_H(x)-x^2\right|\leq 2^{-2(H+1)}.
\end{equation}
\label{prop_square}
\end{lemma}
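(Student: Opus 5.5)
We follow Yarotsky's construction, $x^2 = x-\sum_{s\ge1} g_s(x)/4^s$ on $[0,1]$, realized inside a single hidden layer by stacking the sawtooth compositions vertically as floors. Let $f_m$ be the piecewise linear interpolant of $x^2$ on the uniform grid $\{k2^{-m}\}_{k=0}^{2^m}$. Then
\[
f_{m-1}(x)-f_m(x)=\frac{g_m(x)}{2^{2m}},\qquad f_m(x)=x-\sum_{s=1}^{m}\frac{g_s(x)}{2^{2s}}.
\]
Since $x^2$ is convex with second derivative $2$, the interpolation error on a cell of length $2^{-m}$ is at most $\tfrac{1}{8}\cdot 2\cdot 2^{-2m}=2^{-2(m+1)}$. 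So it suffices to take $m=H$ and show that $f_H$ is computed by a network in $\mathcal{N}_{2,1,H}$. Note also that $f_H$ maps $[0,1]$ into $[0,1]$, since it interpolates $x^2$ at grid points with values in $[0,1]$.

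\textbf{Step 1: the sawtooth with two neurons.} On $[0,1]$ we have
\[
g_1(y)=2\sigma(y)-4\sigma\bigl(y-\tfrac12\bigr),
\]
which uses two ReLU neurons with the same input $y$. Hence each floor, of width $2$, can hold the pair $\sigma(y),\sigma(y-\tfrac12)$, and the next floor can take $y'=2\sigma(y)-4\sigma(y-\tfrac12)$ as its pre-activation input through the intra-layer matrix $\mathbf{G}^1_{h,h+1}$.

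\textbf{Step 2: the floors.} Floor $1$ receives $x$ from the input layer and computes $\sigma(x)$ and $\sigma(x-\tfrac12)$. For $h\ge2$, floor $h$ receives, via $\mathbf{G}^1_{h-1,h}$, the combination $2\tilde f^{(\cdot)}-4\tilde f^{(\cdot)}$ of floor $h-1$, which equals $g_{h-1}(x)$. It adds bias $0$ and $-\tfrac12$ to obtain $\sigma(g_{h-1}(x))$ and $\sigma(g_{h-1}(x)-\tfrac12)$. The biases enter through $b^{(j)}$, and the input weights $\mathbf{a}$ are zero for floors $h\ge2$. Consequently, floor $h$ encodes $g_h(x)=2\sigma(g_{h-1}(x))-4\sigma(g_{h-1}(x)-\tfrac12)$ as a linear combination of its two outputs.

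\textbf{Step 3: the output.} The output layer is affine in all hidden neurons:
\[
f_H(x)=\sigma(x)-\sum_{h=1}^{H}4^{-h}\Bigl(2\sigma(g_{h-1}(x))-4\sigma\bigl(g_{h-1}(x)-\tfrac12\bigr)\Bigr),\qquad g_0:=\mathrm{id}.
\]
Here $\sigma(x)=x$ on $[0,1]$ is supplied by the first neuron of floor $1$. This network has width $2$, one hidden layer and height $H$, as required.

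\textbf{Main obstacle.} The delicate point is matching the formal definition of 3D networks. We must check that the intra-layer sums $\sum_{p<j}\mathbf{G}(n(p),n(j))\tilde f^{(p)}$ can feed the two neurons of floor $h$ the same affine combination of floor $h-1$'s outputs, and that each neuron can still carry its own bias. We must also confirm that the identity term $x$ is available without an extra neuron. Both follow from the definition, since the pre-activation $g^{(j)}$ includes an individual bias and the neuron $\sigma(x)$ is already present.
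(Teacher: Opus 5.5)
Your proposal is correct and follows the same route as the paper: Yarotsky's identity $f_H(x)=x-\sum_{j=1}^{H} g_j(x)/2^{2j}$ with the interpolation bound $2^{-2(H+1)}$, realized by stacking the $H$ sawteeth as successive floors of a single hidden layer. The only difference is how a floor is wired. The paper uses one ReLU per floor via $g_s=-4\sigma(\tfrac12-g_{s-1})-2g_{s-1}+2$ (the coefficient $-\tfrac12$ printed in the paper is a typo for $-2$), which means feeding $g_{s-1}$ as a linear combination of the input and all earlier floors. Your two-ReLU floors $2\sigma(y)-4\sigma(y-\tfrac12)$ need only links between consecutive floors and avoid that bookkeeping.
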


\begin{proof}
    Let $g_s:[0,1]\to [0,1]$ be the sawtooth function with $2^{s-1}$ “sawteeth" on $[0,1]$. Based on Eq. \eqref{eq_sawtooth}, $g_s$ can be reformulated as
    \begin{equation}
    g_s(x)=\begin{cases} 2x-4\sigma\left(x-\frac{1}{2} \right),  & \text{if }s=1,\\
        -4\sigma\left(-g_{s-1}(x)+\frac{1}{2} \right)-\frac{1}{2}g_{s-1}(x)+2,  & \text{if }s>1,
    \end{cases}
\end{equation}
    which can be implemented by a  ReLU network $\mathcal{N}_{1,1,s}$, where $-g_{s-1}+\frac{1}{2}$ serves as the pre-activation in the $s$-th floor for $s>1$.

Let $f_s:[0,1]\to [0,1]$ be the piecewise linear interpolation of $y=x^2$ on $x=\frac{l}{2^s},l=0,1,\ldots,2^s$. Then for any $s\in\mathbb{N}^{+},x\in[0,1]$, 
\begin{equation}
    |f_s(x)-x^2|\leq 2^{-2(s+1)}.
\end{equation}
Note that for any $s\geq 2$, $f_{s-1}-f_{s}=g_s/2^{2s}$. Thus, we have
\begin{equation}
    f_{H}(x)=f_{0}(x)+\sum_{j=1}^{H}\left(f_{j}-f_{j-1} \right)=x-\sum_{j=1}^{H}\frac{g_j(x)}{2^{2j}},
\end{equation}
which can be implemented by a 3D ReLU network $\mathcal{N}_{2,1,H}$, as shown in \cref{Fig_SquareNet}.
\end{proof}

\vspace{-0.8cm}
\begin{figure}[H]
\centering  
\label{Fig_SquareNet}
\includegraphics[width=1\textwidth]{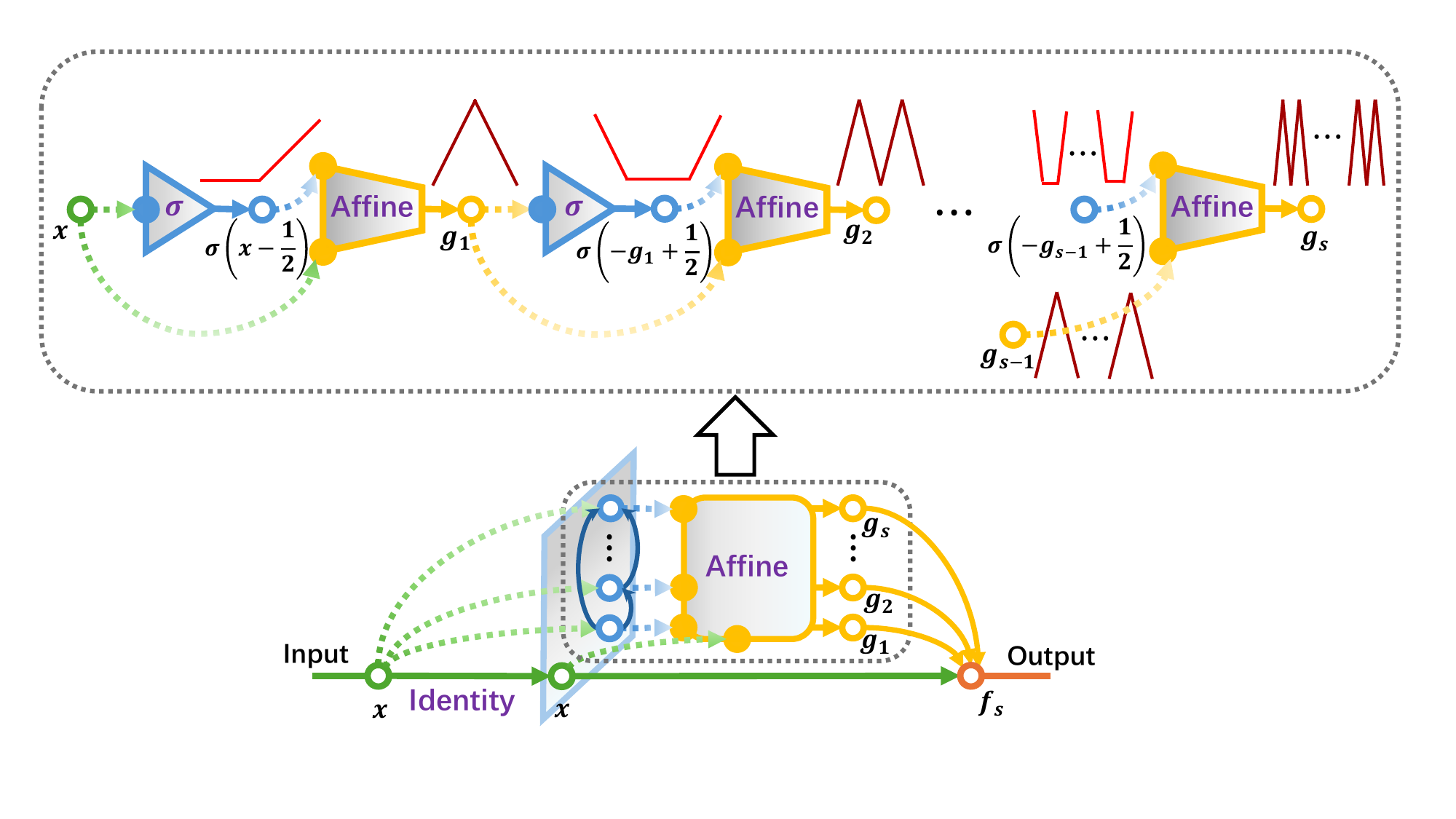}
\vspace{-1.5cm}
\caption{Implementing $f_H$ by 3D networks in \cref{prop_square}.}
\label{Fig_Sqr_Prod}
\end{figure}

% \begin{proposition}
%     For any $H>0$, there is an 3D ReLU NN $f:[0,1]\to [0,1]$ in $\mathcal{N}_{H,D,H}$ such that
% \begin{equation}
%     \left|f(x)-x^2\right|\leq 2^{-2(H+1)}.
% \end{equation}
% \end{proposition}

\begin{lemma}
\label{bi_prod}
    For any $H>0$, there is a 3D ReLU $\mathcal{N}_{6,1,H}$, whose function $\widehat{\times}:[0,1]^2\to [0,1]$ satisfies
\begin{equation}
    \sup_{x,y\in [0,1]}\left|\widehat{\times}(x,y)-xy\right|\leq 6\cdot2^{-2(H+1)}.
\end{equation}
\end{lemma}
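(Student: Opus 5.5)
The plan is the standard polarization identity: write
\begin{equation*}
xy=2\left(\left(\tfrac{x+y}{2}\right)^2-\left(\tfrac{x}{2}\right)^2-\left(\tfrac{y}{2}\right)^2\right),
\end{equation*}
and replace each square by the squaring network $f_H$ from \cref{prop_square}. All three arguments $\frac{x+y}{2}$, $\frac{x}{2}$ and $\frac{y}{2}$ lie in $[0,1]$ whenever $x,y\in[0,1]$, so \cref{prop_square} applies to each with error at most $2^{-2(H+1)}$. The candidate is therefore
\begin{equation*}
\widetilde{\times}(x,y)=2\left(f_H\!\left(\tfrac{x+y}{2}\right)-f_H\!\left(\tfrac{x}{2}\right)-f_H\!\left(\tfrac{y}{2}\right)\right).
\end{equation*}
The triangle inequality gives $|\widetilde{\times}(x,y)-xy|\le 2\cdot 3\cdot 2^{-2(H+1)}=6\cdot 2^{-2(H+1)}$, which is exactly the stated bound. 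Since the statement requires values in $[0,1]$, I will set $\widehat{\times}=\min\{\max\{\widetilde{\times},0\},1\}$. Clipping to $[0,1]$ can only decrease the distance to $xy\in[0,1]$, so the error bound is preserved.

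For the architecture, recall that each $f_H$ in \cref{prop_square} is realized in $\mathcal{N}_{2,1,H}$. That network has one hidden layer with $H$ floors, two neurons per floor: one carries the sawtooth chain $g_1,\dots,g_H$ via intra-layer links, and the other carries the linear/identity part needed to form $x-\sum_j g_j/2^{2j}$. I will place the three copies side by side in the same hidden layer, floor by floor. Floor $s$ then contains $3\times 2=6$ neurons, the height stays $H$, and the depth stays one hidden layer. The first-layer pre-activations are affine in $(x,y)$: $\frac{x+y}{2}$, $\frac{x}{2}$ and $\frac{y}{2}$ fed into the respective copies. The output layer is an affine combination of all hidden neurons with the coefficients of $f_H$ multiplied by $\pm 2$. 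This gives width $6$, depth $1$ (hidden layer), height $H$.

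The only delicate point is the final clipping to $[0,1]$. I expect this to be the main obstacle, because clipping would naively need an extra layer. I would try to absorb it as additional floors on top of the existing ones. Put $\sigma(\widetilde{\times})$ on floor $H+1$ via intra-layer links, since $\widetilde{\times}$ is an affine combination of lower-floor outputs. Then put $1-\sigma(1-\sigma(\widetilde{\times}))$ on floor $H+2$. This changes the height to $H+2$, which is harmless for the order of the bound.

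Alternatively, one can drop the clipping and argue directly that $\widetilde{\times}\in[0,1]$, using that $f_H$ is the piecewise-linear interpolant of $x^2$ at dyadic nodes, hence convex. If that direct argument fails near the boundary, I will fall back to the extra-floor clipping and note that the height becomes $H+\mathcal{O}(1)$.
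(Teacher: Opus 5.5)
The polarization identity, the triangle-inequality bound $6\cdot 2^{-2(H+1)}$, and placing three copies of the $\mathcal{N}_{2,1,H}$ squaring network side by side in one hidden layer are exactly the paper's construction. The gap is the range claim $\widehat{\times}([0,1]^2)\subseteq[0,1]$. The lemma asserts it for a network in $\mathcal{N}_{6,1,H}$ with the stated error, and it is what legitimizes the recursion $h_k=\widehat{\times}(x,h_{k-1})$ in \cref{approx_poly}.

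The route you commit to, clipping on extra floors, gives height $H+2$ (or $H+1$ via $\sigma(t)-\sigma(t-1)$), so it proves a weaker statement. Fitting it back into height $H$ would force you to use $f_{H-2}$ and lose a factor $16$ in the error. Your fallback is left open, and convexity alone does not settle it. With $f_H(0)=0$, convexity gives superadditivity and hence $\widetilde{\times}\ge 0$. But combined with the error estimate it only yields upper bounds of the form $xy+\mathcal{O}(2^{-2H})$, which exceed $1$ near $(1,1)$ --- exactly the boundary issue you anticipate.

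The missing step is structural. The breakpoints of $f_H(x/2)$, $f_H(y/2)$ and $f_H\bigl(\tfrac{x+y}{2}\bigr)$ are the lines $x=2j/2^H$, $y=2k/2^H$ and $x+y=2l/2^H$. These triangulate $[0,1]^2$ with vertices $(2j/2^H,2k/2^H)$. At each such vertex all three arguments are interpolation nodes of $f_H$, so there $\widetilde{\times}=2\bigl((j+k)^2-j^2-k^2\bigr)4^{-H}=xy$. Hence $\widetilde{\times}$ is the piecewise-linear interpolant of $xy$ on this triangulation, so its values lie in $[0,1]$; this is the paper's argument.

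If you prefer to stay with convexity, monotonicity finishes it. We have $\partial_x\widetilde{\times}=f_H'\bigl(\tfrac{x+y}{2}\bigr)-f_H'\bigl(\tfrac{x}{2}\bigr)\ge 0$ a.e. because $f_H'$ is nondecreasing, and likewise in $y$. Therefore $0=\widetilde{\times}(0,0)\le\widetilde{\times}(x,y)\le\widetilde{\times}(1,1)=2f_H(1)-4f_H(\tfrac12)=1$, using that $0,\tfrac12,1$ are nodes for $H\ge 1$. Either way no clipping is needed, and the network stays in $\mathcal{N}_{6,1,H}$ with the exact stated error.
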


\begin{proof}
    Since for any $x,y\in[0,1]$, we have
\begin{equation}
    xy=2\left(\frac{x+y}{2}\right)^2-2\left(\frac{x}{2}\right)^2-2\left(\frac{y}{2}\right)^2.
\end{equation}
Construct $f_{H}$ as in \cref{prop_square} and let
\begin{equation}
\label{eq_biProd}
    \widehat{\times}(x,y)=2f_H\left(\frac{x+y}{2}\right)-2f_H\left(\frac{x}{2}\right)-2f_H\left(\frac{y}{2}\right),
\end{equation}
then we have
\begin{equation}
\label{eq_BiProdErr}
\begin{split}
        \left|\widehat{\times}(x,y)-xy\right|\leq & 2\left|f_H\left(\frac{x+y}{2}\right)-\left(\frac{x+y}{2}\right)^2 \right|+2\left|f_H\left(\frac{x}{2}\right)-\left(\frac{x}{2}\right)^2 \right|+2\left|f_H\left(\frac{y}{2}\right)-\left(\frac{y}{2}\right)^2 \right|\\
        \leq & 6 \cdot2^{-2(H+1)}.
\end{split}
\end{equation}

Since we need to recursively use $\widehat{\times}(x,y)$, it remains to verify that $\widehat{\times}(x,y)$ also takes values in $[0,1]$ for all $x,y\in[0,1]$ to avoid unbounded growth. Observe that $\widehat{\times}(x,y)$ is linear in $[0,1]^2\setminus \left( \mathfrak{D}1\cup \mathfrak{D}2 \cup \mathfrak{D}3 \right)$, where 
\begin{displaymath}
    \mathfrak{D}1=\left\{ \left( \frac{2j}{2^{H}},y\right):j=0,1,\ldots,2^{H-1},y\in[0,1] \right\},
\end{displaymath}

\begin{displaymath}
    \mathfrak{D}2=\left\{\left( x, \frac{2k}{2^{H}}\right):x\in[0,1],k=0,1,\ldots,2^{H-1} \right\},
\end{displaymath}
and
\begin{displaymath}
    \mathfrak{D}3=\left\{ ( x,y)\in[0,1]^2:x+y=\frac{2l}{2^{H}},l=0,1,\ldots,2^{H-1} \right\}.
\end{displaymath}
Moreover, $\widehat{\times}(x,y)=xy$ on $\mathfrak{D}1\cap \mathfrak{D}2 \cap \mathfrak{D}3=\left\{\left( \frac{2j}{2^{H}},\frac{2k}{2^{H}}\right),j,k=0,1,\ldots,2^{H-1}\right\}$. Therefore, $\widehat{\times}(x,y)$ is the piece-ise linear interpolation of $xy$ on $\mathfrak{D}1\cap \mathfrak{D}2 \cap \mathfrak{D}3$. Since the values on all interpolation points are in $[0,1]$, their linear interpolation $\widehat{\times}(\cdot,\cdot)$ is also in $[0,1]$.
\end{proof}

\begin{lemma}
\label{approx_poly}
    For any $H,n\in\mathbb{N}^+$ and polynomial $p(x)=\sum_{k=0}^{n}a_kx^k$, there is a 3D ReLU NN $\mathcal{N}_{8,n-1,H}$, whose function $\Phi:[0,1]\to \mathbb{R}$ fulfills
\begin{equation}
    \sup_{x\in [0,1]}\left|\Phi(x)-p(x)\right|\leq \max_{1\leq k\leq n}|a_k| 3n^2 2^{-2(H+1)}.
\end{equation}
\end{lemma}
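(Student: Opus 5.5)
Following the classical approach of Yarotsky, I would build the monomials $x^k$ recursively with the approximate multiplication $\widehat{\times}$ from \cref{bi_prod}, while keeping a running partial sum of $p$. Set $\Phi_1(x)=x$ and $\Phi_k(x)=\widehat{\times}\bigl(\Phi_{k-1}(x),x\bigr)$ for $k=2,\ldots,n$. The output is $\Phi(x)=a_0+\sum_{k=1}^n a_k\Phi_k(x)$. The final part of the proof of \cref{bi_prod} shows that $\widehat{\times}$ maps $[0,1]^2$ into $[0,1]$, so every $\Phi_k$ takes values in $[0,1]$ and the recursion stays well defined.

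\textbf{Error propagation.} Let $e_k=\sup_{x\in[0,1]}|\Phi_k(x)-x^k|$ and $\varepsilon=6\cdot 2^{-2(H+1)}$. Then
\begin{equation*}
|\Phi_k(x)-x^k|\le \bigl|\widehat{\times}(\Phi_{k-1}(x),x)-\Phi_{k-1}(x)x\bigr|+x\,|\Phi_{k-1}(x)-x^{k-1}|\le \varepsilon+e_{k-1},
\end{equation*}
and since $e_1=0$ we get $e_k\le (k-1)\varepsilon$. Summing over $k$,
\begin{equation*}
|\Phi(x)-p(x)|\le \max_k|a_k|\,\varepsilon\sum_{k=1}^n(k-1)=\max_k|a_k|\cdot 3n(n-1)2^{-2(H+1)}\le \max_k|a_k|\,3n^2 2^{-2(H+1)},
\end{equation*}
which is the claimed bound.

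\textbf{Architecture count.} Each application of $\widehat{\times}$ is a single-hidden-layer 3D block of width $6$ and height $H$ (\cref{bi_prod}). Computing $\Phi_2,\ldots,\Phi_n$ takes $n-1$ such blocks stacked in depth. Alongside each block, two extra neurons per layer carry information forward:
\begin{itemize}
\item one carries $x$ forward through the identity $x=\sigma(x)$ for $x\ge0$;
\item one carries the partial sum $\sum_{j<k}a_j\Phi_j(x)$ forward, written as $\sigma(\cdot+C)-C$ with the shift $C=\sum|a_j|$ so the quantity stays nonnegative.
\end{itemize}
Each layer then has width at most $6+2=8$ and height $H$, giving a network in $\mathcal{N}_{8,n-1,H}$. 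The last product $\Phi_n$ and the constant $a_0$ are absorbed by the affine output map.

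\textbf{Main obstacle.} The delicate part is the bookkeeping that fits everything into exactly $n-1$ layers of width $8$. The output of the $\widehat{\times}$ block in layer $k$ is an affine combination of floor outputs. It must feed the next layer's pre-activations directly, namely the arguments $(\Phi_{k-1}+x)/2$, $\Phi_{k-1}/2$ and $x/2$ of $f_H$. It must also be folded into the accumulator neuron via $a_k\Phi_k$. I would check that these affine combinations can be merged into the next layer's weights $\mathbf{a}$ and biases $b$ of \cref{def_3DNet}, which linearity permits, so that no extra layers are needed. If the width count turns out tight, I would reuse the $x/2$ neuron to carry $x$.
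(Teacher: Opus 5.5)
Your proposal is correct and follows essentially the same route as the paper, which also sets $h_1=x$, $h_k=\widehat{\times}(x,h_{k-1}(x))$, obtains $|h_k-x^k|\le 6(k-1)2^{-2(H+1)}$ by the same one-step error recursion, and takes $\Phi=a_0+\sum_{k=1}^n a_k h_k$. Your width-$8$ bookkeeping (six neurons for $\widehat{\times}$ plus carriers for $x$ and the shifted running sum) is more explicit than the paper's, which only points to a figure for the architecture.
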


\begin{proof}
    We recursively define for $k\geq 1$
\begin{equation}
    h_k\left(x\right)=\begin{cases}
        x, & \text{if } k=1,\\
        \widehat{\times}\left(x,h_{k-1}(x)\right), & \text{if } k\geq 2.
    \end{cases}
\end{equation}
Then for $x\in [0,1]$ and $k\geq 2 $, we have
\begin{equation}
\begin{split}
    \left|h_k(x)-x^k\right|&\leq \left|\widehat{\times}\left(x,h_{k-1}(x)\right)- xh_{k-1}(x)\right|+
    \left|xh_{k-1}(x)-x^k\right|\\
    \nonumber&\leq 6\cdot2^{-2(H+1)}+
    \left|h_{k-1}(x)-x^{k-1}\right|\\
    \nonumber &\leq \cdots \leq 6(k-1)\cdot2^{-2(H+1)}.
\end{split}
\end{equation}
Let $\Phi(x)=\sum_{k=1}^{n}a_k h_k(x)+a_0$, then we have for any $x\in [0,1]$,
\begin{equation}
    \left|\Phi(x)-p(x)\right|\leq \sum_{k=1}^n \left|a_k\right| 6 (k-1)\cdot 2^{-2(H+1)} \leq \max_{k}|a_k| 3n^2 2^{-2(H+1)}.
\end{equation}
The network implementing $\Phi(x)$ is illustrated in \cref{Fig_UniVarPoly}, which is of width $8$, depth $n-1$, and height $H$.

\end{proof}

\begin{figure}[H] 
\centering 
\includegraphics[width=1\textwidth]{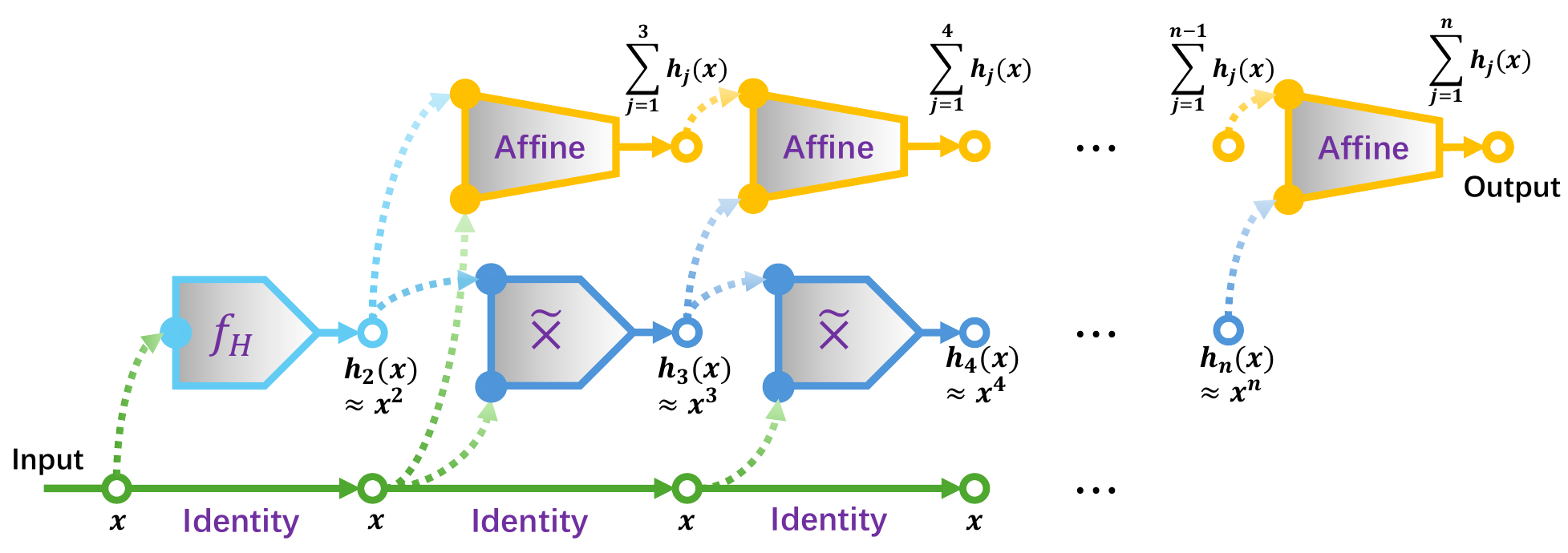} 
\caption{Illustration of the construction in \cref{approx_poly}} 
\label{Fig_UniVarPoly} 
\end{figure}

\begin{lemma}
\label{approx_smooth}
    Let $f\in C^{\infty}([0,1])$ satisfy $\left|f^{(n)}\right|\leq n!$ for all $n\in\mathbb{N}^+$, then there is a 3D ReLU NN $\mathcal{N}_{8,N,H}$, with $H=\lceil\frac{1}{2}\left((\log_2 6)N+3\log_2(N+2)+2\log_2 3\right)\rceil$, whose function $\Phi$ satisfies
\begin{equation}
    \sup_{x\in [0,1]}\left|\Phi(x)-f(x)\right|\leq 2^{-N}.
\end{equation}
\end{lemma}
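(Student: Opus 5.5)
The plan is to approximate $f$ by a truncated Taylor polynomial centred at the midpoint $x_0=\tfrac12$, and then realise that polynomial with \cref{approx_poly}. Centring at $\tfrac12$ is essential. The hypothesis $|f^{(n)}|\le n!$ only makes the Taylor coefficients $c_k=f^{(k)}(\tfrac12)/k!$ bounded by $1$. With this centre, the Lagrange remainder of order $n$ is at most
\begin{equation*}
\frac{\sup|f^{(n)}|}{n!}\,|x-\tfrac12|^{n}\le 2^{-n}.
\end{equation*}
Expanding at $0$ would only give a remainder bounded by $1$.

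To match the depth $N$ in the statement, I take $n=N+1$ in \cref{approx_poly}, since that lemma gives depth $n-1$. So I use
\begin{equation*}
p(x)=\sum_{k=0}^{N+1} c_k\,(x-\tfrac12)^k,
\end{equation*}
whose truncation error is at most $2^{-(N+2)}$. Truncating at degree $N$ instead costs only $2^{-(N+1)}$, and either choice leaves half of the $2^{-N}$ budget for the network error.

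\textbf{Step 1: bound the monomial coefficients.} \cref{approx_poly} applies to polynomials written in the monomial basis on $[0,1]$, so I rewrite $p(x)=\sum_j a_j x^j$ with
\begin{equation*}
a_j=\sum_{k\ge j} c_k\binom{k}{j}\left(-\tfrac12\right)^{k-j}.
\end{equation*}
Using $|c_k|\le 1$ and the hockey-stick identity,
\begin{equation*}
|a_j|\le\sum_{k=j}^{N+1}\binom{k}{j}=\binom{N+2}{j+1}\le 2^{N+2}.
\end{equation*}
A sharper bound, for instance about $(3/2)^{N}$ obtained by bounding $\sum_k\binom{k}{j}2^{-(k-j)}$ directly, is available if needed.

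\textbf{Step 2: bound the network error.} \cref{approx_poly} gives a network in $\mathcal{N}_{8,N,H}$ with error at most $\max_j|a_j|\,3(N+1)^2\,2^{-2(H+1)}$. The choice of $H$ in the statement gives
\begin{equation*}
2^{2H}\ge 6^{N}(N+2)^3\cdot 9 .
\end{equation*}
Substituting, the network error is at most
\begin{equation*}
\frac{2^{N+2}\cdot 3(N+1)^2}{4\cdot 9\cdot 6^N (N+2)^3}\le \frac{1}{3^{N+1}(N+2)},
\end{equation*}
which is comfortably below $2^{-(N+1)}$.

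\textbf{Step 3: combine.} Adding the Taylor error and the network error gives $\sup_{x\in[0,1]}|\Phi-f|\le 2^{-N}$.

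The main obstacle is Step 1, the coefficient blow-up from re-expanding around $0$: it must not exceed roughly $3^N$, which is what the $\log_2 6$ factor in $H$ affords. The bound $\binom{N+2}{j+1}\le 2^{N+2}$ sits safely within this. If the constants turned out tight, I would instead keep the shifted variable by feeding $|x-\tfrac12|$-type inputs, but I expect the direct estimate above to suffice.
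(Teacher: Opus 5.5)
Your proof is correct, but it builds the intermediate polynomial differently from the paper.

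The paper interpolates $f$ at Chebyshev nodes with degree $m=N+1$ and uses the nodal bound $\|f-P_m\|_\infty\le \|f^{(m+1)}\|_\infty/((m+1)!\,2^m)\le 2^{-m}$. It then asserts, without derivation, that the monomial coefficients of $P_m$ satisfy $|c_k|\le 2(m+1)3^m$. The $\log_2 6$ in $H$ is tuned exactly to this $3^m$, so the network error from \cref{approx_poly} is $2^{-m}$ and the two contributions sum to $2^{-N}$ with no slack.

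You use the Taylor polynomial at $\tfrac12$ instead. Its remainder $2^{-(N+2)}$ comes directly from the hypothesis, and you \emph{prove} the monomial-coefficient bound by re-expanding $(x-\tfrac12)^k$ and applying the hockey-stick identity. This is the real gain of your route. The change of basis is the delicate step: the shifted Chebyshev polynomials $T_k(2x-1)$ have monomial coefficients growing like $(3+2\sqrt2)^k$, so the paper's $3^m$ relies on cancellation it never justifies. In your route that step is a two-line computation.

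Your estimate also shows the stated $H$ has slack. Your network error is only $(N+1)^2/(3^{N+1}(N+2)^3)$. With the sharper bound $\sum_{j\ge1}|a_j|\le\sum_{k=1}^{N+1}(3/2)^k$, one could even take $H\approx\frac{\log_2 3}{2}N+\mathcal{O}(\log N)$.

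The Chebyshev route has its own advantages. Its nodal factor on $[0,1]$ is actually $2^{-2m-1}$, which is smaller. It is also the natural choice for functions analytic only in a thin Bernstein ellipse, where the Taylor series at the midpoint need not converge on all of $[0,1]$.

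One small correction: the hypothesis covers only $n\ge 1$, so $|c_k|\le 1$ holds only for $k\ge 1$. The constant $c_0=f(\tfrac12)$, and hence $a_0$, is unconstrained, so your bound $|a_j|\le 2^{N+2}$ is not justified at $j=0$. This is harmless. The error bound in \cref{approx_poly} involves only $a_1,\dots,a_n$, and each such $a_j$ depends only on $c_k$ with $k\ge j\ge 1$.
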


\begin{proof}
    By the Lagrange interpolation theorem, for any $m\in\mathbb{N}^+$ there exists a Chebyshev series $P_{m}$ of degree $m$ such that
\begin{equation}
\label{eq_ChebyInt}
    \sup_{x\in [0,1]}\left|f(x)-P_m(x)\right|\leq \frac{\left\|f^{(m+1)}\right\|_{L^{\infty}([0,1])}}{(m+1)!2^m},
\end{equation}
where $P_{m}$ can be expanded to $P_m=\sum_{k=0}^m c_k x^k$ with $|c_{k}|\leq 2(m+1)3^m$. Using \cref{approx_poly}, let $\Phi(x):[0,1]\to \mathbb{R}$ be implemented by a 3D ReLU network $\mathcal{N}_{8,m-1,H}$, which satisfies 
\begin{equation}
\label{eq_NNapproxCheby}
    \sup_{x\in [0,1]}\left|\Phi(x)-P_m(x)\right|\leq 2(m+1)3^{m+1} m^2 2^{-2(H+1)}\leq (m+1)^3 3^{m+1}2^{-2H-1}
\end{equation}
Setting $H=\left\lceil\frac{1}{2}\left((\log_2 6)m+3\log_2(m+1)+\log_2 3-1\right)\right\rceil$ and $m=N+1$, from Eqs. \eqref{eq_ChebyInt} and \eqref{eq_NNapproxCheby}, we obtain
\begin{equation}
    \sup_{x\in [0,1]}\left|\Phi(x)-f(x)\right|\leq \sup_{x\in [0,1]}\left|\Phi(x)-P_m(x)\right|+ \sup_{x\in [0,1]}\left|f(x)-P_m(x)\right|\leq 2^{-N}.
\end{equation}
    
\end{proof}

\subsection{Approximation of Analytic Functions on the Cube}
\label{sec_analytic_cube}

\begin{theorem}[Main Result 1]
\label{real_analytic}
    For any $0<\delta<1$, let $f$ be a real analytic function with absolutely convergent power series expansion $f=\sum_{\boldsymbol{j}\geq 0} a_{\boldsymbol{j}} \boldsymbol{x}^{\boldsymbol{j}}$ over $[0,1-\delta]^d$, where, without loss of generality, we assume that $\sum_{\boldsymbol{j}\geq 0}|a_{\boldsymbol{j}}|\leq 1$. Then there is a 3D network $\mathcal{N}_{W,N-1,H}$ with $W=\left\lceil d+1+6\left(\frac{e(N+d)}{d}\right)^{d}\right\rceil$ and $$H=\left\lceil\frac{1}{2}\left(\log_{2}N+N\log_2(1-\delta)+d\log_2(N+d)-d\log_2\frac{d}{e}-\log{_2 6}-2\right)\right\rceil,$$ whose function $\Phi:[0,1-\delta]^d\to\mathbb{R}$ satisfies
\begin{equation}
    \sup_{\boldsymbol{x}\in [0,1-\delta]^d}\left|f(\boldsymbol{x})-\Phi(\boldsymbol{x})\right|\leq 2(1-\delta)^N.
\end{equation}
\end{theorem}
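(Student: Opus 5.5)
We follow the standard truncation-plus-product strategy, adapting the multiplication gadget of \cref{bi_prod} to the multivariate setting. First truncate the power series to the total-degree set $\Lambda_N=\{\boldsymbol{j}\in\mathbb{N}^d:|\boldsymbol{j}|\leq N-1\}$, writing $P_N=\sum_{\boldsymbol{j}\in\Lambda_N}a_{\boldsymbol{j}}\boldsymbol{x}^{\boldsymbol{j}}$. Since $0\leq x_k\leq 1-\delta$, every omitted monomial satisfies $|\boldsymbol{x}^{\boldsymbol{j}}|\leq(1-\delta)^{|\boldsymbol{j}|}\leq(1-\delta)^N$, and with $\sum|a_{\boldsymbol{j}}|\leq 1$ this gives
\begin{equation*}
\sup_{\boldsymbol{x}\in[0,1-\delta]^d}|f(\boldsymbol{x})-P_N(\boldsymbol{x})|\leq(1-\delta)^N .
\end{equation*}
The cardinality $|\Lambda_N|=\binom{N-1+d}{d}\leq\left(\frac{e(N+d)}{d}\right)^d$ is what produces the width term $6\left(\frac{e(N+d)}{d}\right)^d$: one $\widehat{\times}$ gadget of width $6$ per active monomial, plus $d+1$ channels carrying the inputs $x_1,\dots,x_d$ and the running partial sum forward through the layers.

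Next, build all monomials in $\Lambda_N$ layer by layer, as in the recursion of \cref{approx_poly}. Order the monomials so that each $\boldsymbol{x}^{\boldsymbol{j}}$ with $|\boldsymbol{j}|=m\geq 2$ is obtained as $x_k\cdot\boldsymbol{x}^{\boldsymbol{j}-\boldsymbol{e}_k}$, where $k$ is, for instance, the first nonzero index. Then layer $m$ computes $h_{\boldsymbol{j}}=\widehat{\times}(x_k,h_{\boldsymbol{j}-\boldsymbol{e}_k})$ for all $\boldsymbol{j}$ of degree $m$, with each gadget of height $H$ from \cref{bi_prod}. By the range statement of \cref{bi_prod}, all $h_{\boldsymbol{j}}$ stay in $[0,1]$, so the recursion is legitimate. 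The same telescoping as in \cref{approx_poly} gives
\begin{equation*}
|h_{\boldsymbol{j}}-\boldsymbol{x}^{\boldsymbol{j}}|\leq 6(|\boldsymbol{j}|-1)2^{-2(H+1)}\leq 6N\,2^{-2(H+1)},
\end{equation*}
since multiplication by $x_k\leq 1$ does not amplify the earlier error. Monomials already completed are accumulated into the sum channel $\sum a_{\boldsymbol{j}}h_{\boldsymbol{j}}$ and need not be carried further. The degree reaches $N-1$ after about $N-1$ layers, matching the depth $N-1$.

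Finally, set $\Phi=a_{\boldsymbol{0}}+\sum_{\boldsymbol{j}\in\Lambda_N\setminus\{\boldsymbol{0}\}}a_{\boldsymbol{j}}h_{\boldsymbol{j}}$ and bound the network error by
\begin{equation*}
\sum|a_{\boldsymbol{j}}|\,|h_{\boldsymbol{j}}-\boldsymbol{x}^{\boldsymbol{j}}|\leq 6N\,2^{-2(H+1)} .
\end{equation*}
Requiring this to be at most $(1-\delta)^N$ determines $H$ through $2H\gtrsim\log_2 N+\log_2 6-N\log_2(1-\delta)$. The stated formula for $H$ instead contains $d\log_2(N+d)-d\log_2\frac{d}{e}$, i.e.\ $\log_2|\Lambda_N|$. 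This suggests the authors bound the error by a crude uniform count, $|\Lambda_N|\cdot 6N\,2^{-2(H+1)}$, rather than using $\sum|a_{\boldsymbol{j}}|\leq 1$. I would reproduce that accounting, or the sharper one, and adjust constants so that both error terms are at most $(1-\delta)^N$, giving the total $2(1-\delta)^N$. The sign of the $N\log_2(1-\delta)$ term in the stated $H$ looks off, so I would double-check it.

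The main obstacle is the bookkeeping for the network layout rather than the analysis. Each layer must simultaneously host the gadgets for all degree-$m$ monomials within width $W$, with every input $x_k$ and the partial sum passed forward through ReLU via the identity $t=\sigma(t)$ for $t\geq 0$, or a pair $\sigma(t)-\sigma(-t)$ for the sum channel, which may need a shifted nonnegative encoding. The $\widehat{\times}$ gadgets of a layer must also share height $H$ in parallel. I would first verify that a per-layer count of $6\binom{m+d-1}{d-1}\leq 6|\Lambda_N|$ gadgets plus the $d+1$ carry channels fits in $W$, and that the sum channel can absorb the $a_{\boldsymbol{j}}$-weighted outputs at the final floor of each layer.
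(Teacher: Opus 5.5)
Your proposal is correct and follows the paper's route. The paper truncates the series and applies its multivariate polynomial lemma (\cref{poly_multivar}), which uses the same recursion $h_{\boldsymbol{j}+\boldsymbol{e}_k}=\widehat{\times}(x_k,h_{\boldsymbol{j}})$ and bounds the network error by the crude count $\max|a_{\boldsymbol{j}}|\cdot 6N2^{-2(H+1)}\bigl(\frac{e(N+d)}{d}\bigr)^d$, exactly as you guessed, so your $\ell^1$ accounting is a valid sharpening that permits a smaller $H$. Your suspicion about the stated $H$ is also right: solving $6N2^{-2(H+1)}\bigl(\frac{e(N+d)}{d}\bigr)^d\leq(1-\delta)^N$ puts $-N\log_2(1-\delta)$ and $+\log_2 6$ inside the bracket, so both signs in the statement are flipped.
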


\begin{theorem}[Main Result 2]
\label{holomorphic_ellipse}
    Assume that for some $\rho>2^{\sqrt{d}}$, $f:[0,1]^d\to\mathbb{R}$ is bounded and can be analytically continued to the complex ellipse $E_{d,\rho}$ defined in Eq. \eqref{eq_ellipse}. Then there exists a 3D network $\mathcal{N}_{W,N-1,H}$ with $W=\left\lceil d+1+6\left(\frac{e(N+d)}{d}\right)^{d}\right\rceil$ and $H=\left\lceil\frac{1}{2}\left(d\log_2\left((N+1)\left(\frac{e(N+d)}{d}\right)\right)+\log_2 6N +\frac{N}{\sqrt{d}}\log_2\rho-2\right)\right\rceil$, whose function $\Phi:[0,1]^d\to\mathbb{R}$ satisfies
\begin{equation}
    \sup_{\boldsymbol{x}\in[0,1]^d}\left|f(\boldsymbol{x})-\Phi(\boldsymbol{x})\right|\leq C \rho^{-N/\sqrt{d}},
\end{equation}
for some constant $C>0$.
\end{theorem}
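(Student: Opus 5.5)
We prove \cref{holomorphic_ellipse} in two stages: first replace $f$ by a multivariate Chebyshev polynomial truncated to total degree $N$, then realize that polynomial by a 3D network. The network stage follows the same pattern as the proof of \cref{real_analytic}.

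\textbf{Step 1: polynomial approximation.} We use the multivariate Chebyshev approximation theory of \cite{Trefethen2016Multivariate}, as applied in \cite{Beknazaryan2021Analytic}. After an affine change of variables $[0,1]^d\to[-1,1]^d$, a bounded $f$ that is analytic in $E_{d,\rho}$ has Chebyshev coefficients satisfying
\[
|c_{\boldsymbol{k}}|\le C_0\,\rho^{-\|\boldsymbol{k}\|_2}.
\]
Truncating to the total-degree set $\{\|\boldsymbol{k}\|_1\le N\}$, the discarded indices satisfy $\|\boldsymbol{k}\|_2\ge \|\boldsymbol{k}\|_1/\sqrt d> N/\sqrt d$. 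Summing the geometric tail, which converges because $\rho>2^{\sqrt d}>1$, gives
\[
\sup_{[0,1]^d}|f-P_N|\le C_1\rho^{-N/\sqrt d}.
\]

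Next we expand $P_N$ in monomials $\boldsymbol{x}^{\boldsymbol{j}}$ with $|\boldsymbol{j}|\le N$, back on $[0,1]^d$. The monomial coefficients are bounded using the bounds on Chebyshev-to-monomial conversion together with $|c_{\boldsymbol{k}}|\le C_0\rho^{-\|\boldsymbol{k}\|_2}$. This step produces the factors $(N+1)^d$ and $6N$ that appear in $H$. The number of multi-indices with $|\boldsymbol{j}|\le N$ is
\[
\binom{N+d}{d}\le\left(\frac{e(N+d)}{d}\right)^d,
\]
which explains the form of $W$.

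\textbf{Step 2: network realization.} We build all monomials of total degree at most $N$ in depth $N-1$. Each layer keeps the $d$ input coordinates alive via $x=\sigma(x)$, keeps a running accumulator, and contains one copy of $\widehat{\times}$ from \cref{bi_prod} for each monomial. Each $\widehat{\times}$ uses 6 neurons of width and height $H$. A monomial of degree $m$ is obtained from one of degree $m-1$ by a single extra multiplication by some $x_i$. At layer $m$ we therefore compute the degree-$m$ monomials and add $a_{\boldsymbol{j}}\,\widehat{\boldsymbol{x}^{\boldsymbol{j}}}$ into the accumulator. This gives width $d+1+6\binom{N+d}{d}$, as in the statement.

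The error of each monomial follows the induction of \cref{approx_poly}. Because $\widehat{\times}$ maps into $[0,1]$ and $|x_i|\le1$, the errors add linearly:
\[
|\widehat{\boldsymbol{x}^{\boldsymbol{j}}}-\boldsymbol{x}^{\boldsymbol{j}}|\le 6N\cdot 2^{-2(H+1)}.
\]
Summing over all monomials weighted by the coefficient bound, the total network error is at most
\[
(\text{coeff bound})\cdot\binom{N+d}{d}\cdot 6N\cdot 2^{-2(H+1)}.
\]
The stated choice of $H$ makes this at most $\rho^{-N/\sqrt d}$; the $\tfrac N{\sqrt d}\log_2\rho$ term in $H$ is there to absorb this. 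Adding the truncation error from Step 1 gives $C\rho^{-N/\sqrt d}$.

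\textbf{Main obstacle.} The delicate point is the coefficient growth in the Chebyshev-to-monomial conversion, which can reach $3^N$ per degree. We handle it by keeping that growth inside the $\ell^1$ coefficient bound and letting $H$ grow linearly in $N$ to compensate. If the resulting constant does not match the exact $H$ written in the statement, the fallback is to realize $T_k$ directly by the recurrence $T_{k+1}=2xT_k-T_{k-1}$ using $\widehat{\times}$. The error then stays controlled because $|T_k|\le1$, after rescaling to inputs in $[0,1]$. This recurrence is also where the condition $\rho>2^{\sqrt d}$ is used: it ensures that the geometric decay $\rho^{-\|\boldsymbol{k}\|_2}$ beats the growth $2^{\|\boldsymbol{k}\|_1}$ that the recurrence introduces.
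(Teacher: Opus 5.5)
\textbf{There is a gap.} Your outline follows the paper's route: truncate a Chebyshev/Bernstein approximation at total degree $N$, expand it in monomials, and apply \cref{poly_multivar} with the stated $H$. But the one ingredient the stated $H$ depends on is missing.

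The stated $H$ gives $2^{-2(H+1)}\le\bigl[(N+1)^d\left(\frac{e(N+d)}{d}\right)^d 6N\,\rho^{N/\sqrt d}\bigr]^{-1}$. So the error bound of \cref{poly_multivar} becomes $\max_{|\boldsymbol j|\le N}|a_{\boldsymbol j}|\,(N+1)^{-d}\rho^{-N/\sqrt d}$.
\begin{itemize}
\item The factors $6N$ and $\left(\frac{e(N+d)}{d}\right)^d$ in $H$ are used up by \cref{poly_multivar}. They do not come from the coefficient conversion, as you suggest.
\item The $\frac{N}{\sqrt d}\log_2\rho$ term is used up entirely by the target rate.
\item What remains is exactly the requirement $\max_{|\boldsymbol j|\le N}|a_{\boldsymbol j}|\le C(N+1)^d$, a bound on the monomial coefficients that is \emph{polynomial} in $N$.
\end{itemize}
You never establish this bound. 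The paper does not derive it either. It imports it, together with the truncation error $C'\rho^{-N/\sqrt d}$, from the multivariate Bernstein theorem in the form of Lemma~3.2 of \cite{Beknazaryan2021Analytic}. That citation is where $\rho>2^{\sqrt d}$ enters.

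Your treatment of the obstacle does not give the statement as written. If the conversion produces a factor like $3^N$, then letting $H$ grow linearly means adding a term $\frac N2\log_2 3$ that is not in the stated $H$. Otherwise the error is $3^N\rho^{-N/\sqrt d}$, not $C\rho^{-N/\sqrt d}$. Chebyshev decay can only absorb conversion growth $\gamma^{\|\boldsymbol k\|_1}$ via $\sum_{\boldsymbol k}\rho^{-\|\boldsymbol k\|_2}\gamma^{\|\boldsymbol k\|_1}<\infty$. Since $\|\boldsymbol k\|_2\ge\|\boldsymbol k\|_1/\sqrt d$, this needs $\rho>\gamma^{\sqrt d}$. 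With your own $\gamma=3$, the hypothesis $\rho>2^{\sqrt d}$ is not enough. (The $\ell^1$ coefficient growth of $T_k(2x-1)$ is in fact $(3+2\sqrt2)^k$.)

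A smaller point concerns your tail sum. It ignores that $\binom{m+d-1}{d-1}$ indices have $\|\boldsymbol k\|_1=m$, so it yields $N^{d-1}\rho^{-N/\sqrt d}$ rather than $\rho^{-N/\sqrt d}$. Fixing this needs the sharper cited statement, or analyticity on a slightly larger region.

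\textbf{The fallback.} It is not yet a proof, and its account of $\rho>2^{\sqrt d}$ contradicts itself. Run $T_{k+1}=2xT_k-T_{k-1}$ with an approximate product. The error then satisfies $e_{k+1}=2xe_k-e_{k-1}+\eta_k$, whose homogeneous solutions are Chebyshev-type polynomials bounded by $k+1$ on $[-1,1]$. So the error grows only polynomially and no factor $2^{\|\boldsymbol k\|_1}$ appears. Moreover, $\sum_{\boldsymbol k}|c_{\boldsymbol k}|<\infty$ needs only $\rho>1$. If you instead use the crude bound $|e_{k+1}|\le 2|e_k|+|e_{k-1}|+\epsilon$, you get $(1+\sqrt2)^k$ growth, which the stated $H$ again cannot absorb.

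Done carefully, this recurrence is a genuinely different route. It is arguably more robust, since it avoids monomial coefficients altogether. But it uses a different network from \cref{poly_multivar}. You need signed products through $\widetilde{\prod}_2$ (height $H+1$) with clipping. You also need to carry two previous degree levels. The width, depth, height and error bookkeeping would all have to be re-derived rather than taken from the statement.
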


Note that the exponential approximation rate of analytic functions on the subinterval $[1-\delta]^d$ is also studied in \cite{wang2018exponential,Beknazaryan2021Analytic}. In \cite{wang2018exponential}, the authors constructed an excessively deep ReLU NN with with fixed width and depth $\mathcal{O}(N^{2d})$ to achieve an error $(1-\delta)^N$. Later, \cite{Beknazaryan2021Analytic} improved the result using a network of width $\mathcal{O}\left(N^{d+2}\right)$, depth $\mathcal{O}\left(N^2\right)$, and height $1$. However, in our \cref{real_analytic}, the width, depth, and height become $\mathcal{O}\left(N^{d-1}\right)$, $\mathcal{O}(N)$, and $\mathcal{O}(N)$, respectively. Although height increases, the network's overall parametric complexity is greatly reduced. A similar improvement is also achieved in the case that the target function $f$ can be analytically continued to the ellipse $E_{d,\rho}$ in $\mathbb{C}^d$ (\cref{holomorphic_ellipse}). Under such a holomorphic condition, the partial sum of a Chebyshev series has a geometric convergence rate to $f$ on $[0,1]^d$ due to the multivariate Bernstein theorem \cite{Trefethen2016Multivariate}, which implies an efficient approximation of $f$ by 3D ReLU NNs. Although proving our two main theorems is conceptually straightforward based on approximating partial sums of power or Chebyshev series by networks, our result surpasses the previous work to achieve the best known approximation rate for the target function classes.

To give the proof of \cref{real_analytic} and \cref{holomorphic_ellipse}, we first extend \cref{approx_poly} to the multivariate case.
\begin{lemma}
    For any $n,H\in\mathbb{N}^+$ and polynomial $P(x)=\sum_{j_1+\ldots+j_d\leq n}a_{\boldsymbol{j}}\boldsymbol{x}^{\boldsymbol{j}}$, there is a 3D ReLU NN $\mathcal{N}_{W,n-1,H}$ with $W=\left\lceil d+1+6\left(\frac{e(n+d)}{d}\right)^{d}\right\rceil$, whose function $\Phi:[0,1]^d\to\mathbb{R}$ fulfills 
\begin{equation}
    \sup_{\boldsymbol{x}\in [0,1]^d}\left|P(\boldsymbol{x})-\Phi(\boldsymbol{x})\right|\leq\max_{j_1+\ldots+j_d\leq n}\left|a_{\boldsymbol{j}}\right|\cdot 6n 2^{-2(H+1)}\left(\frac{e(n+d)}{d}\right)^d.
\end{equation}
\label{poly_multivar}
\end{lemma}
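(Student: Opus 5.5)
We build every monomial $\boldsymbol{x}^{\boldsymbol{j}}$ with $|\boldsymbol{j}|\le n$ by multiplying in one coordinate per layer, so that depth $n-1$ suffices. Then we take the linear combination of the monomials in the output layer.

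\textbf{Organizing the monomials.} Any $\boldsymbol{x}^{\boldsymbol{j}}$ with $|\boldsymbol{j}|=k\ge 1$ can be written as $x_{i}\cdot \boldsymbol{x}^{\boldsymbol{j}-\boldsymbol{e}_i}$, where $i$ is, say, the largest index with $j_i>0$. We define approximants recursively by $h_{\boldsymbol{e}_i}=x_i$ and
\begin{equation*}
h_{\boldsymbol{j}}=\widehat{\times}\left(x_i,h_{\boldsymbol{j}-\boldsymbol{e}_i}\right),
\end{equation*}
using the product network of \cref{bi_prod}. By \cref{bi_prod}, each $h_{\boldsymbol{j}}$ takes values in $[0,1]$, so the recursion is well defined.

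\textbf{Error per monomial.} This is the same telescoping argument as in \cref{approx_poly}, using $|x_i|\le 1$:
\begin{equation*}
|h_{\boldsymbol{j}}-\boldsymbol{x}^{\boldsymbol{j}}|\le 6\cdot 2^{-2(H+1)}+|h_{\boldsymbol{j}-\boldsymbol{e}_i}-\boldsymbol{x}^{\boldsymbol{j}-\boldsymbol{e}_i}|,
\end{equation*}
which gives $|h_{\boldsymbol{j}}-\boldsymbol{x}^{\boldsymbol{j}}|\le 6(|\boldsymbol{j}|-1)2^{-2(H+1)}\le 6n\,2^{-2(H+1)}$.

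\textbf{Total error.} Set $\Phi=a_{\boldsymbol{0}}+\sum_{1\le|\boldsymbol{j}|\le n}a_{\boldsymbol{j}}h_{\boldsymbol{j}}$. Its error is at most $\max|a_{\boldsymbol{j}}|\cdot 6n\,2^{-2(H+1)}$ times the number of multi-indices with $|\boldsymbol{j}|\le n$. That number is $\binom{n+d}{d}$, and the standard estimate $\binom{n+d}{d}\le\left(\frac{e(n+d)}{d}\right)^d$ yields exactly the claimed bound.

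\textbf{Network layout.} In layer $k$, for $k=1,\dots,n-1$, we compute all $h_{\boldsymbol{j}}$ with $|\boldsymbol{j}|=k+1$ from the degree-$k$ ones produced in layer $k-1$; the degree-one quantities are just the inputs. Each product $\widehat{\times}$ occupies one layer, width $6$ and height $H$, by \cref{bi_prod}. Since all products in the same layer sit side by side in parallel, the height stays $H$.

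The remaining bookkeeping concerns what must be carried forward through the layers:
\begin{itemize}
\item the $d$ inputs $x_1,\dots,x_d$, which are nonnegative and so pass through ReLU as identities;
\item the already computed lower-degree monomials, which the final linear combination still needs; these are also nonnegative and pass through unchanged;
\item one accumulator neuron for the partial sum $\sum a_{\boldsymbol{j}}h_{\boldsymbol{j}}$.
\end{itemize}
The accumulator can be negative, so we store it shifted as $\sigma(\cdot+C)$ for a large enough constant $C$, or alternatively as a pair of neurons. This accounts for the $d+1$ in the width.

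\textbf{Main obstacle.} The delicate step is the width count, i.e.\ showing the parallel products plus carried values fit into $d+1+6\binom{n+d}{d}$. The cleanest route is to absorb monomials into the accumulator as soon as they have been used to build the next degree, so that only the current degree level needs to be stored. The number of products in any single layer is at most $\binom{n+d}{d}$, costing $6$ neurons each. Adding the $d$ inputs and one accumulator, the width is at most $d+1+6\left(\frac{e(n+d)}{d}\right)^d$.
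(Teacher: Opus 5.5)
Your proposal is correct and follows the paper's proof essentially step for step: the same recursion $h_{\boldsymbol{j}}=\widehat{\times}(x_i,h_{\boldsymbol{j}-\boldsymbol{e}_i})$, the same telescoping bound $6n\,2^{-2(H+1)}$ per monomial, the same count $\binom{n+d}{d}\le\left(\frac{e(n+d)}{d}\right)^d$, and the same one-degree-per-layer layout with $n-1$ hidden layers. Two of your details are left implicit in the paper, which never says how the lower-degree monomials reach the output: the canonical choice of $i$ that makes the recursion well defined, and the shifted accumulator neuron $\sigma(\cdot+C)$. Both fit within the stated width $d+1+6\binom{n+d}{d}$.
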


\begin{proof}
    Similar to \cref{approx_poly}, we utilize the product function $\widehat{\times}:[0,1]^2\to [0,1]$ given by a 3D ReLU NN $\mathcal{N}_{6,1,H}$ to approximate each monomial in $P(\boldsymbol{x})$. We set 
\begin{equation}
    h_{0,\ldots,0}=1,\quad h_{1,0,\ldots,0}=x_1,\quad\ldots\quad,h_{0,\ldots,0,1}=x_d,
\end{equation}
where $h_{j_1,\ldots,j_d}:[0,1]^d\to[0,1]$ has $d$ indices, and $j_1+\ldots+j_d\leq n$. Then, similar to \cref{approx_poly}, given $h_{j_1,\ldots,j_k,\ldots,j_d}$, we recursively define $h_{j_1,\ldots,j_k+1,\ldots,j_d}$ by $h_{j_1,\ldots,j_k+1,\ldots,j_d}=\widehat{\times}\left(x_{k},h_{j_1,\ldots,j_k,\ldots,j_d}\right)$. Hence, for all $0\leq \boldsymbol{j}$, $h_{\boldsymbol{j}}$ approximate the monomial $\boldsymbol{x}^{\boldsymbol{j}}$ with an error no more than $6n\cdot 2^{-2(H+1)}$. Note that the number of monomials of degree $k$ is at most $\binom{n+d-1}{d-1}$. Then let $\Phi(x)=\sum_{j_1+\ldots+j_d\leq n}a_{\boldsymbol{j}}h_{\boldsymbol{j}}(\boldsymbol{x})$, and we obtain
\begin{equation}
\begin{split}
    \sup_{\boldsymbol{x}\in [0,1]^d}\left|P(\boldsymbol{x})-\Phi(\boldsymbol{x})\right|&\leq 6\max_{j_1+\ldots+j_d\leq n}\left|a_{\boldsymbol{j}}\right|n 2^{-2(H+1)}\sum_{k=0}^{n}\binom{k+d-1}{d-1}\\
    &= 6\max_{j_1+\ldots+j_d\leq n}\left|a_{\boldsymbol{j}}\right|n 2^{-2(H+1)}\binom{n+d}{d}\\
    &\leq 6\max_{j_1+\ldots+j_d\leq n}\left|a_{\boldsymbol{j}}\right|n 2^{-2(H+1)}\left(\frac{e(n+d)}{d}\right)^d,
\end{split}
\end{equation}
where the third line is due to the Stirling formula.

Finally, we implement $\Phi$ by a 3D ReLU network. In the $k$-th layer, we output all $h_{\boldsymbol{j}}(\boldsymbol{x})$ for $j_1+\ldots+j_d=k+1$, each given from some $h_{\boldsymbol{i}}(\boldsymbol{x})$ with $i_1+\ldots+i_d=k$ and some $x_i$ via the operation of $\hat{\times}(\cdot,\cdot)$. Hence, the $k$-th layer outputs at most $\binom{n+d}{d}$ $h_{\boldsymbol{j}}$. The whole network is of width
\begin{equation}
    d+1+6\binom{n+d}{d}\leq d+1+6 \left(\frac{e(n+d)}{d}\right)^{d},
\end{equation}
depth $n$, and height $H$.
\end{proof}

\begin{proof}[Proof of \cref{real_analytic}]
    Following the same steps as in \cite{wang2018exponential}, we truncate the power series of $f$ to the polynomial $P(\boldsymbol{x})=\sum_{j_1+\ldots+j_d\leq n}a_{\boldsymbol{j}}\boldsymbol{x}^{\boldsymbol{j}}$ of degree no more than $n$, such that
\begin{equation}
    \sup_{\boldsymbol{x}\in[0,1-\delta]^d}\left| f(\boldsymbol{x})-P(\boldsymbol{x}) \right|\leq (1-\delta)^n.
\end{equation}
    Then applying \cref{poly_multivar} to $P(x)$ with $n=N$ and casting $$H=\left\lceil\frac{1}{2}\left(\log_{2}N+N\log_2(1-\delta)+d\log_2(N+d)-d\log_2\frac{d}{e}-\log{_2 6}-2\right)\right\rceil,$$ 
    we immediately finish the proof.
\end{proof}

\begin{proof}[Proof of \cref{holomorphic_ellipse}]
    By the multivariate Bernstein’s theorem (Lemma 3.2, \cite{Beknazaryan2021Analytic}), for any $n>0$, there is a polynomial $P(x)=\sum_{j_1+\ldots+j_d\leq n}a_{\boldsymbol{j}}\boldsymbol{x}^{\boldsymbol{j}},$ such that
\begin{equation}
\label{eq_BernsteinThm}
    \sup_{\boldsymbol{x}\in[0,1]^d}\left| f(\boldsymbol{x})-P(\boldsymbol{x}) \right|\leq C^{\prime}\rho^{-n/\sqrt{d}}
\end{equation}
and
\begin{equation}
\label{eq_BernsteinCoeff}
    \sup_{j_1+\ldots+j_d\leq n}|a_{\boldsymbol{j}}|\leq C(n+1)^d,
\end{equation}
where $C^{\prime}$ is a constant dependent only on $\rho$, $d$ and the bound of $f$.
Then the result follows from \cref{poly_multivar}, Eq. \eqref{eq_BernsteinThm} and Eq. \eqref{eq_BernsteinCoeff} with $n=N$ and 
$$H=\left\lceil\frac{1}{2}\left(d\log_2\left((N+1)\left(\frac{e(N+d)}{d}\right)\right)+\log_2 6N+\frac{N}{\sqrt{d}}\log_2\rho-2\right)\right\rceil.$$

\end{proof}

\subsection{Approximation of Analytic Functions on $L^2(\mathbb{R}^d,\gamma_d)$}
\label{sec_analytic}

\begin{theorem}[Main Result 3]
\label{holomorphic_R^d}
    Let $f:\mathbb{R}^d\to \mathbb{R}$ be holomorphic on the complex open strip $S_{\boldsymbol{\tau}}$ defined in Eq. \eqref{eq_strip}
for some $\boldsymbol{\tau}>0$. Assume that for every $0\leq \boldsymbol{\beta}<\boldsymbol{\tau}$, there exists $B>0$ depending only on $\boldsymbol{\beta}$, such that for all $\boldsymbol{z}=\boldsymbol{x}+\operatorname{i}\boldsymbol{y}\in S_{\boldsymbol{\tau}}$, it holds
\begin{equation}
\label{eq_growth}
    |f(\boldsymbol{z})|\leq B\exp\left(\sum_{j=1}^d\frac{x_j^2}{4}-\frac{1}{\sqrt{2}}|x_j|\sqrt{\beta_j^2-\frac{y_j^2}{2}}\right).
\end{equation}
Then, there exists a 3D ReLU network $\mathcal{N}_{W,N+d-1,H}$ with $W=\max\left\{8Nd,N^d(4+d)\right\}$ and 
\begin{equation*}
    H=\frac{d}{2}N\log_2 \left(1+ 6\sqrt{2N\ln(6N)+4B\sqrt{N}} \right)+\log_2 \left(\frac{5}{4}N\right)+\frac{B\log_2 e}{2}\sqrt{N},
\end{equation*}
whose function $\Phi:\mathbb{R}^d\to\mathbb{R}$ satisfies
\begin{equation}
    \left\|f-\Phi\right\|_{L^2(\mathbb{R}^d,\gamma_d)}\leq C e^{-B\sqrt{N}},
\end{equation}
for some constant $C>0$.
\end{theorem}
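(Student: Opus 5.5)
The plan is to follow the strategy of \cite{schwab2021deep}: first approximate $f$ by a truncated Hermite expansion, then emulate the resulting polynomial by a 3D ReLU network on a large cube, and finally control the tail outside the cube using Gaussian decay. Each of the three errors should be of order $e^{-B\sqrt{N}}$.

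\textbf{Step 1: Hermite truncation.} Expand $f=\sum_{\boldsymbol{\nu}}c_{\boldsymbol{\nu}}\Xi_{\boldsymbol{\nu}}$ in $L^2(\mathbb{R}^d,\gamma_d)$. The growth condition \eqref{eq_growth} on the strip $S_{\boldsymbol{\tau}}$ gives, by the classical Hille-type coefficient estimates used in \cite{schwab2021deep,Wang2023Convergence}, a bound of the form $|c_{\boldsymbol{\nu}}|\le C\prod_j e^{-\beta_j\sqrt{2\nu_j}}$. Truncating to the tensor set $\{\boldsymbol{\nu}:0\le\nu_j<N\}$, which has $N^d$ terms, Parseval then yields
\[
\|f-f_N\|_{L^2(\gamma_d)}\le Ce^{-c\sqrt{N}}.
\]
I will calibrate the constant so that the exponent reads $e^{-B\sqrt N}$, as in the statement.

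\textbf{Step 2: cutting off to a cube.} Write each $\Xi_n$ in monomial form; its coefficients are bounded by roughly $\sqrt{n!}$ up to $2^n$-type factors. Choose a radius $R=\sqrt{2N\ln(6N)+4B\sqrt N}$, which is the quantity appearing in $H$. Then:
\begin{itemize}
\item the Gaussian tail mass $\gamma_d(\{\|\boldsymbol{x}\|_\infty>R\})\lesssim e^{-R^2/2}$, multiplied by the polynomial growth of $f_N$ (of order $R^{dN}$ times the coefficient sizes), is at most $e^{-B\sqrt N}$;
\item the tail of $f$ itself outside the cube is controlled by $\|f\|_{L^2}$ and Cauchy–Schwarz, again using \eqref{eq_growth} on the real line.
\end{itemize}
Outside the cube, I make $\Phi$ constant (clip the inputs via $x\mapsto\sigma(x+R)-\sigma(x-R)$). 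Its contribution to the error is then also bounded by the Gaussian tail.

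\textbf{Step 3: network emulation on the cube.} Rescale $x_j\mapsto (x_j+R)/(2R)\in[0,1]$. This turns each $\Xi_{\nu_j}$ into a univariate polynomial with coefficients of size at most about $(1+6R)^{N}$, which is the origin of the factor $\log_2(1+6\sqrt{\cdots})$ in $H$.
\begin{itemize}
\item Build all univariate powers up to degree $N$ in each coordinate with the chain from \cref{approx_poly}. This uses width $8Nd$ and depth $N$.
\item Form the $N^d$ tensor products $\prod_j\Xi_{\nu_j}(x_j)$ by $d-1$ further layers of $\widehat{\times}$ from \cref{bi_prod}, after shifting and scaling the factors into $[0,1]$. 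This uses width $N^d(4+d)$ and explains the total depth $N+d-1$.
\item Sum the products with weights $c_{\boldsymbol{\nu}}$.
\end{itemize}
The error from \cref{approx_poly,bi_prod} is of size $\text{(coefficient blow-up)}^{d}\cdot N\cdot 2^{-2H}$. Choosing $H$ as stated makes this at most $e^{-B\sqrt N}$, since the $\frac{B\log_2 e}{2}\sqrt N$ term in $H$ absorbs the target accuracy.

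\textbf{Main obstacle.} The delicate step is the bookkeeping in Step 3. The rescaled Hermite coefficients grow like $(CR)^N\sqrt{N!}$, and the normalized factors must stay inside $[0,1]$ so that $\widehat{\times}$ can be applied recursively. I would handle this by computing the monomials $x^k$ (which lie in $[0,1]$) via \cref{approx_poly}, then normalizing each $\Xi_{\nu}$ by its sup-bound $M_\nu\le(1+6R)^{\nu}$ on the cube. After the products are formed, I multiply back by $\prod_j M_{\nu_j}$. Every error is thereby amplified by at most $(1+6R)^{dN}$, and this amplification is exactly what the first term of $H$ compensates for.
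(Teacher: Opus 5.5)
\textbf{There is a genuine gap in Step~2.} Your overall structure matches the paper's: Hermite truncation, emulation of the $N^d$ tensor products by 3D product networks on a cube, and Gaussian control of the remainder. The tail estimate, however, does not close as you set it up.

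You freeze $\Phi$ at its boundary values via $x\mapsto\sigma(x+R)-\sigma(x-R)$. So you must bound $\|\Phi\|_{L^2(\gamma_d;\{\|\boldsymbol{x}\|_\infty>R\})}$ by $\sup_{[-R,R]^d}|\Phi|\cdot\gamma_d(\{\|\boldsymbol{x}\|_\infty>R\})^{1/2}$.

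\emph{The supremum is too large.} Your only pointwise control is $|\Xi_n(x)|\le(\sqrt6\max\{1,|x|\})^n$ from \cref{Reformula_Hermite}. Hence the supremum is only known to be of order $(\sqrt6R)^{d(N-1)}$; the $e^{-\mathcal{O}(\sqrt N)}$ decay of the Hermite coefficients cannot offset this.

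\emph{The Gaussian decay is too weak.} At your radius, $\gamma_d(\{\|\boldsymbol{x}\|_\infty>R\})^{1/2}\approx e^{-R^2/4}=(6N)^{-N/2}e^{-B\sqrt N}$. Already for $d=1$ the product is, up to polynomial factors, $(R^2/N)^{N/2}e^{-B\sqrt N}\ge(2\ln(6N))^{N/2}e^{-B\sqrt N}$, which diverges. Each further coordinate contributes another factor $(\sqrt6R)^{N-1}$.

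So the heuristic ``$R^{dN}$ times the coefficient sizes, beaten by the Gaussian tail'' fails even with $\mathcal{O}(1)$ coefficients. Your stated coefficient size $\sqrt{n!}$ also errs in the unfavourable direction: by \cref{Reformula_Hermite} the monomial coefficients of $\Xi_n$ have $\ell^1$-norm at most $6^{n/2}$. The only genuine saving, the leading coefficient $1/\sqrt{n!}$, is never used.

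Your radius is also too small. In the paper the cutoff radius is $M=\sqrt{12N\ln(6N)+24B\sqrt N}=\sqrt6\,R$, and the factor $1+6R=1+\sqrt6M$ in $H$ reflects the bound $(\sqrt6M)^n$ for $|\Xi_n|$ on $[-M,M]$. Your $R$ is too small for any argument that controls $\Xi_n$ or $\Phi$ beyond the cube through \cref{Reformula_Hermite} or \cref{hermite_tail}: even a single factor gives $(\sqrt{6N}R)^Ne^{-R^2/4}=R^Ne^{-B\sqrt N}$.

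\textbf{How the paper avoids this.} The paper never bounds the network in the $d$-dimensional tail.
\begin{itemize}
\item Each univariate factor $\widetilde{\Xi}_{\nu_j}$ is built to vanish identically outside $[-M,M]$ (\cref{approx_hermite}(ii)).
\item Its $L^2(\gamma_1)$ error is therefore the uniform error on the cube plus $\|\Xi_n\|_{L^2(\gamma_1;\{|x|>M\})}$. \cref{hermite_tail} controls this, and it is $\le e^{-B\sqrt N}$ at radius $M$ (\cref{cor_approx_hermite}).
\item The $d$-dimensional error is assembled through $\prod_ja_j-\prod_jb_j=\sum_j(a_j-b_j)\prod_{i<j}a_i\prod_{i>j}b_i$ and the product structure of $\gamma_d$. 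The other factors enter only through $\|\Xi_{\nu_i}\|_{L^2(\gamma_1)}=1$ and $\|\widetilde{\Xi}_{\nu_i}\|_{L^2(\gamma_1)}\le1+e^{-B\sqrt N}$, never through suprema.
\item The $d$-fold product network contributes only a uniform error on bounded inputs.
\end{itemize}

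\textbf{A simpler repair within your framework.} Clip to zero rather than to the boundary, so that $\Phi$ vanishes outside the cube. The outside error is then just $\|f\|_{L^2(\gamma_d;\{\|\boldsymbol{x}\|_\infty>R\})}\le Ce^{-\min_j\beta_j R/\sqrt2}$, by \eqref{eq_growth} at $\boldsymbol{y}=0$. What remains is a transition-layer term that vanishes as the layer width tends to $0$, and with this your radius suffices.

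\textbf{Two smaller points.} First, the shift $x\mapsto(x+R)/(2R)$ inflates the available coefficient bound to $(3\sqrt6R)^\nu$, because expanding $(2y-1)^j$ costs a factor $3^j$. This exceeds the $(1+6R)^\nu$ budgeted in $H$ by roughly $(3/2)^{\nu/2}$. The paper avoids the shift by building the product on $[-1,1]^2$ from the even sawtooth $g_s(|x|)$ (\cref{multi_prod}) and only rescaling by $1/M$. Second, your ``calibration'' of $B$ in Step~1 is the same silent identification $B=(\prod_j\beta_j)^{1/d}/\sqrt2$ that the paper makes. This is not the growth constant of \eqref{eq_growth}, and you should state it as such.
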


Extending the approximation of analytic functions from the cube to the global domain leads to an inferior convergence rate, from $\mathcal{O}\left(\exp(-N)\right)$ to $\mathcal{O}\left(\exp(-\sqrt{N})\right)$. However, this result still outperforms the prior work \cite{schwab2021deep}, where a ReLU NN of depth $\mathcal{O}\left(N\log^2N\right)$ and height $1$ can only reach an error of  $\mathcal{O}\left(\exp\left(-N^{1/3}\right)\right)$.

The condition Eq. \eqref{eq_growth} imposes a growth constraint on $f$, limiting its growth on $x_j$ to be slower than $\exp{\left(\frac{x_j^2}{4}\right)}$ \cite[Assumption 4.1]{schwab2021deep}. Under such an assumption, the partial sum of the expansion of $f$ on the Hermite polynomial basis converges at an exponential rate. Hence, we adopt the Hermite polynomial basis. Furthermore, since in the multivariate case, this expansion $\sum_{0\leq \boldsymbol{\nu}\leq n}\langle f,\Xi_{\boldsymbol{\nu}} \rangle \Xi_{\boldsymbol{\nu}}$ is a linear combination of $\mathcal{O}(n^d)$ products of several univariate Hermite polynomials, the error estimate is more complicated than \cref{real_analytic} and \cref{holomorphic_ellipse}. This is why proving \cref{holomorphic_R^d} is more complicated. 

\cref{Reformula_Hermite} and \cref{hermite_tail} demonstrate the bounds of the Hermite polynomial coefficients and the norm of Hermite polynomials outside $[-M,M]$ for $M>0$. \cref{Hermite_expansion} gives the convergence rate of the Hermite expansion. To bridge the neural network approximation and Hermite polynomial expansion, \cref{multi_prod} approximates the multivariate product using 3D ReLU NNs. Then, \cref{approx_hermite}, \cref{approx_hermite_R^n} and \cref{cor_approx_hermite} approximate the Hermite polynomials by 3D ReLU NNs, based on \cref{multi_prod}, \cref{Reformula_Hermite}, and \cref{hermite_tail}. Finally, our \cref{holomorphic_R^d} follows after combining the results of Hermite expansion and neural network approximation.

% To give the proof of \cref{holomorphic_R^d}, we recall the $n$-th probabilists' Hermite polynomial $\Xi_n(x)$, which has another formulation as follows.

\begin{lemma}[Lemma 2.3, \cite{schwab2021deep}]
\label{Reformula_Hermite}
    The $n$-th Hermite polynomial can be written as
\begin{equation}
\label{eq_HermiteReformula}
    \Xi_n(x)=\sum_{j=0}^n c_{n,j}x^j,
\end{equation}
where the coefficients satisfy $\sum_{j=0}^n |c_{n,j}| \le 6^{n/2}$. Consequently, for all $x\in\mathbb{R}$,
\begin{equation}
\label{eq_HermiteCoeff}
|\Xi_n(x)| \le \big( \sqrt{6} \max\{1,|x|\} \big)^n.
\end{equation}
\end{lemma}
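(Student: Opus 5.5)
The plan is to start from the explicit closed form of the probabilists' Hermite polynomials. Expanding the Rodrigues formula in Eq. \eqref{eq_Hermite} (or, equivalently, using the generating function $e^{xt-t^2/2}=\sum_n He_n(x)t^n/n!$) gives
\begin{equation*}
He_n(x)=n!\sum_{m=0}^{\lfloor n/2\rfloor}\frac{(-1)^m}{m!\,(n-2m)!\,2^m}x^{n-2m},\qquad \Xi_n=He_n/\sqrt{n!}.
\end{equation*}
Thus $c_{n,j}=0$ unless $j=n-2m$, and in that case
\begin{equation*}
|c_{n,n-2m}|=\frac{\sqrt{n!}}{m!\,(n-2m)!\,2^m}.
\end{equation*}
The claim reduces to the purely combinatorial inequality
\begin{equation*}
S_n:=\sum_{m=0}^{\lfloor n/2\rfloor}\frac{\sqrt{n!}}{m!\,(n-2m)!\,2^m}\le 6^{n/2}.
\end{equation*}

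To bound $S_n$, I would first rewrite each term using the multinomial coefficient $\frac{n!}{m!\,m!\,(n-2m)!}$. Multiplying and dividing by $\sqrt{m!\,m!\,(n-2m)!}$, each term becomes
\begin{equation*}
\sqrt{\frac{n!}{m!\,m!\,(n-2m)!}}\cdot\frac{1}{2^m\sqrt{(n-2m)!}}.
\end{equation*}
The second factor is at most $1$. Applying Cauchy--Schwarz over $m$ then gives
\begin{equation*}
S_n\le\Bigl(\sum_m\frac{n!}{m!\,m!\,(n-2m)!}\Bigr)^{1/2}\Bigl(\sum_m 4^{-m}\Bigr)^{1/2}.
\end{equation*}
The first sum is bounded by the full trinomial sum $\sum_{a+b+c=n}\frac{n!}{a!\,b!\,c!}=3^n$, and the second is at most $4/3$. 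This yields $S_n\le\sqrt{4/3}\,3^{n/2}$.

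This is below $6^{n/2}$ for all $n\ge1$, since $(2)^{n/2}\ge\sqrt2>\sqrt{4/3}$. For $n=0$ we have $S_0=1$ directly. So this step is the main obstacle only in the sense of choosing the right splitting. If the constants turned out too tight, the fallback would be a cruder split: bound $\frac{\sqrt{n!}}{m!(n-2m)!}\le\sqrt{\binom{n}{2m}}\sqrt{\binom{2m}{m}}$ after dropping the factor $2^{-m}\sqrt{(2m)!/(m!m!)}\cdot\ldots$, and then sum binomials against $2^n\cdot 2^n$. That route gives a bound of the form $2^{n}\le 6^{n/2}$, also sufficient.

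Finally, the pointwise bound \eqref{eq_HermiteCoeff} follows immediately. For any $x\in\mathbb{R}$ and $j\le n$, $|x|^j\le\max\{1,|x|\}^n$, so
\begin{equation*}
|\Xi_n(x)|\le\sum_j|c_{n,j}|\,|x|^j\le 6^{n/2}\max\{1,|x|\}^n=\bigl(\sqrt6\max\{1,|x|\}\bigr)^n.
\end{equation*}
Alternatively, since the statement is quoted as Lemma 2.3 of \cite{schwab2021deep}, one may simply cite it. The argument above is the self-contained verification I would include.
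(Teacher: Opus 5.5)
Your proof is correct. The coefficients $|c_{n,n-2m}|=\sqrt{n!}/(m!\,(n-2m)!\,2^m)$ follow from $\Xi_n=He_n/\sqrt{n!}$, which is exactly the normalization in Eq.~\eqref{eq_Hermite}. Your factorization of each term as $\sqrt{n!/(m!\,m!\,(n-2m)!)}\cdot 2^{-m}/\sqrt{(n-2m)!}$ is right. Cauchy--Schwarz together with the trinomial sum then gives $\sum_j|c_{n,j}|\le\sqrt{4/3}\,3^{n/2}\le 6^{n/2}$ for $n\ge 1$, and you check $n=0$ by hand. The pointwise bound follows as you say. One small wording point: the bound you actually use on the second factor is its square being at most $4^{-m}$, not merely the factor being at most $1$.

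The paper gives no proof of this lemma; it imports it as Lemma~2.3 of \cite{schwab2021deep}. Your computation is therefore a genuinely self-contained alternative. Citing buys brevity, but it leaves the reader to check that the cited normalization agrees with $\Xi_n$. Your route makes that normalization explicit and yields a sharper geometric rate, $3^{n/2}$ in place of $6^{n/2}$.

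Neither constant is close to tight. The quantity $\sqrt{n!}\sum_j|c_{n,j}|$ counts the involutions of an $n$-element set, so $\sum_j|c_{n,j}|$ grows only like $e^{\sqrt n}$ up to polynomial factors. Any reasonable splitting therefore suffices.

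You can drop the fallback paragraph. It is unnecessary, and as written it is too imprecise to serve as an argument: the dropped factor ends in ``$\cdots$'' and you sum ``against $2^n\cdot 2^n$''. Its starting inequality $\sqrt{n!}/(m!\,(n-2m)!)\le\sqrt{\binom{n}{2m}\binom{2m}{m}}$ is nonetheless true.
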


The Hermite polynomial in $L^2(\mathbb{R},\gamma_1)$ has the following property.
\begin{lemma}
\label{hermite_tail}
For any $M\geq 1$ and Hermite polynomial $\Xi_n$, we have
\begin{equation}
    \int_{|x|>M}|\Xi_n(x)|^2d\gamma_1(x)\leq \frac{1}{\sqrt{2\pi}}(2n)!!\left(\sqrt{6}M\right)^{2n}e^{-M^2/2}.
\end{equation}
\end{lemma}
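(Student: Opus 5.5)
The plan is to insert the pointwise bound \eqref{eq_HermiteCoeff} from \cref{Reformula_Hermite} into the integral and then control the remaining Gaussian moment. Since $M\geq 1$, on the region $|x|>M$ we have $\max\{1,|x|\}=|x|$, so
\begin{equation*}
    |\Xi_n(x)|^2\leq 6^n x^{2n},\qquad |x|>M.
\end{equation*}
It therefore suffices to bound
\begin{equation*}
    I_n(M):=\frac{2}{\sqrt{2\pi}}\int_M^\infty x^{2n}e^{-x^2/2}\,dx ,
\end{equation*}
where the symmetry of the integrand lets us work only on $[M,\infty)$. The target is
\begin{equation*}
    I_n(M)\leq \frac{1}{\sqrt{2\pi}}\,(2n)!!\,M^{2n}e^{-M^2/2}.
\end{equation*}
Multiplying this by $6^n$ gives exactly the claimed factor $\left(\sqrt6 M\right)^{2n}$.

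The main step is to bound the Gaussian tail moment $J_n(M)=\int_M^\infty x^{2n}e^{-x^2/2}dx$. Integrating by parts with $u=x^{2n-1}$ and $dv=xe^{-x^2/2}dx$ gives the recursion
\begin{equation*}
    J_n(M)=M^{2n-1}e^{-M^2/2}+(2n-1)J_{n-1}(M).
\end{equation*}
Unrolling it yields
\begin{equation*}
    J_n(M)=e^{-M^2/2}\sum_{k} \frac{(2n-1)!!}{(2k-1)!!}M^{2k-1}+(2n-1)!!\,J_0(M).
\end{equation*}
For the base term we use the Mills ratio bound $J_0(M)\leq e^{-M^2/2}/M\leq e^{-M^2/2}$.

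Next we use $M\geq1$, so every power satisfies $M^{2k-1}\leq M^{2n}$, and each coefficient satisfies $\frac{(2n-1)!!}{(2k-1)!!}\leq (2n-1)!!$. The sum has $n$ terms, and there is one extra term from $J_0$. This gives
\begin{equation*}
    J_n(M)\leq (n+1)(2n-1)!!\,M^{2n}e^{-M^2/2}.
\end{equation*}
The $J$ bound carries no $\frac{1}{\sqrt{2\pi}}$ factor. Putting back the prefactor $\frac{2}{\sqrt{2\pi}}$ and the factor $6^n$, the total constant is $\frac{1}{\sqrt{2\pi}}\cdot 2(n+1)(2n-1)!!$.

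It remains to check $2(n+1)(2n-1)!!\leq (2n)!!=2^n n!$. This holds for $n\geq 2$: for $n=2$ both sides equal $8$, and the left side grows by a factor $\frac{(n+2)(2n+1)}{n+1}$ while the right side grows by $2(n+1)$, which is larger. The expected obstacle is the cases $n=0$ and $n=1$, where this comparison fails, namely $2>1$ and $12>2$. For these I would redo the bound without the crude counting step. For $n=0$ a sharper tail gives $\int_{|x|>M}d\gamma_1\leq \frac{2}{\sqrt{2\pi}}e^{-M^2/2}/M$. For $n=1$ the exact value is $J_1=Me^{-M^2/2}+J_0$. A loose check at $M=1$ suggests the stated constant $\frac{1}{\sqrt{2\pi}}$ may still be off by a factor of up to about $2$ in these small cases. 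If a direct computation confirms this, I would absorb the discrepancy by using the slack in $6^n$ versus the true coefficient bound, and otherwise note that a factor $2$ must be allowed for $n\leq 1$. Since only large $n$ matters downstream, this does not affect the later use.
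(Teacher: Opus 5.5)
Your strategy matches the paper's: bound $|\Xi_n(x)|^2\le 6^n x^{2n}$ on $|x|>M$, then apply a Gaussian tail-moment bound, which the paper simply cites from Schwab--Zech. The step that closes your argument, however, is wrong.

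The inequality $2(n+1)(2n-1)!!\le(2n)!!$ fails for \emph{every} $n$. At $n=2$ the left side is $2\cdot 3\cdot 3=18$, not $8$. Your growth comparison is also reversed, since $(n+2)(2n+1)=2n^2+5n+2>2(n+1)^2$. In fact $(2n)!!/(2n-1)!!\sim\sqrt{\pi n}$, so your constant overshoots the target by a factor of order $2\sqrt{n/\pi}$. As written, the argument proves the bound for no $n$. Saying ``only large $n$ matters'' does not help, and in any case all degrees $0\le\nu_j\le n$ are used downstream.

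The loss comes from replacing each $\frac{(2n-1)!!}{(2k-1)!!}$ by $(2n-1)!!$ and counting $n+1$ terms. Use your recursion inductively instead. With $J^{(m)}(M)=\int_M^\infty x^m e^{-x^2/2}\,dx$ you have $J^{(m)}=M^{m-1}e^{-M^2/2}+(m-1)J^{(m-2)}$. Together with $M\ge 1$ and $1+(m-1)(m-2)!!\le m!!$, this gives $J^{(m)}(M)\le m!!\,M^m e^{-M^2/2}$ for all $m\ge 0$.

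Even this does not yield the stated constant, because $\int_{|x|>M}$ is twice the one-sided integral. The reduced inequality $\int_{|x|>M}x^{2n}e^{-x^2/2}\,dx\le (2n)!!\,M^{2n}e^{-M^2/2}$ is genuinely false at $M=1$ for all $n\le 5$. For $n=1$ the two sides are about $2.01$ and $1.21$; for $n=5$ they are about $2369$ and $2329$. So no sharpening of the moment estimate alone can work. The paper's proof applies the cited moment bound to the two-sided integral and silently drops the same factor $2$ you spotted.

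The missing idea is to pay for this $2$ with slack in the coefficient bound. The exact $\ell^1$-norm of the coefficients of $\Xi_n$ is $T_n/\sqrt{n!}$, where $T_n=\sum_m \frac{n!}{m!(n-2m)!\,2^m}$. Using $(2m-1)!!\le\sqrt{(2m)!}$ and Cauchy--Schwarz one gets $T_n^2\le n!\,(\lfloor n/2\rfloor+1)\,2^{n-1}$. Hence $2T_n^2/n!\le 6^n$ for all $n\ge 1$, and combined with the one-sided moment bound this proves the lemma for $n\ge 1$.

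For $n=0$ there is no slack, and the statement is false. Indeed $\gamma_1(\{|x|>1\})\approx 0.317$, while the right-hand side at $M=1$ is $e^{-1/2}/\sqrt{2\pi}\approx 0.242$. So the lemma must either be restricted to $n\ge1$ or carry the constant $\frac{2}{\sqrt{2\pi}}$; the latter is harmless downstream.

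(Minor: at $n=1$ your comparison reads $4>2$, not $12>2$.)
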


\begin{proof}
Using the coefficient bound Eq. \eqref{eq_HermiteCoeff} and the fact that $$\int_{|x|>M}|x|^{n}e^{-x^2/2}dx\leq n!! M^n e^{-M^2/2}$$ (see \cite[Lemma2.4]{schwab2021deep}),we have the following:
\begin{equation*}
\begin{split}
    \int_{|x|>M}|\Xi_n(x)|^2d\gamma_1(x)&=\frac{1}{\sqrt{2\pi}}\int_{|x|>M}|\Xi_n(x)|^2e^{-x^2/2}dx\\
    &\leq \frac{1}{\sqrt{2\pi}}\int_{|x|>M}|x|^{2n}e^{-x^2/2}dx\\
    &\leq \frac{1}{\sqrt{2\pi}}(2n)!!\left(\sqrt{6}M\right)^{2n}e^{-M^2/2}.
\end{split}
\end{equation*}
\end{proof}

The key idea of proving \cref{holomorphic_R^d} is to consider the expansion of a holomorphic on the Hermite basis and then approximate the Hermite series via 3D networks. The basic property of the expansion on Hermite basis is given by the following lemma:
\begin{lemma}[Theorem 4.5, \cite{schwab2021deep}]
\label{Hermite_expansion}
    Let $f:\mathbb{R}^d\to \mathbb{R}$ be a holomorphic function that satisfies the assumption in \cref{holomorphic_R^d}. Then there are some $C_1,C_2>0$ depending only on $\beta_1,\ldots,\beta_d$ such that it holds that
    
\noindent(1) for all $\boldsymbol{\nu}\in \mathbb{N}^{d}$
\begin{equation}
\label{eq_HermiteExpan1}
    \left|\langle f,\Xi_{\boldsymbol{\nu}}\rangle_{L^2(\mathbb{R}^d,\gamma_d)}\right|\leq \pi^{-d/4}C_1 \exp\left( -\sum_{j=1}^{d} \beta_j \sqrt{2\nu_j+1}\right).
\end{equation}

\noindent(2) 
\begin{equation}
\label{eq_HermiteExpan2}
    \left\|f-\sum_{0\leq\boldsymbol{\nu}\leq n}\langle f,\Xi_{\boldsymbol{\nu}}\rangle \Xi_{\boldsymbol{\nu}}\right\|_{L^2(\mathbb{R}^d,\gamma_d)}\leq C_2 \exp\left(-\left(\prod_{j=1}^d\beta_j\right)^{1/d}\sqrt{n}/\sqrt{2}\right).
\end{equation}
\end{lemma}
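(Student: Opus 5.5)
This lemma is quoted from \cite[Theorem 4.5]{schwab2021deep}, so one option is simply to invoke it. To reprove it, I would argue in two stages: first bound each Hermite coefficient by a contour-shift argument in the strip $S_{\boldsymbol{\tau}}$, then sum the squared coefficients over the complement of the box $\{0\leq\boldsymbol{\nu}\leq n\}$ using Parseval in $L^2(\mathbb{R}^d,\gamma_d)$.

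\textbf{Stage 1: the coefficient bound (1).} I would work one variable at a time, since $\Xi_{\boldsymbol{\nu}}$ and $\gamma_d$ both factorise.
\begin{itemize}
\item In one variable, write $\langle f,\Xi_\nu\rangle=\frac{1}{\sqrt{2\pi}}\int f(x)\Xi_\nu(x)e^{-x^2/2}\,dx$.
\item Pass to Hermite functions $\psi_\nu(x)=\pi^{-1/4}(\nu!)^{-1/2}2^{-\nu/2}H_\nu(x)e^{-x^2/2}$ by the substitution $x=\sqrt{2}t$. The coefficient then becomes, up to the factor $\pi^{-1/4}$, $\int g(t)\psi_\nu(t)\,dt$ with $g(t)=f(\sqrt2 t)e^{-t^2/2}$.
\item The growth condition \eqref{eq_growth} is tailored so that $g$ is holomorphic in a strip of half-width about $\beta$ and decays like $\exp(-|x|\sqrt{\beta^2-y^2})$ there. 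This is exactly Hille's condition for Hermite-function expansions.
\item I would use the classical bound (Hille; cf.\ the Plancherel–Rotach asymptotics) on the second-kind Hermite functions. Representing $\int g\psi_\nu$ through a Cauchy integral over the boundary of the strip gives the decay $\exp(-\beta\sqrt{2\nu+1})$.
\item In $d$ variables, apply the one-dimensional argument in each coordinate via Fubini. The constants multiply, giving $\pi^{-d/4}C_1\exp(-\sum_j\beta_j\sqrt{2\nu_j+1})$.
\end{itemize}
The substantive analytic step, and where I expect the main obstacle, is this uniform estimate of Hermite functions of the second kind. I would take it from Hille's work rather than rederive it.

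\textbf{Stage 2: the truncation error (2).} By orthonormality, the squared error equals $\sum_{\boldsymbol{\nu}\not\leq n}|\langle f,\Xi_{\boldsymbol{\nu}}\rangle|^2$. Every such $\boldsymbol{\nu}$ has some coordinate $\nu_k>n$, so I would bound the sum by a sum over $k$ of the terms with $\nu_k>n$.
\begin{itemize}
\item By (1), the sum factorises into a one-dimensional tail $\sum_{\nu>n}e^{-2\beta_k\sqrt{2\nu+1}}$ times full sums $\sum_{\nu\geq0}e^{-2\beta_j\sqrt{2\nu+1}}$ for $j\neq k$.
\item Each full sum is finite and depends only on $\beta_j$.
\item For the tail, compare with the integral $\int_n^\infty e^{-2\beta\sqrt{2s}}\,ds\lesssim(1+\sqrt n)e^{-2\beta\sqrt{2n}}$.
\item Taking square roots gives decay of order $\sqrt[4]{n}\,e^{-\beta_{\min}\sqrt{2n}}$. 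The polynomial prefactor is absorbed by slightly shrinking the exponent.
\item Since $\beta_{\min}\sqrt2>(\prod_j\beta_j)^{1/d}/\sqrt2$, the stated bound follows with $C_2$ depending only on the $\beta_j$.
\end{itemize}
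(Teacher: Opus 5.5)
The paper gives no argument beyond the citation to \cite[Theorem 4.5]{schwab2021deep}, so your reproof has to stand on its own, and it breaks at the last bullet of Stage 2. Everything before that is fine: Parseval, the union bound over the coordinate exceeding $n$, the factorization, and the integral comparison.

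\textbf{The final inequality is false.} The claim $\sqrt2\,\beta_{\min}>(\prod_j\beta_j)^{1/d}/\sqrt2$ is equivalent to $2\min_j\beta_j>(\prod_j\beta_j)^{1/d}$. The geometric mean dominates the minimum, not conversely; for $d=2$ and $\boldsymbol\beta=(1,16)$ the geometric mean is $4>2$.

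\textbf{No sharper estimate can repair this.} For the tensor box $\{0\le\boldsymbol\nu\le n\}$ the decay is governed by the smallest $\beta_j$, and \eqref{eq_HermiteExpan2} as printed is false for $d\ge2$. To see this, take $f(\boldsymbol x)=f_1(x_1)$, where $f_1$ is non-entire and satisfies the one-dimensional hypothesis, e.g.\ $f_1(x)=1/(1+x^2)$ with $\tau_1=1/2$.
\begin{itemize}
\item The factors with $j\ge2$ on the right of \eqref{eq_growth} are bounded below by $e^{-\beta_j^2/2}$. Hence $f$ satisfies the hypothesis with $\tau_2,\dots,\tau_d$ as large as you like.
\item On the other hand, $\langle f,\Xi_{\boldsymbol\nu}\rangle=\langle f_1,\Xi_{\nu_1}\rangle\,\delta_{\nu_2,0}\cdots\delta_{\nu_d,0}$. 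So the box-truncation error of $f$ is exactly the one-dimensional truncation error of $f_1$, independent of $\beta_2,\dots,\beta_d$.
\item Letting $\beta_2\to\infty$ in \eqref{eq_HermiteExpan2} would make that error $O(e^{-\kappa\sqrt n})$ for every $\kappa>0$. Hence $|\langle f_1,\Xi_\nu\rangle|=O(e^{-\beta\sqrt{2\nu+1}})$ for every $\beta$.
\item By the converse half of Hille's theorem (via $|h_\nu(s+\operatorname{i}u)|\lesssim e^{|u|\sqrt{2\nu+1}}$), $g_1(w)=f_1(\sqrt2w)e^{-w^2/2}$ and hence $f_1$ would then be entire, which it is not.
\end{itemize}

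\textbf{What Stage 2 actually proves.} Your argument gives
$\bigl\|f-\sum_{0\le\boldsymbol\nu\le n}\langle f,\Xi_{\boldsymbol\nu}\rangle\Xi_{\boldsymbol\nu}\bigr\|_{L^2(\mathbb{R}^d,\gamma_d)}\le C(1+n)^{1/4}e^{-\sqrt2\,\beta_{\min}\sqrt n}\le C_c\,e^{-c\sqrt n}$ for every $c<\sqrt2\min_j\beta_j$. This is the correct statement for the box, and it is the form in which the lemma can legitimately be used in \cref{holomorphic_R^d}. It implies \eqref{eq_HermiteExpan2} only when $2\min_j\beta_j>(\prod_j\beta_j)^{1/d}$, e.g.\ for isotropic $\boldsymbol\beta$. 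The geometric mean is the natural rate for anisotropic index sets $\{\boldsymbol\nu:\sum_j\beta_j\sqrt{2\nu_j+1}\le R\}$, whose cardinality $N$ satisfies $R\asymp(\prod_j\beta_j)^{1/d}N^{1/(2d)}$. It is not available for the box.

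\textbf{A hidden factor $\sqrt2$ in Stage 1.} Your phrase ``half-width about $\beta$'' glosses over the strip width. If $f$ is holomorphic only on $S_{\boldsymbol\tau}$, then $g$ is holomorphic only on $|\operatorname{Im}w|<\tau_j/\sqrt2$. But Hille's theorem at level $\beta_j$ needs $f$ holomorphic on $|y_j|<\sqrt2\beta_j$. The bound \eqref{eq_growth} is Hille's condition transported by $z=\sqrt2w$, so the intended domain is $S_{\sqrt2\boldsymbol\tau}$. Without that reading, (1) can fail for $\beta\in(\tau/\sqrt2,\tau)$. For $d=1$, $f(x)=\exp\bigl(\frac{x^2}{4}-\frac{\tau}{\sqrt2}\sqrt{x^2+\tau^2}\bigr)$ satisfies \eqref{eq_growth} on $S_\tau$ with $B=1$. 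However, $g(w)=\exp\bigl(-\tau\sqrt{w^2+\tau^2/2}\bigr)$ has branch points at $\pm\operatorname{i}\tau/\sqrt2$, so its coefficients are not $O(e^{-\beta\sqrt{2\nu+1}})$ for any $\beta>\tau/\sqrt2$. You should state explicitly which hypothesis you are using.
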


On the other hand, to approximate the Hermite series, we extend the bivariate product function $\widehat{\times}(\cdot,\cdot)$ in \cref{bi_prod} to the multivariate case.
\begin{lemma}
\label{multi_prod}
    For any $d\geq 2,H>0,M>0$, there is a 3D network $\widetilde{\prod}_d:[-M,M]^d\to\mathbb{R}$ in $\mathcal{N}_{4+d,d-1,H+1}$ such that 
\begin{equation}
\label{eq_MultiProdErr}
    \sup_{x\in[-M,M]^d}\left|\widetilde{\prod}_d(\boldsymbol{x})-x_1\cdots x_d\right|\leq 6(d-1)M^d 2^{-2(H+1)}.
\end{equation}
\end{lemma}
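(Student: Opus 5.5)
We will build $\widetilde{\prod}_d$ by rescaling to $[0,1]$ and then chaining the bivariate product $\widehat{\times}$ from \cref{bi_prod} $d-1$ times. The cube $[-M,M]^d$ contains negative values, so we first shift and scale each coordinate: $u_k=\frac{x_k+M}{2M}\in[0,1]$. Signs are then handled through the identity $x_k = 2M u_k - M$, but expanding the product this way would create $2^d$ terms. A cleaner route is to work with $|x_k|/M$ and track the sign separately. That is awkward for ReLU networks, however. We will therefore use the simplest device that keeps the inputs to $\widehat{\times}$ inside $[0,1]$ and the outputs bounded.

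Concretely, we apply \cref{bi_prod} on a symmetric domain. For $a,b\in[-1,1]$ we have the polarization identity
\[
ab = 2\Bigl(\tfrac{a+b}{2}\Bigr)^2 - 2\Bigl(\tfrac{a}{2}\Bigr)^2 - 2\Bigl(\tfrac{b}{2}\Bigr)^2 .
\]
Each square is then computed as $f_H(|t|)$, with $|t|=\sigma(t)+\sigma(-t)$ produced by one extra floor. This accounts for the height $H+1$ and the extra neurons in the width $4+d$. The resulting map $\widetilde{\times}:[-1,1]^2\to[-1,1]$ has error at most $6\cdot 2^{-2(H+1)}$, exactly as in \cref{bi_prod}. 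Its range stays in $[-1,1]$ because it is again a piecewise linear interpolant of $ab$ on a grid whose interpolated values lie in $[-1,1]$.

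The main argument runs on normalized variables. We set $y_k=x_k/M\in[-1,1]$ and define recursively $P_1=y_1$ and $P_k=\widetilde{\times}(P_{k-1},y_k)$. Each layer computes one product, which gives $d-1$ layers. Throughout, we carry the not-yet-used inputs $y_{k+1},\dots,y_d$ forward by identity channels ($y=\sigma(y)-\sigma(-y)$, or simply by shifting by a constant). This uses at most $d$ channels plus the $4$ channels of the product block, matching width $4+d$. We then bound the error by induction, using $|y_k|\le1$:
\[
|P_k-y_1\cdots y_k|\le |\widetilde{\times}(P_{k-1},y_k)-P_{k-1}y_k| + |y_k|\,|P_{k-1}-y_1\cdots y_{k-1}| \le 6(k-1)2^{-2(H+1)}.
\]
Finally we output $\widetilde{\prod}_d(\boldsymbol{x})=M^d P_d$. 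Multiplying by $M^d$ gives the bound $6(d-1)M^d2^{-2(H+1)}$ of Eq. \eqref{eq_MultiProdErr}.

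The main obstacle is verifying that the signed product block keeps its outputs in $[-1,1]$ and fits within the stated width and height budget. Keeping the outputs bounded is needed so that the recursion never leaves the domain on which $f_H$ is accurate. To do this, we will first check that the arguments $\tfrac{a+b}{2},\tfrac{a}{2},\tfrac{b}{2}$ all lie in $[-1,1]$, so their absolute values lie in $[0,1]$ where \cref{prop_square} applies. We will then check that the absolute-value floor combined with the sawtooth floors of \cref{prop_square} uses $H+1$ floors. If the range claim proves delicate, we can instead clip $P_k$ to $[-1,1]$ with two extra ReLUs, using $\min(\max(t,-1),1)$. Clipping can only reduce the error, since the true product lies in $[-1,1]$.
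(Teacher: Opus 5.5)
Your proposal is correct and takes essentially the paper's route. You normalize to $M=1$, build a signed bivariate product on $[-1,1]^2$ from the polarization identity at the cost of one extra floor, and chain it $d-1$ times so the error grows linearly. The only difference is the absolute value: you use $|t|=\sigma(t)+\sigma(-t)$, while the paper uses $\widetilde{g}_s(x)=g_{s+1}\bigl(\frac{x+1}{2}\bigr)=g_s(|x|)$. Your interpolation argument that the range stays in $[-1,1]$ makes explicit a step the paper leaves implicit. One bookkeeping slip: the product block has width $6$ (three arguments, as in \cref{bi_prod}), and at most $d-2$ inputs need carrying, so the width $d+4$ arises as $6+(d-2)$, not $4+d$.
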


\begin{proof}
    It suffices to consider the case $M=1$. Otherwise, we normalize $M^d\cdot\widetilde{\prod}_d\left(\boldsymbol{x}/M\right)$. Let us first solve the case $d=2$, which just extends the domain in \cref{bi_prod} from $[0,1]\times [0,1]$ to $[-1,1]\times [-1,1]$. Let $g_s$ be the sawtooth function with $2^s$ “sawteeth" on $[0,1]$ defined by Eq. \eqref{eq_sawtooth}, which is given by a 3D network $\mathcal{N}_{2,1,s}$. Then $\tilde{g}_s=g_{s+1}\left(\frac{x+1}{2}\right)$ can be implemented by a 3D network $\mathcal{N}_{2,1,s+1}$ and satisfies $\tilde{g}_s(x)=g_s(|x|)$ for $x\in[-1,1]$.

Following the similar construction of $f_s$ and $\widehat{\times}(\cdot,\cdot)$ in \cref{bi_prod}, $\widetilde{\prod}_2$ can be implemented by a 3D network $\mathcal{N}_{6,1,H+1}$, such that
\begin{equation}
\label{eq_Prod2Err}    \sup_{x_1,x_2\in[-1,1]}\left|\widetilde{\prod}_2(x_1,x_2)-x_1 x_2\right|\leq 6\cdot2^{-2(H+1)}.
\end{equation}
Next, for $d>2$, we recursively set
\begin{equation}
    \widetilde{\prod}_d(x_1,\ldots,x_d)=\widetilde{\prod}_2\left(\widetilde{\prod}_{d-1}(x_1,\ldots,x_{d-1}),x_d\right),
\end{equation}
and obtain Eq. \eqref{eq_MultiProdErr} by repeatedly using Eq. \eqref{eq_Prod2Err}.
The network implementing $\widetilde{\prod}_d$ is illustrated in \cref{Fig_MultiProd}, which is of width $d+4$, depth $d-1$, and height $H+1$.
\end{proof}

\begin{figure}[H] 
\centering 
\includegraphics[width=1\textwidth]{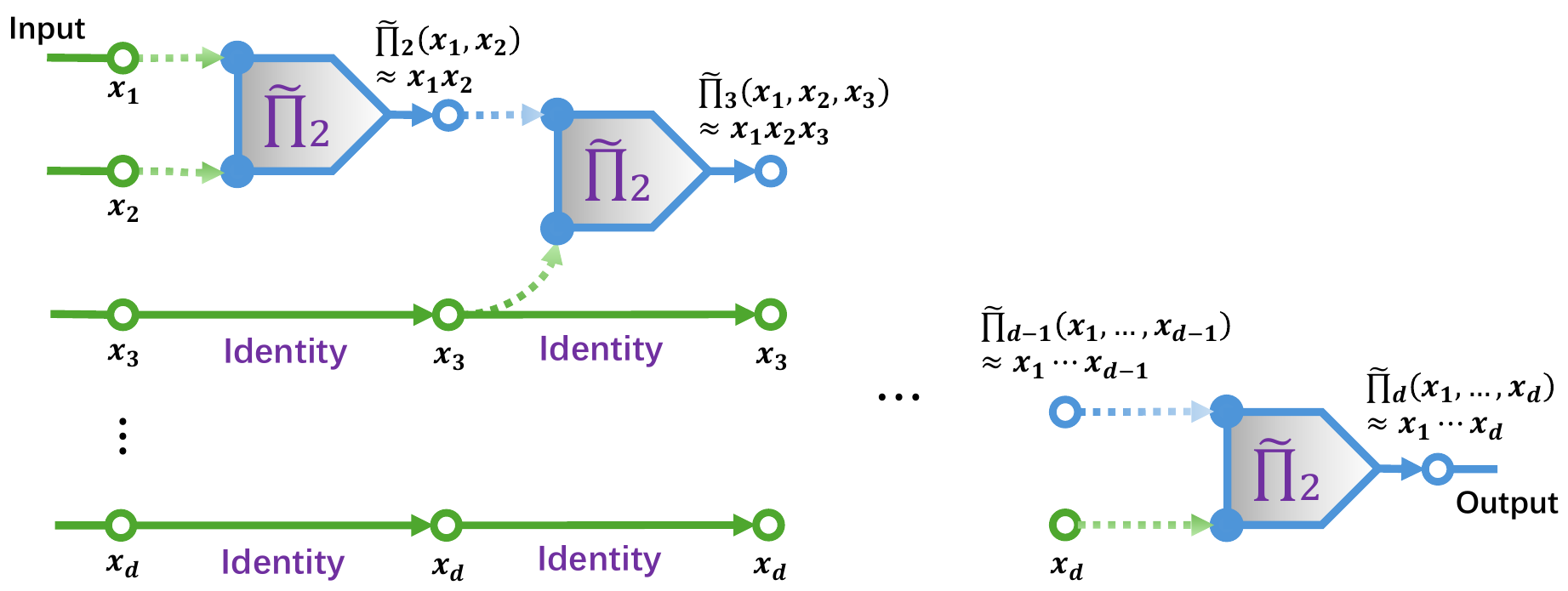} 
\caption{Illustration of the network structure in \cref{multi_prod}} 
\label{Fig_MultiProd} 
\end{figure}

Replacing the bivariate product function $\widehat{\times}(\cdot,\cdot)$ in the construction in \cref{approx_poly} with $\widetilde{\prod}_2(\cdot,\cdot)$ given in \cref{multi_prod}, we immediately obtain the approximation to the Hermite polynomials in different domains as follows:
\begin{lemma}
\label{approx_hermite}
    For any $n\in\mathbb{N}^+$, $M>0$, and $0<\delta<M$, there is a 3D network $\mathcal{N}_{8,n,H+1}$ with $H\geq \frac{n}{2}\log_2 (\sqrt{6}M)+\log_2(3n)-1$, whose function $\widetilde{\Xi}_n:\mathbb{R}\to\mathbb{R}$ fulfills
    
\noindent(i)    

\begin{equation}
\label{eq_HermLemma1}
    \sup_{x\in[-M+\delta,M-\delta]}\left|\widetilde{\Xi}_n(x)-\Xi_n(x)\right|\leq \left(\sqrt{6}M\right)^n 3n^22^{-2(H+1)}\leq 1,
\end{equation}

\noindent(ii) 

\begin{equation}
\label{eq_HermLemma2}
    \widetilde{\Xi}_n(x)=0,\quad\forall x>M \text{ and }x<-M,
\end{equation}

\noindent(iii)
\begin{equation}
\label{eq_HermLemma3}    \sup_{x\in\mathbb{R}}\left|\widetilde{\Xi}_n(x)\right|\leq 1+(\sqrt{6}M)^n,
\end{equation}

\noindent(iv) 
\begin{equation}
\label{eq_HermLemma4}
    \sup_{x\in [-M,-M+\delta]\cup [M-\delta,M]}\left|\widetilde{\Xi}_n(x)-\Xi_n(x)\right|\leq 1+2(\sqrt{6}M)^n.
\end{equation}

\end{lemma}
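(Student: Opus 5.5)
We build $\widetilde{\Xi}_n$ in two stages: an approximation of $\Xi_n$ on $[-M,M]$, then a multiplicative cut-off that switches it off outside $[-M,M]$.

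\emph{Step 1 (approximating $\Xi_n$ on $[-M,M]$).} Using \cref{Reformula_Hermite}, write $\Xi_n(x)=\sum_{j=0}^n c_{n,j}x^j$ with $\sum_j|c_{n,j}|\le 6^{n/2}$. Substitute $t=x/M\in[-1,1]$, so $\Xi_n(x)=\sum_j (c_{n,j}M^j)t^j$ with coefficients bounded by $(\sqrt6 M)^n$ (for $M\ge1$; otherwise bound by $6^{n/2}\le(\sqrt6\max\{1,M\})^n$).

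Repeat the recursion of \cref{approx_poly}, but with $\widetilde{\prod}_2$ from \cref{multi_prod} in place of $\widehat{\times}$, since $t$ may be negative. Set $h_1=t$ and $h_k=\widetilde{\prod}_2(t,h_{k-1})$. Each application has error at most $6\cdot2^{-2(H+1)}$, so $|h_k-t^k|\le 6(k-1)2^{-2(H+1)}$.

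One point needs care: $h_{k-1}$ must stay in $[-1,1]$ so that $\widetilde{\prod}_2$ is applied on its domain. We argue this by the same interpolation reasoning as in \cref{bi_prod}, or clip with $\min\{1,\max\{-1,\cdot\}\}$, which costs $O(1)$ extra ReLUs per layer and keeps the width at $8$.

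Summing gives the approximant $P(x)=\sum_j c_{n,j}M^j h_j(x/M)$, with
\begin{equation*}
\sup_{[-M,M]}|P-\Xi_n|\le(\sqrt6M)^n 3n^2 2^{-2(H+1)}.
\end{equation*}
The hypothesis $H\ge\frac n2\log_2(\sqrt6M)+\log_2(3n)-1$ makes this at most $1$. This is part (i), and it holds on all of $[-M,M]$, not only on the inner interval. The depth is $n-1$ from the recursion plus one layer for the cut-off, giving $n$; the height is $H+1$ from \cref{multi_prod}.

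\emph{Step 2 (cut-off).} Let $\chi$ be the trapezoid equal to $1$ on $[-M+\delta,M-\delta]$, equal to $0$ outside $(-M,M)$, linear in between, and valued in $[0,1]$. It is a combination of four ReLUs. We want $\widetilde{\Xi}_n=P\cdot\chi$, but a product cannot be formed exactly by a ReLU net, so we use the standard ReLU gating instead. Writing $B=1+(\sqrt6M)^n$, we have $|P|\le B$ on $[-M,M]$, and we set
\begin{equation*}
\widetilde{\Xi}_n=\sigma(P+B\chi-B)-\sigma(-P+B\chi-B).
\end{equation*}
To make this defined on all of $\mathbb{R}$, feed $P$ the clipped input $\mathrm{clip}(x,-M,M)$ so that $|P|\le B$ everywhere. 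The two extra neurons fit within width $8$.

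\emph{Step 3 (verifying (i)–(iv)).}
\begin{itemize}
\item When $\chi=1$, the formula gives $\sigma(P)-\sigma(-P)=P$. So on $[-M+\delta,M-\delta]$ we have $\widetilde{\Xi}_n=P$, and (i) follows from Step 1.
\item When $\chi=0$, both arguments are $\pm P-B\le0$, so $\widetilde{\Xi}_n=0$ for $|x|>M$. This is (ii).
\item In every case $|\widetilde{\Xi}_n|\le|P|\le 1+(\sqrt6M)^n$ (assuming $|\Xi_n|\le(\sqrt6M)^n$ on $[-M,M]$, as given by \cref{Reformula_Hermite} for $M\ge1$). This is (iii).
\item On the transition strips, $|\widetilde{\Xi}_n-\Xi_n|\le|\widetilde{\Xi}_n|+|\Xi_n|\le 1+2(\sqrt6M)^n$. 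This is (iv).
\end{itemize}

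The main obstacle is bookkeeping rather than any single estimate. We must keep the recursive products inside $[-1,1]$, and we must fit the clipping and gating neurons within width $8$ and depth $n$ while keeping height $H+1$. If the width budget is tight, we would merge the clip into the first layer and the gate into the output affine map.
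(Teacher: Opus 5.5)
Your argument is correct in substance, but the truncation step differs from the paper's. Step~1 is the same as the paper's: the recursion of \cref{approx_poly} with $\widetilde{\prod}_2$ in place of $\widehat{\times}$, plus the coefficient bound of \cref{Reformula_Hermite}. You are in fact more careful there, rescaling to $t=x/M$ and keeping the iterates in $[-1,1]$. Like the paper, you need $M\ge 1$. This is harmless, since the only $M$ used later (in \cref{cor_approx_hermite}) exceeds $1$.

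The paper truncates at the \emph{input}, setting $\widetilde{\Xi}_n=\bar{\Xi}_n(\xi(x))+\zeta_n(x)$. Here $\bar{\Xi}_n$ is (implicitly) the polynomial network without its constant term, which vanishes at $0$ because $\widetilde{\prod}_2(0,\cdot)=0$. The map $\xi$ is the identity on $[-M+\delta,M-\delta]$ and $0$ for $|x|\ge M$, and the trapezoid $\zeta_n$ restores $\Xi_n(0)$. You instead truncate at the \emph{output} with the soft threshold $\sigma(P-c)-\sigma(-P-c)$, where $c=(1-\chi)\bigl(1+(\sqrt{6}M)^n\bigr)\ge 0$.

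The paper's route keeps the output affine and puts all the truncation nonlinearity into one preprocessing layer. So one layer plus $n-1$ product layers gives depth $n$ directly, and no a priori bound on $P$ (hence no clipping) is needed. Your route gives (iii) and (iv) on all of $\mathbb{R}$ at once from $|\widetilde{\Xi}_n|\le|P|$. By contrast, the paper's argument for (iii) only inspects $[-M+\delta,M-\delta]$ and $|x|\ge M$ and skips the transition strips. The bound does hold there, but it needs $|c_{n,0}|+\sum_{j\ge1}|c_{n,j}|M^j\le(\sqrt{6}M)^n$.

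One correction to your bookkeeping: the gate is a ReLU applied after $P$ is formed, so it cannot be folded into the output affine map as your fallback suggests. It must occupy the $n$-th layer (or an extra floor). Your depth count of $n$ therefore rests on absorbing the clip into the first product layer, which you assert but do not carry out. Switching to the paper's input-side truncation removes the issue. In any case, an off-by-one in depth or height does not matter for how the lemma is used.
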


\begin{proof}
For convenience, we use $\mathfrak{\chi}^1_{M,\delta}$, $\mathfrak{\chi}^2_{M,\delta}$, and $\mathfrak{\chi}^3_{M,\delta}$ to denote the three intervals $[-M+\delta,M-\delta]$, $(-\infty,-M]\cup [M,+\infty)$, and $(-M+\delta,-M)\cup (M-\delta,M)$, respectively. For properties (i) and (ii), we replace the bivariate product function $\widehat{\times}(\cdot,\cdot)$ in the proof of \cref{approx_poly} with $\widetilde{\prod}_2(\cdot,\cdot)$ in \cref{multi_prod}. Thus, using \cref{Reformula_Hermite}, we obtain $\bar{\Xi}_n(x)$ by a 3D ReLU NN $\mathcal{N}_{8,n-1,H+1}$  that approximates the Hermite polynomial $\Xi_n(x)$ with an error
\begin{equation}
    \sup_{x\in\mathfrak{\chi}^1_{M,\delta}}\left|\bar{\Xi}_n(x)-\Xi_n(x)\right|\leq \left(\sqrt{6}M\right)^n 3n^22^{-2(H+1)}.
\end{equation}
To truncate $\bar{\Xi}_n(x)$ to zero outside $\mathfrak{\chi}^1_{M,\delta}$, for $\delta>0$, we let 
\begin{equation}
    \xi(x)=\begin{cases}
        x, &x\in \mathfrak{\chi}^1_{M,\delta}\\
        0, &x\in \mathfrak{\chi}^2_{M,\delta}\\
        \text{linear connection} &x\in \mathfrak{\chi}^3_{M,\delta}
    \end{cases}    
    \quad \text{and}\quad
    \zeta_n(x)=\begin{cases}
        \Xi_n(0), &x\in \mathfrak{\chi}^1_{M,\delta}\\
        0, &x\in \mathfrak{\chi}^2_{M,\delta}\\
        \text{linear connection} &x\in \mathfrak{\chi}^3_{M,\delta}
    \end{cases} 
,
\end{equation}
both of which can be given by a single-layer ReLU network of width $4$. Then we set
\begin{equation}
\label{eq_NNapproxHermite}
    \widetilde{\Xi}_n(x)=\bar{\Xi}_n\left(\xi(x)\right)+\zeta_n(x),
\end{equation}
 which satisfies Eqs. \eqref{eq_HermLemma1} and \eqref{eq_HermLemma2}.
 
For property (iii), note that on $\mathfrak{\chi}^2_{M,\delta}$, $|\widetilde{\Xi}_n(x)|=0$ by Eq. \eqref{eq_HermLemma2}. Hence, by Eq. \eqref{eq_HermLemma1},
\begin{equation}
    \sup_{x\in\mathbb{R}}|\widetilde{\Xi}_n(x)|=\sup_{x\in\mathfrak{\chi}^1_{M,\delta}}|\widetilde{\Xi}_n(x)|\leq \sup_{x\in\mathfrak{\chi}^1_{M,\delta}}|\Xi_n(x)|+|\widetilde{\Xi}_n(x)-\Xi_n(x)|\leq 1+(\sqrt{6}M)^n.
\end{equation}

For property (iv), it follows from Eqs. \eqref{eq_HermLemma1} and \eqref{eq_HermLemma3} that
\begin{equation}
    \sup_{x\in \mathfrak{\chi}^3_{M,\delta}}\left|\widetilde{\Xi}_n(x)-\Xi_n(x)\right|\leq  \sup_{x\in \mathfrak{\chi}^3_{M,\delta}}\left|\widetilde{\Xi}_n(x)\right|+\left|\Xi_n(x)\right| \leq 1+2(\sqrt{6}M)^n.
\end{equation}
\end{proof}

\begin{lemma}
For any Hermite polynomial $\Xi_n$, $M>0$, and $\delta>0$, let $\widetilde{\Xi}_n(x)$ be the function given by Eq. \eqref{eq_NNapproxHermite}, then we have
\label{approx_hermite_R^n}
\begin{equation}
\label{eq_LemmaApproxHermite}
\begin{split}
    \left\| \widetilde{\Xi}_n(x)-\Xi_n(x) \right\|_{L^2(\mathbb{R},\gamma_1)}\leq\left(\sqrt{6}M\right)^n3n^22^{-2(H+1)}+\varepsilon_{n,M}(\delta)+\frac{1}{\sqrt{2\pi}}\left(\sqrt{6n}M\right)^ne^{-M^2/4},
\end{split}
\end{equation}
where $\varepsilon_{n,M}(\delta)\to0$ as $\delta\to 0$.
\end{lemma}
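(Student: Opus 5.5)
We split the real line into the three regions used in the proof of \cref{approx_hermite}, namely $\mathfrak{\chi}^1_{M,\delta}=[-M+\delta,M-\delta]$, $\mathfrak{\chi}^3_{M,\delta}=(-M,-M+\delta)\cup(M-\delta,M)$ and $\mathfrak{\chi}^2_{M,\delta}=(-\infty,-M]\cup[M,\infty)$. We then write
\begin{equation*}
\left\|\widetilde{\Xi}_n-\Xi_n\right\|_{L^2(\mathbb{R},\gamma_1)}\leq \sum_{i=1}^3\left\|(\widetilde{\Xi}_n-\Xi_n)\mathbf{1}_{\mathfrak{\chi}^i_{M,\delta}}\right\|_{L^2(\mathbb{R},\gamma_1)},
\end{equation*}
which is Minkowski's inequality for the decomposition $\widetilde{\Xi}_n-\Xi_n=\sum_i(\widetilde{\Xi}_n-\Xi_n)\mathbf{1}_{\mathfrak{\chi}^i_{M,\delta}}$. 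Each of the three terms will produce exactly one term of Eq. \eqref{eq_LemmaApproxHermite}.

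\emph{Inner region.} Since $\gamma_1$ is a probability measure, the $L^2(\gamma_1)$ norm restricted to $\mathfrak{\chi}^1_{M,\delta}$ is at most the sup norm there. Property (i) of \cref{approx_hermite} then gives the bound $(\sqrt6M)^n3n^22^{-2(H+1)}$.

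\emph{Transition region.} By property (iv) of \cref{approx_hermite}, the integrand is bounded by $(1+2(\sqrt6M)^n)^2$. The $\gamma_1$-measure of $\mathfrak{\chi}^3_{M,\delta}$ is at most $2\delta/\sqrt{2\pi}$. We therefore define
\begin{equation*}
\varepsilon_{n,M}(\delta)=\bigl(1+2(\sqrt6M)^n\bigr)\sqrt{2\delta/\sqrt{2\pi}},
\end{equation*}
which tends to $0$ as $\delta\to0$ for fixed $n$ and $M$.

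\emph{Outer region.} By property (ii) of \cref{approx_hermite}, $\widetilde{\Xi}_n$ vanishes there, so the term equals $\bigl(\int_{|x|\geq M}|\Xi_n|^2d\gamma_1\bigr)^{1/2}$. For $M\geq1$ we apply \cref{hermite_tail} and take the square root, which gives
\begin{equation*}
\left(\tfrac{1}{\sqrt{2\pi}}\right)^{1/2}\bigl((2n)!!\bigr)^{1/2}(\sqrt6M)^ne^{-M^2/4}.
\end{equation*}
To reach the stated form, we use $(2n)!!=2^nn!\leq (2n)^n$, hence $((2n)!!)^{1/2}\leq(\sqrt{2n})^n$. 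This yields $(\sqrt{12n}M)^n e^{-M^2/4}$ up to the prefactor.

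This outer-region constant is the main obstacle. The stated bound is $\frac{1}{\sqrt{2\pi}}(\sqrt{6n}M)^ne^{-M^2/4}$, which is sharper by the factor $2^{n/2}$ and uses the prefactor $1/\sqrt{2\pi}$ rather than its square root. Two routes are possible.
\begin{itemize}
\item Redo the tail estimate directly from Eq. \eqref{eq_HermiteCoeff} with a sharper moment bound, e.g.\ $\int_{|x|>M}|x|^{2n}e^{-x^2/2}dx\lesssim n^nM^{2n}e^{-M^2/2}$, obtained by splitting $e^{-x^2/2}=e^{-x^2/4}e^{-x^2/4}$ and maximizing $x^{2n}e^{-x^2/4}$.
\item Interpret the double factorial loosely, as the paper does when citing \cite{schwab2021deep}.
\end{itemize}
We will try the first route, absorbing constants into the $n^{n/2}$ growth, and otherwise accept the slightly weaker constant, since only the factor $e^{-M^2/4}$ matters for the later choice of $M$ in \cref{holomorphic_R^d}. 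The case $M<1$ is handled trivially, because there the right-hand side dominates the $L^2$ norm of $\Xi_n$, which is $1$.
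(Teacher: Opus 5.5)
Your route is the paper's route: Minkowski over $\mathfrak{\chi}^1_{M,\delta}$, $\mathfrak{\chi}^3_{M,\delta}$, $\mathfrak{\chi}^2_{M,\delta}$, then properties (i), (iv), (ii) of \cref{approx_hermite} together with \cref{hermite_tail}. Your inner and transition estimates are fine; your $\varepsilon_{n,M}(\delta)$ is even slightly sharper than the paper's $2\sqrt{\delta}(1+2(\sqrt6M)^n)$.

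The gap is the one you flagged in the outer region, and it cannot be closed, because the inequality is false as printed. By (ii), for every $H$ and $\delta$ the left-hand side is at least the tail norm $\|\Xi_n\mathbf{1}_{\{|x|>M\}}\|_{L^2(\mathbb{R},\gamma_1)}$. Meanwhile, the first two terms on the right can be made arbitrarily small by taking $H$ large and $\delta$ small. For $n=1$, $M=1$ this tail norm equals $\bigl(2e^{-1/2}/\sqrt{2\pi}+2\gamma_1((1,\infty))\bigr)^{1/2}\approx0.895$, whereas the third term is $\sqrt{6/(2\pi)}\,e^{-1/4}\approx0.761$. Likewise, for any fixed $n\geq1$ and $M\to0$, the tail norm tends to $1$ while the third term tends to $0$.

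So neither of your routes can succeed in the stated generality. A sharper moment bound can help only under an extra large-$M$ hypothesis. For instance, when $M\geq2\sqrt n$ the function $x^{2n}e^{-x^2/4}$ is decreasing on $|x|\geq M$, and your splitting then gives $\int_{|x|>M}|x|^{2n}e^{-x^2/2}dx\leq\frac{4}{M}M^{2n}e^{-M^2/2}$. Reading the double factorial loosely is not a proof.

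Your treatment of $M<1$ is also wrong, on two counts. First, the right-hand side does not dominate $1$ there, since all three terms can be small. Second, the left-hand side is $\|\widetilde{\Xi}_n-\Xi_n\|$, not $\|\Xi_n\|$, so it is not bounded by $1$ anyway.

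The paper's own proof passes through the same step and covers it with two invalid manipulations. It writes the square root of $\frac{1}{\sqrt{2\pi}}(2n)!!(\sqrt6M)^{2n}e^{-M^2/2}$ with prefactor $\frac{1}{\sqrt{2\pi}}$ rather than $(2\pi)^{-1/4}$. It then uses $\sqrt{(2n)!!}\leq n^{n/2}$, i.e.\ $2^n n!\leq n^n$, which fails for $n\leq5$. It also invokes \cref{hermite_tail} for $M<1$, outside that lemma's hypothesis.

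What your argument actually supports is the following: for $M\geq1$ the third term should be
$(2\pi)^{-1/4}\sqrt{(2n)!!}\,(\sqrt6M)^ne^{-M^2/4}\leq(2\pi)^{-1/4}(\sqrt{12n}\,M)^ne^{-M^2/4}$.
You should state the lemma that way rather than chase the printed constant. This weaker version is all that is needed later. With $M^2=12n\ln(6n)+24B\sqrt n$ in \cref{cor_approx_hermite}, the bounds \eqref{eq_CorIneq1} and \eqref{eq_CorIneq2} leave slack $M^2/24\geq\frac n2\ln6$ in \eqref{eq_CorIneq}. That slack absorbs the extra factor $2^{n/2}(2\pi)^{1/4}$, since $\frac n2\ln2+\frac14\ln(2\pi)\leq\frac n2\ln6$ for all $n\geq1$.
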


\begin{proof}
By property (i) of \cref{approx_hermite}, we have
\begin{equation}
\label{eq_LemmaApproxHermite1}
\begin{split}
    \left\| \widetilde{\Xi}_n(x)-\Xi_n(x) \right\|_{L^2(\mathfrak{\chi}^1_{M,\delta},\gamma_1)}\leq \left(\sqrt{6}M\right)^n3n^22^{-2(H+1)}.
\end{split}
\end{equation}
Using property (iv) of \cref{approx_hermite}, we obtain
\begin{equation}
\label{eq_LemmaApproxHermite2}
\begin{split}
   \left\| \widetilde{\Xi}_n(x)-\Xi_n(x) \right\|_{L^2(\mathfrak{\chi}^3_{M,\delta},\gamma_1)}\leq 2\sqrt{\delta}\left(1+2\left(\sqrt{6}M\right)^n\right).
\end{split}
\end{equation}
By property (ii) of \cref{approx_hermite} and \cref{hermite_tail},
\begin{equation}
\label{eq_LemmaApproxHermite3}
\begin{split}
\left\| \widetilde{\Xi}_n(x)-\Xi_n(x) \right\|_{L^2(\mathfrak{\chi}^2_{M,\delta},\gamma_1)}&=\left\| \Xi_n(x) \right\|_{L^2(\mathfrak{\chi}^2_{M,\delta},\gamma_1)}\\
&\leq \frac{1}{\sqrt{2\pi}}\sqrt{(2n)!!}\left(\sqrt{6}M\right)^ne^{-M^2/4}\\
&\leq \frac{1}{\sqrt{2\pi}}\left(\sqrt{6n}M\right)^ne^{-M^2/4}.\\
\end{split}
\end{equation}
Hence, combining Eqs. \eqref{eq_LemmaApproxHermite1}, \eqref{eq_LemmaApproxHermite2}, and  \eqref{eq_LemmaApproxHermite3} yields
    \begin{equation}
\begin{split}
    &\left\| \widetilde{\Xi}_n(x)-\Xi_n(x) \right\|_{L^2(\mathbb{R},\gamma_1)}\\
    \leq& \left\| \widetilde{\Xi}_n(x)-\Xi_n(x) \right\|_{L^2(\mathfrak{\chi}^1_{M,\delta},\gamma_1)}+\left\| \Xi_n(x) \right\|_{L^2(\mathfrak{\chi}^2_{M,\delta},\gamma_1)}+\left\| \widetilde{\Xi}_n(x)-\Xi_n(x) \right\|_{L^2(\mathfrak{\chi}^3_{M,\delta},\gamma_1)}\\
    \leq &\left(\sqrt{6}M\right)^n3n^22^{-2(H+1)}+2\sqrt{\delta}\left(1+2\left(\sqrt{6}M\right)^n\right)+\frac{1}{\sqrt{2\pi}}\left(\sqrt{6n}M\right)^ne^{-M^2/4}.
\end{split}
\end{equation}
\end{proof}

With some special choice of $M$ and $H$ in Eq. \eqref{eq_LemmaApproxHermite}, we have the following corollary:
\begin{corollary}
\label{cor_approx_hermite}
    For any $B>0$, let $M=\sqrt{12n\ln(6n)+24B\sqrt{n}}$, $H=\frac{1}{2}n\log{(\sqrt{6}M)}+\log_2 (\frac{5}{4}n)+\frac{B\log_2 e}{2}\sqrt{n}$ and $\delta>0$ small enough in \cref{approx_hermite_R^n}, then we have
\begin{equation}
    \left\| \widetilde{\Xi}_n(x)-\Xi_n(x) \right\|_{L^2(\mathbb{R},\gamma_1)}\leq \left(\frac{3}{5}+\frac{1}{\sqrt{2\pi}}\right)e^{-B\sqrt{n}}+\varepsilon_{n,M}(\delta)\leq e^{-B\sqrt{n}}.
\end{equation}
\end{corollary}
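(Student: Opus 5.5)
The plan is to plug the stated choices of $M$ and $H$ into the three-term bound of \cref{approx_hermite_R^n}, and to show that the first term is at most $\frac{3}{5}e^{-B\sqrt{n}}$ and the third at most $\frac{1}{\sqrt{2\pi}}e^{-B\sqrt{n}}$. The middle term $\varepsilon_{n,M}(\delta)=2\sqrt{\delta}\bigl(1+2(\sqrt{6}M)^n\bigr)$ is controlled last: once $n$, $M$, $H$ are fixed it is an explicit quantity tending to $0$. Since $\frac{3}{5}+\frac{1}{\sqrt{2\pi}}\approx 0.999<1$, we can choose $\delta$ so small that $\varepsilon_{n,M}(\delta)\le \bigl(\frac{2}{5}-\frac{1}{\sqrt{2\pi}}\bigr)e^{-B\sqrt{n}}$, which gives the final inequality. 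We read $\log$ in $H$ as $\log_2$, consistent with the other definitions of $H$. We also need the admissibility condition $H\ge \frac{n}{2}\log_2(\sqrt{6}M)+\log_2(3n)-1$ from \cref{approx_hermite}; it holds because $\log_2(\frac54 n)+\frac{B\log_2 e}{2}\sqrt n\ge \log_2(3n)-1=\log_2(\frac32 n)$ once $B\sqrt n\log_2 e$ dominates $2\log_2(6/5)$. For very small $B\sqrt n$ this needs a minor adjustment of constants.

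\textbf{First term.} We have $2^{-2H}=(\sqrt6 M)^{-n}\,(\frac54 n)^{-2}\,e^{-B\sqrt n}$, using $2^{-B\log_2 e\sqrt n}=e^{-B\sqrt n}$. Hence
\[
(\sqrt6M)^n 3n^2 2^{-2(H+1)}=\tfrac{3n^2}{4}\cdot\tfrac{16}{25n^2}e^{-B\sqrt n}=\tfrac{12}{25}e^{-B\sqrt n}\le\tfrac35 e^{-B\sqrt n}.
\]
This step is pure bookkeeping.

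\textbf{Third term.} Write $M^2=12n\ln(6n)+24B\sqrt n$, so that $e^{-M^2/4}=(6n)^{-3n}e^{-6B\sqrt n}$. We need
\[
(\sqrt{6n}M)^n(6n)^{-3n}e^{-5B\sqrt n}\le 1,
\]
that is, $(6n)^{n/2}M^n\le (6n)^{3n}e^{5B\sqrt n}$, or equivalently $M\le (6n)^{5/2}e^{5B/\sqrt n}$. From $M^2\le 12n\ln(6n)+24B\sqrt n$ this follows since $12n\ln(6n)\le \frac12(6n)^5$ trivially. For the $B$-part we use $24B\sqrt n\le \frac12 (6n)^5 e^{10B/\sqrt n}$. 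This holds because $e^{10B/\sqrt n}\ge 1+10B/\sqrt n$, so the right-hand side is at least $\frac12 (6n)^5\cdot 10B/\sqrt n\ge 24B\sqrt n$ for all $n\ge1$.

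\textbf{Main obstacle.} The main difficulty is this third-term comparison, balancing the polynomial factor $(\sqrt{6n}M)^n$ against $e^{-M^2/4}$. The choice of $M$ is designed so that the $\ln(6n)$ part absorbs the factorial-type growth and the $24B\sqrt n$ part yields the target decay; the argument above is the one to carry out carefully. Everything else is direct substitution into \cref{approx_hermite_R^n}, followed by taking $\delta\to0$.
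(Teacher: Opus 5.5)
Your proposal is correct and follows the same skeleton as the paper's proof. Substitute $M$ and $H$ into the three-term bound of \cref{approx_hermite_R^n}, evaluate the first term exactly, show the tail term is at most $\frac{1}{\sqrt{2\pi}}e^{-B\sqrt n}$, and absorb $\varepsilon_{n,M}(\delta)$ by shrinking $\delta$.

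The one step where you genuinely differ is the tail inequality $(\sqrt{6n}M)^n e^{-M^2/4}\le e^{-B\sqrt n}$. The paper takes logarithms and splits the budget $M^2/4$. The portion $M^2/24$ equals $\frac n2\ln(6n)+B\sqrt n$ exactly. The bound $n\ln M\le M^2/6$ comes from the monotonicity of $x\mapsto x^2/\ln x$ on $[\sqrt e,\infty)$ together with $\ln(12n\ln(6n))\le 2\ln(6n)$. You instead write $e^{-M^2/4}=(6n)^{-3n}e^{-6B\sqrt n}$ and reduce everything to $M^2\le (6n)^5e^{10B/\sqrt n}$, which follows from $\ln(6n)\le 6n$ and $e^x\ge 1+x$. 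Your route is more elementary and shows how much slack the coefficient $12$ in $M^2$ leaves. The paper's budgeting tracks more closely which part of $M^2$ pays for which factor, at the cost of the monotonicity argument.

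Two side remarks, both in your favour. First, your value $\frac{12}{25}e^{-B\sqrt n}$ for the first term is the correct one. The paper asserts equality with $\frac35e^{-B\sqrt n}$, which is an arithmetic slip, harmless since $\frac{12}{25}<\frac35$.

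Second, on admissibility: the hypothesis $H\ge \frac n2\log_2(\sqrt6M)+\log_2(3n)-1$ in \cref{approx_hermite} is used only to secure the ``$\le 1$'' in property (i). That bound is what feeds (iii), (iv), and hence the form of $\varepsilon_{n,M}(\delta)$. Your first-term computation already shows this quantity equals $\frac{12}{25}e^{-B\sqrt n}<1$ for the chosen $H$. So no adjustment of constants is needed when $B\sqrt n$ is small; the paper does not check this point at all.
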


\begin{proof}
We first show that with $M=\sqrt{12n\ln(6n)+24B\sqrt{n}}$, we have
\begin{equation}
\label{eq_CorExpIneq}
    \frac{1}{\sqrt{2\pi}}\left(\sqrt{6n}M\right)^ne^{-M^2/4}\leq e^{-B\sqrt{n}}.
\end{equation}
It suffices to show that
\begin{equation}
\label{eq_CorIneq}
\begin{split}
    -\frac{M^2}{4}+\frac{n}{2}\ln(6n)+n\ln M+B\sqrt{n}\leq 0,
\end{split}
\end{equation}
On one hand, we have 
\begin{equation}
\label{eq_CorIneq1}
    \frac{n}{2}\ln(6n)+B\sqrt{n}\leq \frac{M^2}{24}.
\end{equation}
On the other hand, we show
\begin{equation}
\label{eq_CorIneq2}
     n\ln{M}\leq \frac{M^2}{6}.
\end{equation}
Then summating Eqs. \eqref{eq_CorIneq1} and \eqref{eq_CorIneq2} together yields Eq. \eqref{eq_CorIneq}.
In fact, note that the function $x\mapsto \frac{x^2}{\ln{x}}$ is monotonically increasing for all $x\geq \sqrt{e}$. Hence, it suffices to show that
\begin{equation}
    6n\leq \frac{12n\ln(6n)}{\ln\left(12n\ln(6n)\right)},
\end{equation}
which follows directly from
\begin{equation}
    \ln\left( 12n\ln(6n) \right)\leq \ln\left(12n\cdot\frac{6}{e}n\right) \leq 2\ln{6n}.
\end{equation}
Now with Eq. \eqref{eq_CorExpIneq}, we take $H=\frac{1}{2}n\log_2{(\sqrt{6}M)}+\log_2 (\frac{5}{4}n)+\frac{B\log_2 e}{2}\sqrt{n}$. Then we have
\begin{equation}
    \left(\sqrt{6}M\right)^n3n^22^{-2(H+1)}=\frac{3}{5}e^{-B\sqrt{n}}.
\end{equation}
Hence, it follows that
\begin{equation}
    \left\| \widetilde{\Xi}_n(x)-\Xi_n(x) \right\|_{L^2(\mathbb{R},\gamma_1)}\leq \left(\frac{3}{5}+\frac{1}{\sqrt{2\pi}}\right)e^{-B\sqrt{n}}+\varepsilon_{n,M}(\delta)\leq e^{-B\sqrt{n}}
\end{equation}
for some $\delta$ small enough.
\end{proof}

With these results, we are ready to prove \cref{holomorphic_R^d}.

\begin{proof}[Proof of \cref{holomorphic_R^d}]
Suppose that $\widetilde{\Xi}_{\nu_j}\in \mathcal{N}_{8,n,H_1+1}$ is given by \cref{cor_approx_hermite}, and  $\widetilde{\prod}_d\in \mathcal{N}_{4+d,d-1,H_2+1} $ is given by \cref{multi_prod}. Then
\begin{equation}
\label{eq_E1E2}
\begin{split}
    &\left\|\Xi_{\nu_1}\cdots \Xi_{\nu_d}-\widetilde{\prod}_d\left(\widetilde{\Xi}_{\nu_1},\ldots,\widetilde{\Xi}_{\nu_d}\right)\right\|_{L^2(\mathbb{R}^d,\gamma_d)}\\
    \leq &\left\|\Xi_{\nu_1}\cdots \Xi_{j_d}-\widetilde{\Xi}_{\nu_1}\cdots\widetilde{\Xi}_{\nu_d}\right\|_{L^2(\mathbb{R}^d,\gamma_d)}+\left\|\widetilde{\Xi}_{\nu_1}\cdots\widetilde{\Xi}_{\nu_d}-\widetilde{\prod}_d\left(\widetilde{\Xi}_{\nu_1},\ldots,\widetilde{\Xi}_{\nu_d}\right)\right\|_{L^2(\mathbb{R}^d,\gamma_d)}\\
    =&E_1+E_2.
\end{split}
\end{equation}
For the second term in the RHS of Eq. \eqref{eq_E1E2}, we have
\begin{equation}
\label{eq_E2}
    E_2\leq 6(d-1)\left(1+\left(\sqrt{6}M\right)^n\right)^d2^{-2(H_2+1)}\leq \frac{3}{2}(d-1)(1+\sqrt{6}M)^{nd}2^{-2H_2}.
\end{equation}
Taking $H_2=\frac{d}{2}n\log_2(1+\sqrt{6}M)+\frac{B\log_2e}{2}\sqrt{n}$ in Eq. \eqref{eq_E2}, we have
\begin{equation}
    E_2\leq \frac{3}{2}(d-1)e^{-B\sqrt{n}}.
\end{equation}

For the first term in the RHS of Eq. \eqref{eq_E1E2}, we use the triangular inequality and the relationship 

\begin{equation}
    \prod_{j=1}^{d}a_j-\prod_{j=1}^{d}b_j =\sum_{j=1}^{d} \left(a_j-b_j\right)\prod_{i<j}a_i \prod_{j<i\leq d}b_i
\end{equation}
for all $a_i,b_i\in\mathbb{R}$. Then we obtain
\begin{equation}
\begin{split}
    E_1&\leq \sum_{j=1}^{d} \left\|\Xi_{\nu_j}-\widetilde{\Xi}_{\nu_j}\right\|_{L^2(\mathbb{R},\gamma_1)}\prod_{i<j}\left\|\Xi_{\nu_i}\right\|_{L^2(\mathbb{R},\gamma_1)} \prod_{j<i\leq d}\left\|\widetilde{\Xi}_{\nu_i}\right\|_{L^2(\mathbb{R},\gamma_1)}\\
    &\leq d\cdot err (1+err)^{d-1}\\
    &\leq 2^dd\cdot err,
\end{split}
\end{equation}
where $err=\left(\sqrt{6}M\right)^n3n^22^{-2(H_1+1)}+\varepsilon_{n,M}(\delta)+\frac{1}{\sqrt{2\pi}}\left(\sqrt{6n}M\right)^ne^{-M^2/4}\leq e^{-B\sqrt{n}}\leq 1$ and $B=\left(\prod_{j=1}^{d}\beta_j\right)^{1/d}/\sqrt{2}$.
Let
\begin{equation}
    \Phi(\boldsymbol{x})=\sum_{0\leq\boldsymbol{\nu}\leq n}\langle f,\Xi_{\boldsymbol{\nu}}\rangle \widetilde{\Xi}_{\boldsymbol{\nu}}(\boldsymbol{x}),
\end{equation}
where 
\begin{equation}
    \widetilde{\Xi}_{\boldsymbol{\nu}}(\boldsymbol{x})=\widetilde{\prod}_d\left(\widetilde{\Xi}_{\nu_1}(x_1),\ldots,\widetilde{\Xi}_{\nu_d}(x_d)\right).
\end{equation}
Then we have
\begin{equation}
\label{eq_NNApproxHolo}
\begin{split}
    \left\| f-\Phi \right\|_{L^2\left(\mathbb{R}^d,\gamma_d\right)}\leq \left\|f-\sum_{0\leq\boldsymbol{\nu}\leq n}\langle f,\Xi_{\boldsymbol{\nu}}\rangle \Xi_{\boldsymbol{\nu}}\right\|_{L^2(\mathbb{R}^d,\gamma_d)}+\left\|\Phi-\sum_{0\leq\boldsymbol{\nu}\leq n}\langle f,\Xi_{\boldsymbol{\nu}}\rangle \Xi_{\boldsymbol{\nu}}\right\|_{L^2(\mathbb{R}^d,\gamma_d)}.
\end{split}
\end{equation}
For the second term in Eq. \eqref{eq_NNApproxHolo}, we have
\begin{equation}
\begin{split}
\label{eq_HermiteProj}
    \left\|\Phi-\sum_{0\leq\boldsymbol{\nu}\leq n}\langle f,\Xi_{\boldsymbol{\nu}}\rangle \Xi_{\boldsymbol{\nu}}\right\|_{L^2(\mathbb{R}^d,\gamma_d)}&\leq (E_1+E_2)\sum_{0\leq\boldsymbol{\nu}\leq n} \left|\langle f,\Xi_{\boldsymbol{\nu}}\rangle_{L^2(\mathbb{R}^d,\gamma_d)}\right|.
\end{split}
\end{equation}
Now we estimate the summation $\sum_{0\leq\boldsymbol{\nu}\leq n} \left|\langle f,\Xi_{\boldsymbol{\nu}}\rangle_{L^2(\mathbb{R}^d,\gamma_d)}\right|$ in Eq. \eqref{eq_HermiteProj}. By Eq. \eqref{eq_HermiteExpan1}, we have
\begin{equation}
\label{eq_ProjCoeff}
\begin{split}
    \sum_{0\leq\boldsymbol{\nu}\leq n} \left|\langle f,\Xi_{\boldsymbol{\nu}}\rangle_{L^2(\mathbb{R}^d,\gamma_d)}\right|&\leq C_1 \sum_{0\leq\boldsymbol{\nu}\leq n} \exp\left(-\sum_{j=1}^d\beta_j\sqrt{2\nu_j+1} \right)\\
    &\leq C_1 \prod_{j=1}^d\left( \sum_{\nu_j=0}^{n}e^{-\beta_j \sqrt{2\nu_j+1}} \right)\\
    &\leq C_1 \prod_{j=1}^d\left( e^{-\beta_j}+\int_{0}^n e^{-\beta_j\sqrt{2x+1}}dx \right)\\
    &= C_1 \prod_{j=1}^d \left( e^{-\beta_j}\frac{\beta_j^2+\beta_j+1}{\beta_j^2}-e^{-\beta_j\sqrt{2n+1}}\frac{\beta_j\sqrt{2n+1}+1}{\beta_j^2}\right)\\
    &\leq C_1\prod_{j=1}^d e^{-\beta_j}\frac{\beta_j^2+\beta_j+1}{\beta_j^2}=C(\beta).
\end{split}
\end{equation}
Hence, combining Eqs. \eqref{eq_NNApproxHolo},  \eqref{eq_HermiteProj},  \eqref{eq_ProjCoeff}, and \cref{Hermite_expansion}, we have
\begin{equation*}
\begin{split}
    \left\| f-\Phi \right\|_{L^2\left(\mathbb{R}^d,\gamma_d\right)}\leq \left(C_2+\frac{3(d-1)}{2}C(\beta)+C(\beta)2^dd\right)\exp\left(-B\sqrt{n}\right).
\end{split}
\end{equation*}
The 3D network implementing $\Phi$ is illustrated in \cref{Fig_HoloNet}, which is of width $\max\left\{8nd,n^d(4+d)\right\}$, depth $n+d-1$, and height
\begin{equation}
    \left\lceil\frac{d}{2}n\log_2 \left(1+ 6\sqrt{2n\ln(6n)+4B\sqrt{n}} \right)+\log_2 \left(\frac{5}{4}n\right)+\frac{B\log_2 e}{2}\sqrt{n}\right\rceil.
\end{equation}
Finally, the result follows immediately by casting $n=N$.
\end{proof}

\begin{figure}[H] 
\centering 
\includegraphics[width=1\textwidth]{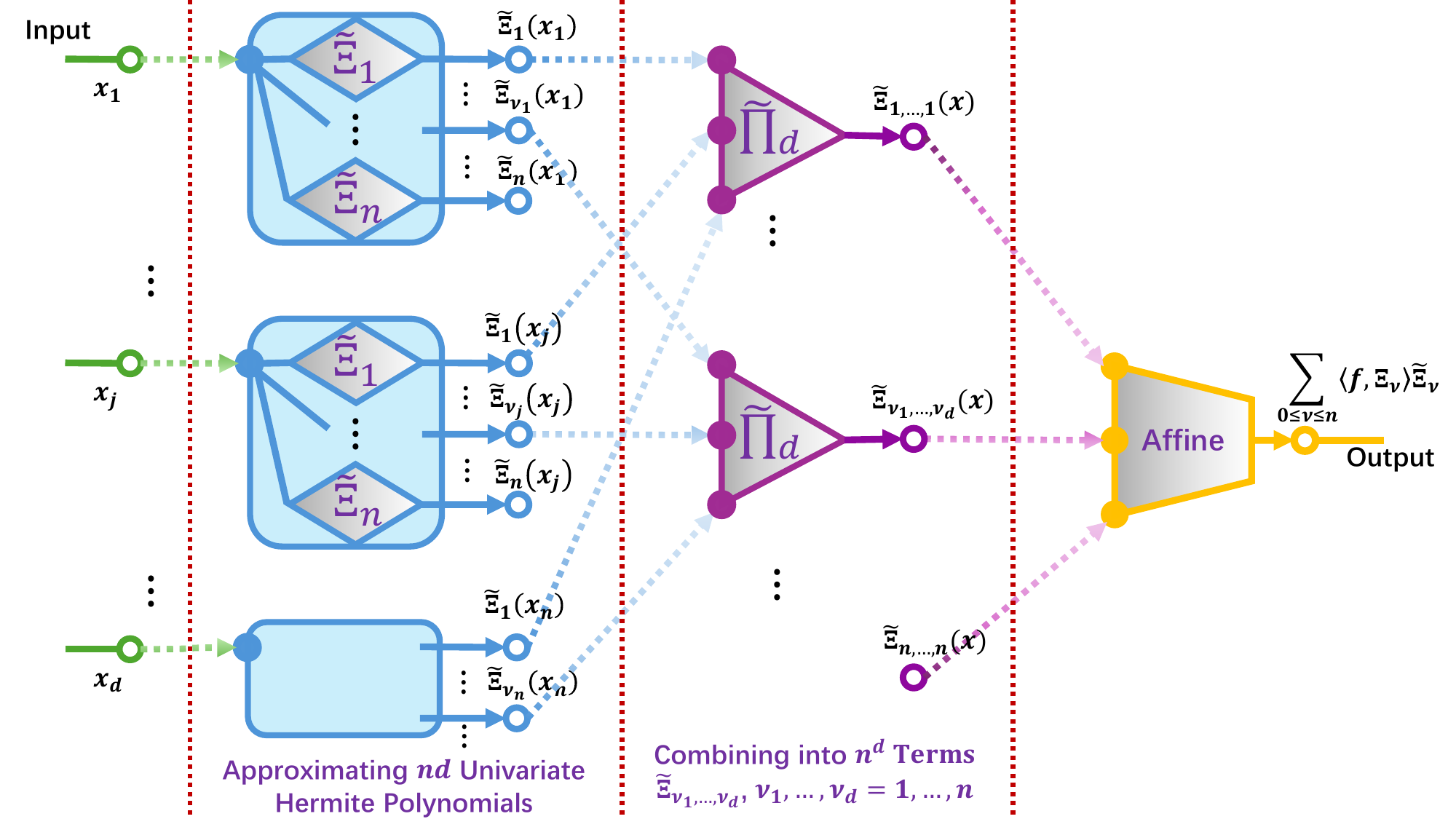} 
\caption{Implementation of the network approximation to analytic functions in \cref{holomorphic_R^d}.} 
\label{Fig_HoloNet} 
\end{figure}

\section{Approximation of $L^p$ Functions}
\label{sec_Lp}
In this section, we establish a quantitative and non-asymptotic approximation to $L^p$ functions. 
\begin{theorem}[Main Result 4]
\label{approx_Lp}
    Let $f:[-1,1]^d\to\mathbb{R}$ be an $L^p$ function and let $r\in \mathbb{N}^+$. Then for any $N_1,N_2>0$, there is a 3D ReLU NN $\mathcal{N}_{W,K,H}$ with $W={(2N_1)}^d(4+d)$, $K=N_2+d$, and $H=\max\left\{ N_2+1
    +\lceil\log_2(N_2+1)\frac{\log_2 3}{2}\rceil,\lceil\log_2 N_1\rceil \right\}$, whose function $\Phi$ fulfills
\begin{equation}
    \left\|f-\Phi\right\|_p\leq r^d B_r \omega_r^d\left(f,N_1^{-1}\right)_p+\frac{3d}{2}\left\|f\right\|_p\left(4C_r N_1\right)^d 2^{-N_2}
\end{equation}
for some constant $B_r$ and $C_r$ dependent only on $r$.
\end{theorem}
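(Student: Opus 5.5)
The plan has two stages: first approximate $f$ by a trigonometric polynomial in $\Pi^d_{n}$ with an order-$r$ Jackson estimate, then realize that trigonometric polynomial by a 3D ReLU network built from sawtooth functions.

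For the first stage, extend $f$ $2$-periodically as in the definition of the modulus of smoothness (with $T=1$). Apply a multivariate Jackson-type theorem in $L^p$ \cite{Aliev2023Jackson}. It provides a trigonometric polynomial $T_{N_1}\in\Pi^d_{cN_1}$, with $c$ depending only on $r$, of the form
\[
T_{N_1}=\sum_{\boldsymbol{j}}a_{\boldsymbol{j}}\prod_{k=1}^d\cos\left(j_k\pi x_k-\tfrac{\eta_k\pi}{2}\right)
\]
such that $\|f-T_{N_1}\|_p\le r^dB_r\,\omega_r(f,N_1^{-1})_p$. This polynomial is most naturally a tensor product of one-dimensional generalized Jackson kernel convolutions, and the bound then comes from telescoping coordinate by coordinate, which is where the factor $r^d$ originates.

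The bound on the coefficients comes from convolution with an $L^1$-bounded kernel: $|a_{\boldsymbol{j}}|\le \|f\|_1\|K\|_\infty\lesssim C_r^d\|f\|_p$ by Hölder on $[-1,1]^d$. The number of terms is at most $(2cN_1)^d$ once the $2^d$ cosine/sine choices are counted. This bookkeeping should produce the factor $(4C_rN_1)^d\|f\|_p$ in the second error term.

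For the second stage, each one-dimensional factor $\cos(j\pi x-\eta\pi/2)$ with $j\le cN_1$ has to be realized by a network. Write $\cos(j\pi x)=\cos(\pi\,\theta_j(x))$, where $\theta_j$ is a sawtooth-type piecewise-linear map of $x$ with $j$ teeth. When $j$ is a power of two this is exactly $g_s$ with $s=\log_2 j$; for general $j$, combine binary-expansion pieces or use a single piecewise-linear reparametrization. Because of the intra-layer links, this costs height $\lceil\log_2 N_1\rceil$, which explains that entry in $H$. After the reduction, $\cos(\pi t)$ with $t\in[0,1]$ is a fixed analytic function with derivatives bounded by $\pi^n$. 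Rescale so that \cref{approx_smooth} applies, or directly use \cref{approx_poly} on a Taylor or Chebyshev truncation of degree $N_2$. This gives error $2^{-N_2}$ with depth $N_2$ and height $N_2+1+\lceil\tfrac{\log_2 3}{2}\log_2(N_2+1)\rceil$, matching the stated $H$ through the coefficient bound $(m+1)3^m$. Sine terms are handled by a shift.

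Next, multiply the $d$ univariate factors with $\widetilde{\prod}_d$ from \cref{multi_prod}, with $M=1$, which adds depth $d$ and width $4+d$. By the telescoping product identity used in the proof of \cref{holomorphic_R^d}, together with the product error $6(d-1)2^{-2(H+1)}$, each monomial-type product has sup error about $\tfrac{3d}{2}\,2^{-N_2}$. Sum over the at most $(2N_1)^d$ terms in parallel, giving width $(2N_1)^d(4+d)$ and depth $N_2+d$. The $L^p$ error on $[-1,1]^d$ is bounded by $2^{d/p}$ times the sup error; absorb this into $C_r$. Multiplying by the coefficient bound gives the second term, and the triangle inequality finishes.

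I expect the main obstacle to be the first stage. One must obtain a Jackson estimate in the sum-of-directional-moduli form for general $L^p$, including $p=1$ and $p=\infty$, that also has explicitly controlled coefficients and frequency support of size $\mathcal{O}(N_1)$. I would first try tensorizing the one-dimensional generalized Jackson operator $\int f(x+t)\,K_{N_1,r}(t)\,dt$, using the standard identity that expresses $f-J f$ through $\Delta^r_{h}f$ averaged against $K$. A secondary subtlety is making the sawtooth reduction for non-dyadic $j$ fit within height $\lceil\log_2 N_1\rceil$.
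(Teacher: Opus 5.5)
Your architecture is the paper's. It tensorizes a one-dimensional generalized Jackson operator and telescopes coordinate by coordinate (\cref{prop_jackson_moment,Lp_poly_approx,poly_approx_multivar}), then uses \cref{approx_tri}, $\widetilde{\prod}_d$ and a parallel sum. Counting the $2^d$ sine/cosine patterns directly is equivalent to the paper's splitting of $f$ into $2^d$ components of fixed parity in each variable. Your secondary subtlety about non-dyadic $j$ is handled by rescaling the input: since $\cos(\pi g_s(y))=\cos(2^s\pi y)$ on $[0,1]$, one has $\cos(k\pi x)=\cos\bigl(\pi g_s(\tfrac{k}{2^s}|x|)\bigr)$ with $s=\lceil\log_2 k\rceil$. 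This costs height $s$ and needs no binary expansion.

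The genuine gap is stage one, which carries all of the order-$r$ content and which you leave unresolved. The paper does not import it from \cite{Aliev2023Jackson}; that reference is used there only for the periodic modulus and a first-order weak-$L^1$ estimate. Instead the paper constructs the estimate. The construction you would try, $Jf(x)=\int f(x+t)K_{n,r}(t)\,dt$, fails for $r\ge 3$. For $f=\cos x$ and an even nonnegative kernel with $\int K_{n,r}=1$, one gets $f-Jf=\bigl(\int(1-\cos t)K_{n,r}(t)\,dt\bigr)\cos x$. That factor is never $o(n^{-2})$ for a nonnegative kernel of degree $n$ (saturation of positive convolution operators), whereas $\omega_r(f,n^{-1})_p=\mathcal{O}(n^{-r})$. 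Correspondingly, the identity you invoke, writing $f-Jf$ through $\Delta^r_t f$, does not hold for this $J$. The missing idea is the modified operator of \cref{Lp_poly_approx},
\[
\mathcal{T}_n f(x)=\int_{-\pi}^{\pi}\sum_{k=1}^{r}(-1)^{k+1}\binom{r}{k}\,f(x+kt)\,K_{n,r}(t)\,dt ,
\]
for which $f-\mathcal{T}_nf=(-1)^r\int_{-\pi}^{\pi}\Delta^r_t f(x)\,K_{n,r}(t)\,dt$ holds exactly. With this operator:
Minkowski's inequality, $\omega_r(f,|t|)_p\le(n|t|+1)^r\omega_r(f,n^{-1})_p$, and the moment bounds of \cref{prop_jackson_moment} give $B_r\,\omega_r(f,n^{-1})_p$. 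Minkowski also bounds each factor on $L^p$, which your telescoping needs. Substituting $u=x+kt$ and using periodicity, $\cos(lt)$ contributes only when $k\mid l$, so $\mathcal{T}_nf\in\Pi_n$ with $n=N_1$ and explicit coefficients.

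Two smaller repairs are also needed. First, bound the coefficients through $|a_{l,r}|=\frac{1}{\pi}\bigl|\int K_{n,r}(t)\cos(lt)\,dt\bigr|\le\frac{1}{\pi}$ for $l\ge 1$, or simply through $|a_{\boldsymbol{j}}|\le\|T_{N_1}\|_{L^1}$ by orthogonality. Do not use $\|K_{n,r}\|_\infty$, which is of order $n$ per coordinate and would cost an extra factor $N_1^d$. Done this way, $|a_{\boldsymbol{j}}|\le C_r^d\|f\|_1$ times the $\mathcal{O}\bigl((2N_1)^d\bigr)$ terms gives exactly the $N_1^d$ in the statement. The paper's cruder bound $|a_{\boldsymbol{j}}|\le(C_rn)^d\|f\|_1$, summed over all $(n+1)^d$ terms, does not give it. Second, take $M=2$ in $\widetilde{\prod}_d$, as the paper does, or clip, since the $\Psi$'s can leave $[-1,1]$ by up to $2^{-N_2}$.
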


\cref{approx_Lp} gives a quantitative and non-asymptotic approximation to general $L^p$ functions on $[-1,1]^d$. Especially, when $\omega_r^d(f,t)_p\leq t^{\alpha}$ for some $\alpha>0$, by taking $N_1=N$ and $N_2=(\alpha+d)\log_2 N$, it follows that a 3D ReLU NN of width $\mathcal{O}(N^d)$, depth $\mathcal{O}(\log_2 N)$, and height $\mathcal{O}(\log_2 N)$ can approximate $f$ with $L^p$ error $\mathcal{O}(N^{-\alpha})$. The proof relies on the trigonometric approximation of $L^p$ functions, which is constructed by a kernel trigonometric polynomial \cite{alma1993Constructive}. The key is to give an explicit bound of the trigonometric polynomials for the quantitative and non-asymptotic analysis.

\cref{approx_tri} approximates the trigonometric functions using a 3D ReLU NN. \cref{prop_jackson_moment} gives the explicit construction of a generalized Jackson-type kernel, a trigonometric polynomial for further construction of the kernel polynomial. \cref{Lp_poly_approx} constructs the trigonometric approximation of the univariate $L^p$ function, based on the kernel in \cref{prop_jackson_moment}.  \cref{poly_approx_multivar} extends the trigonometric approximation to the multivariate case and constructs a 3D ReLU NN to approximate the $L^p$ function that is even or odd in each variable. Finally, we decompose the $L^p$ function $f$ into $f=\sum_{i}f_i$, where $f_i$ is either even or odd in each variable, and apply \cref{poly_approx_multivar} to prove \cref{approx_Lp}.
% \begin{corollary}
%     Assume that $f\in L^p\left([-1,1]^d\right)$ satisfies $\omega_r^d(f,t)_p\leq t^{\alpha}$ for some $\alpha>0$. Then there is a 3D ReLU NN $\Phi:[-1,1]^d\to \mathbb{R}$ of width ${(2N)}^d(4+d)$, depth $(d+\alpha)\log_2 N+d$ and height $\lceil(d+\alpha)\log_2 N+\log_2\left( (\alpha+d)\log_2N+1 \right)+\frac{\log_2 3}{2}\rceil+1$ such that
% \begin{equation}
%     \left\|f-\Phi\right\|_p\leq \left(r^d B_r +\frac{3d}{2}\left(4C_r\right)^d\right)N^{-\alpha}.
% \end{equation}
% \end{corollary}

% \begin{proof}
%     The result follows directly from \cref{approx_Lp} with $L=(\alpha+d)\log_2 N$.
% \end{proof}

%\subsection{Approximation to $L^p$ Functions}
\begin{lemma}
\label{approx_tri}
    For any $N,k\in\mathbb{N}^+$, there is a 3D ReLU NN $\mathcal{N}_{8,N+1,H}$ with $$H=\max\left\{N+1+\lceil \log_2(N+1)+\frac{1}{2}\log_2 3 \rceil,\lceil \log_2 k \rceil\right\},$$ whose function $\Psi_{\cos}:[-1,1]\to\mathbb{R}$ satisfies
\begin{equation}
\label{eq_NNApproxTri}
    \sup_{x\in [-1,1]}\left|\Psi_{\cos}(x)-\cos (k\pi x)\right|\leq 2^{-N}.
\end{equation}
Similarly, $\sin (kx)$ can also be approximated with the same error by a 3D ReLU NN of the same size.
\end{lemma}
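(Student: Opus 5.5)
We split $\cos(k\pi x)$ into two stages. First we reduce the high frequency $k$ to a fixed low-frequency function using sawtooth compositions. Then we approximate that low-frequency function by a polynomial, which \cref{approx_poly} turns into a network. The height budget in the statement has two terms matching these stages: $\lceil\log_2 k\rceil$ floors for the sawtooth, and $N+1+\lceil\log_2(N+1)+\frac12\log_2 3\rceil$ floors for the polynomial/product part.

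\textbf{Step 1 (frequency reduction).} Since $\cos(k\pi x)$ is even, it suffices to work with $|x|\in[0,1]$; as in \cref{multi_prod}, $|x|$ is produced by rescaling into a sawtooth. Let $s=\lceil\log_2 k\rceil$, so $2^s\ge k$. If $k=2^s$, then $\cos(2^s\pi x)=\cos(\pi g_s(x))$ for $x\in[0,1]$, because $g_s$ folds $[0,1]$ piecewise linearly onto $[0,1]$ with slopes $\pm 2^{s}$ and $\cos(\pi\cdot)$ respects these reflections. For general $k$, write $k=2^{m}q$ with $q$ odd. Then $\cos(k\pi x)=\cos(q\pi g_m(x))$, the factor $\cos(q\pi t)$ being handled as a low-degree-in-$q$ function. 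A cleaner alternative is to build a sawtooth-like piecewise linear map $T_k$ with $k$ teeth satisfying $\cos(k\pi x)=\cos(\pi T_k(x))$. Such a $T_k$ can be realized in height $\lceil\log_2 k\rceil$ by composing $g_1$-type folds of varying slopes, in the spirit of the binary expansion of $k$. By \cref{prop_square} and the reformulation of $g_s$ there, each fold costs one floor within a single layer of width at most $2$. So Step 1 uses one layer and height $\lceil\log_2 k\rceil$.

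\textbf{Step 2 (low-frequency part).} Next we approximate $\phi(t)=\cos(\pi t)$ on $[0,1]$. Its derivatives satisfy $|\phi^{(n)}|\le\pi^n$; after the substitution $t\mapsto$ a rescaled variable, or by using the Chebyshev/Taylor bound from \cref{approx_smooth} directly, a polynomial $P_m$ of degree $m=N+1$ achieves error at most $\frac{\pi^{m+1}}{(m+1)!2^m}\le 2^{-N-1}$. Its coefficients are bounded by $2(m+1)3^m$. \cref{approx_poly} then gives a network in $\mathcal{N}_{8,N,H'}$ with error at most $(m+1)^3 3^{m+1}2^{-2H'-1}$. Choosing $H'=N+1+\lceil\log_2(N+1)+\frac12\log_2 3\rceil$ makes this at most $2^{-N-1}$, which is exactly where the stated height comes from. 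The total error is then at most $2^{-N}$.

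\textbf{Step 3 (assembly).} We compose: the first layer computes $T_k(x)\in[0,1]$ with height $\lceil\log_2 k\rceil$, and the following $N$ layers evaluate the polynomial network. The composition introduces no further error, since $T_k$ is represented exactly and $P_m$'s network is uniformly accurate on $[0,1]$. This gives depth $N+1$, width $8$, and height equal to the maximum of the two budgets. For $\sin(k\pi x)$ we use $\sin(k\pi x)=\cos(k\pi x-\pi/2)$: we shift the input and handle the extended domain by the same periodic folding, or for odd extensions multiply by the sign via a folded representation.

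\textbf{Main obstacle.} The hard step is Step 1 for $k$ not a power of two, namely realizing an exact fold $T_k$ with $\cos(k\pi x)=\cos(\pi T_k(x))$ using only $\lceil\log_2 k\rceil$ floors. I would try first to write $k\pi x$ modulo $2\pi$ via composed tent maps $x\mapsto|2^jx-\text{shift}|$ adapted to the binary digits of $k$. If that fails, I would fall back to the factorization $k=2^m q$ and absorb $\cos(q\pi\,\cdot)$ into the polynomial stage; this costs a $q$-dependent degree, which I would then need to reconcile with the stated bound.
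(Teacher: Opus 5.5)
\textbf{The main gap is in Step~1.} It is the gap you flag yourself: for $k$ not a power of two you never construct the fold $T_k$. The binary-digit construction is only asserted. The fallback $k=2^m q$ cannot give the stated size. For odd $k$ it reduces nothing. In general, a polynomial approximating $\cos(q\pi t)$ on $[0,1]$ to accuracy below $1$ must have degree at least $q$, so the depth and height of the polynomial stage would grow with $k$.

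The missing idea is a single rescaling, which is exactly what the paper uses. With $s=\lceil\log_2 k\rceil$, set $T_k(x)=g_s\bigl(\tfrac{k}{2^s}|x|\bigr)$. On $[0,1]$ we have $g_s(y)=2\operatorname{dist}(2^{s-1}y,\mathbb{Z})$, and $t\mapsto\cos(2\pi t)$ is even and $1$-periodic. Hence $\cos(\pi g_s(y))=\cos(2^s\pi y)$ for \emph{every} $y\in[0,1]$, not only when $k=2^s$. Because $k\le 2^s$, the point $y=\tfrac{k}{2^s}|x|$ stays in $[0,1]$, so $\cos(\pi T_k(x))=\cos(k\pi|x|)=\cos(k\pi x)$. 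The factor $k/2^s$ is absorbed into the affine maps, so no $k$-dependent slopes or digit-adapted shifts are needed. On $[0,1]$ this $T_k$ is precisely the $k$-piece fold you were after. With it, the rest of your outline coincides with the paper's proof.

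\textbf{Two further remarks.} First, your Step~2 numbers do not close, though the paper shares this defect. With $m=N+1$, the requirement $(m+1)^3 3^{m+1}2^{-2H'-1}\le 2^{-N-1}$ is equivalent to $2H'\ge N+(N+2)\log_2 3+3\log_2(N+2)$. So $H'$ must grow like $\tfrac{\log_2 6}{2}N$, and your choice $H'=N+1+\lceil\log_2(N+1)+\tfrac12\log_2 3\rceil$ fails: for $N=1$ the left side is $729/512>1/4$. Likewise, $\pi^{N+2}/\bigl((N+2)!\,2^{N+1}\bigr)\le 2^{-N-1}$ means $\pi^{N+2}\le (N+2)!$, which is false for $N\le 4$. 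The paper quotes \cref{approx_smooth} as giving your height, but that lemma actually gives $\lceil\tfrac12((\log_2 6)N+3\log_2(N+2)+2\log_2 3)\rceil$, and its hypothesis $|f^{(n)}|\le n!$ fails for $\cos(\pi t)$. To repair this, use the sharp Chebyshev-node bound on $[0,1]$, error at most $2\cdot 4^{-(m+1)}\pi^{m+1}/(m+1)!$, which only needs $(\pi/2)^{N+2}\le (N+2)!$. Then either accept the larger height or prove a coefficient bound specific to $\cos(\pi t)$.

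Second, your phase-shift route for the sine is the right one. The paper's formula $\sin\bigl(\pi g_s(\tfrac{k}{2^s}|x|)\operatorname{sign}(x)\bigr)$ actually equals $\operatorname{sign}(x)\,|\sin(k\pi x)|$, which is wrong for $k\ge 2$. Concretely, $\sin(k\pi x)=\cos\bigl(\pi g_{s'}(y)\bigr)$ with $y=2^{1-s'}\bigl(\tfrac{kx}{2}-\tfrac14+c\bigr)\in[0,1]$, $c=\lceil\tfrac k2+\tfrac14\rceil$ and $s'=1+\lceil\log_2(k+1)\rceil$. This costs at most two floors more than $\lceil\log_2 k\rceil$.
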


\begin{proof}
   By \cref{approx_smooth}, there is a 3D network $\mathcal{N}_{8,N,N+1+\lceil \log_2(N+1)+\frac{1}{2}\log_2 3 \rceil}$, whose function $\widetilde{\Psi}:[-1,1]\to \mathbb{R}$ satisfies
\begin{equation}
\begin{split}
    \sup_{x\in [-1,1]} \left|\widetilde{\Psi}(x)-\cos (\pi x)\right|&\leq 2^{-N} .
\end{split}
\end{equation}
For $\cos (k\pi x)$, we set $s=\lceil \log_2 k \rceil$. Then for $x\in [-1,1]$, we have
\begin{equation}
    \cos (k\pi x) = \cos\left(\pi g_s\left( \frac{k}{2^s} |x| \right)\right).
\end{equation}
Note that $g_s\left(\frac{k}{2^s}|x|\right)\in [-1,1]$, we have
\begin{equation}
    \sup_{x\in [0,1]}\left|\widetilde{\Psi}\circ g_s\left( \frac{k}{2^s}|x| \right)-\cos (k\pi x)\right|\leq 2^{-N}.
\end{equation}
Since $g_s(|x|)$ can be implemented by a single-layer 3D ReLU NN of width $2$ and height $s$. Hence, $\Psi_{\cos}=\widetilde{\Psi}\circ g_s\left(|x|\right)$ can be represented by a 3D network of width $8$, depth $N+1$, and height $\max\left\{N+1+\lceil \log_2(N+1)+\frac{1}{2}\log_2 3 \rceil,\lceil \log_2 k \rceil\right\}$.
Similarly, for any $x\in [-1,1]$, we have
\begin{equation}
    \sin (k\pi x) = \sin\left(\pi  g_s\left( \frac{k}{2^s}  |x|\right)\operatorname{sign}(x)\right),
\end{equation}
which can also be implemented by a 3D ReLU network of the same size and achieve the same approximation error of $\sin (k\pi x)$.
\end{proof}

The following construction is the generalized Jackson kernel \cite{alma1993Constructive}[Lemma 2.1, Chapter 7]. Here we give an explicit formulation of the kernel and the bound estimation.

\begin{lemma}
\label{prop_jackson_moment}
    Let
\begin{equation}
\label{eq_Kernel}
    K_{n,r}(t)=\gamma_{n,r}\left(\frac{\sin mt/2}{\sin t/2}\right)^{2r}=\sum_{k=1}^{n}a_{k,r}\cos kt
\end{equation}
    be a trigonometric polynomial of degree $n$, where $n\geq r>0,m=\lfloor n/r\rfloor+1$, and $\gamma_{n,r}$ is defined by the relation
$\int_{-\pi}^{\pi}K_{n,r}(t)dt=1$. Then for some constant $B_r,M_r>0$, we have
\begin{equation}
    \int_{-\pi}^{\pi} t^k K_{n,r}(t)dt\leq B_r n^{-k},\quad k\leq 2r-2,
\end{equation}
and $a_{k,r}\leq M_r n$.
\end{lemma}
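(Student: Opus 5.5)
We prove the two claims of \cref{prop_jackson_moment} separately, working throughout with the Fejér-type kernel $F_m(t)=\left(\frac{\sin mt/2}{\sin t/2}\right)^2$ and its power $F_m^r$, which is a nonnegative even trigonometric polynomial of degree $r(m-1)\le n$. Since $K_{n,r}$ is even, only the moments of $|t|^k$ matter, and it suffices to bound $\int_{-\pi}^{\pi}|t|^k K_{n,r}(t)\,dt$ for $0\le k\le 2r-2$.

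\textbf{Step 1: the normalizing constant.} We show $\gamma_{n,r}^{-1}=\int_{-\pi}^{\pi}F_m^r(t)\,dt \asymp m^{2r-1}$.

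For the lower bound, use $\sin x\ge 2x/\pi$ on $[0,\pi/2]$ and $\sin x\le x$. On $|t|\le \pi/m$ these give $\frac{\sin mt/2}{\sin t/2}\ge \frac{(2/\pi)(m|t|/2)}{|t|/2}=\frac{2m}{\pi}$. Hence $\int F_m^r\ge \frac{2\pi}{m}\left(\frac{2m}{\pi}\right)^{2r}=c_r m^{2r-1}$.

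For the upper bound, use the same inequalities to get $\frac{\sin mt/2}{\sin t/2}\le \min\{m,\ \pi/|t|\}$ on $|t|\le\pi$. This yields $\int F_m^r\le C_r m^{2r-1}$.

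Therefore $\gamma_{n,r}\le c_r^{-1}m^{1-2r}$.

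\textbf{Step 2: the moment bound.} Using the pointwise bound $F_m(t)\le \min\{m^2,\pi^2/t^2\}$, split the integral at $|t|=1/m$:
\begin{equation*}
\int_{-\pi}^{\pi}|t|^kF_m^r\,dt\le 2\int_0^{1/m}t^k m^{2r}\,dt+2\int_{1/m}^{\pi}\pi^{2r}t^{k-2r}\,dt.
\end{equation*}
The first piece is $O(m^{2r-k-1})$. The second piece is also $O(m^{2r-k-1})$, because $k-2r\le -2<-1$ makes the tail integrable at infinity.

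This is exactly where the restriction $k\le 2r-2$ is used. For $k=2r-1$ the second piece would produce a $\log m$ factor.

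Multiplying by the estimate on $\gamma_{n,r}$ from Step 1 gives $\int|t|^kK_{n,r}\le B_r' m^{-k}$. Since $m=\lfloor n/r\rfloor+1\ge n/r$, we have $m^{-k}\le r^k n^{-k}$. Absorbing $r^k$ into the constant gives $B_r n^{-k}$ with $B_r$ depending only on $r$.

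\textbf{Step 3: the coefficient bound.} Write the Fourier coefficients as integrals against the kernel:
\begin{equation*}
a_{k,r}=\frac{1}{\pi}\int_{-\pi}^{\pi}K_{n,r}(t)\cos kt\,dt.
\end{equation*}
Since $K_{n,r}\ge 0$ has integral $1$, this gives $|a_{k,r}|\le 1/\pi$ for $k\ge1$. The constant term is $1/(2\pi)$.

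This bound is uniform in $n$, so it is in particular at most $M_r n$ with $M_r=1$ (because $n\ge1$). The statement writes the sum as $\sum_{k=1}^n$ while the constant term is nonzero. I will read the sum as starting at $k=0$; the constant term $1/(2\pi)$ obeys the same bound either way.

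\textbf{Main obstacle.} The delicate part is Step 1, the lower bound on $\int F_m^r$. It is what makes $\gamma_{n,r}$ small enough for the moment estimate to give $n^{-k}$ rather than a worse power. It depends on choosing the interval $|t|\le \pi/m$, on which $mt/2\le\pi/2$ so that $\sin(mt/2)\ge 2(mt/2)/\pi$ applies.

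A second point to track is that all constants are uniform in $n\ge r$. The only link between $m$ and $n$ is $m\ge n/r$, which is where the dependence on $r$ enters $B_r$.
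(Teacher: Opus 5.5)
Your proof is correct. For the moments it uses the same mechanism as the paper. The paper rescales $\tau=mt/2$, compares $\sin(t/2)$ with $t/2$, and reduces $\gamma_{n,r}^{-1}$ and the $k$-th moment to $m^{2r-1}$ and $m^{2r-1-k}$ times integrals of $\tau^{k}(\sin\tau/\tau)^{2r}$, which stay bounded exactly when $k\le 2r-2$. Your split at $|t|=1/m$, with the majorant $\min\{m^2,\pi^2/t^2\}$ and the explicit lower bound on $|t|\le\pi/m$, does the same thing with genuine inequalities instead of the paper's chain of $\sim$ relations, so your constants hold uniformly for every $n\ge r$ rather than only asymptotically.

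The coefficient bound is where you genuinely differ. The paper expands $F_m^r$ by iterated convolution of the Fej\'er coefficients, bounds each positive cosine coefficient by their sum $F_m(0)^r=m^{2r}$, and multiplies by $\gamma_{n,r}\asymp n^{1-2r}$ to get $O(n)$. You instead read $a_{k,r}$ as a Fourier coefficient of a nonnegative kernel of unit mass, which gives $|a_{k,r}|\le 1/\pi$ uniformly in $n$ and $r$. Your route is shorter and proves a strictly stronger bound. It also avoids the normalization slip in the paper's expansion: the displayed sum $\sum_{|k|\le m-1}(1-|k|/m)e^{ikt}$ equals $F_m/m$, not $F_m$. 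The paper's route adds only the fact that the cosine coefficients are positive, which is never used later.

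The sharper bound also pays off downstream. It makes the $\alpha_j$ of Lemma~\ref{Lp_poly_approx} of size $O_r(1)$ rather than $O(n)$. Then summing over the $(n+1)^d$ multi-indices in Lemma~\ref{poly_approx_multivar} costs a single factor of order $n^d$ overall, rather than $n^{2d}$.
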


\begin{proof}
    Consider the Fejer kernel
\begin{equation}
    F_m(t)=\left(\frac{\sin mt/2}{\sin t/2}\right)^{2}=\sum_{|k|\leq m-1}\left( 1-\frac{|k|}{m} \right)e^{ikt}.
\end{equation}
We recursively define the vector $\boldsymbol{c}_j=(c_{j,1},\ldots,c_{2j(m-1)+1})\in \mathbb{R}^{2j(m-1)+1}$ by
\begin{equation}
    \boldsymbol{c}_j=\begin{cases}
        \left(1-\frac{|k|}{m}\right)_{|k|\leq m-1} &, j=1\\
        \boldsymbol{c}_{1} \ast \boldsymbol{c}_{j-1} &, j>1
    \end{cases}
    ,
\end{equation}
where $\boldsymbol{a}\ast \boldsymbol{b}$ is the discrete convolution of two vectors $\boldsymbol{a}$ and $\boldsymbol{b}$ with step size 1. Then the elements of $\boldsymbol{c}_j$ are the coefficients of $\left(F_m(t)\right)^j$, \textit{i.e.}, $\left(F_m(t)\right)^j=\sum_{|k|\leq j(m-1)}c_{j,k} e^{ikt}$.
Hence, $\left(F_m(t)\right)^r$ can be expanded as a cosine series with positive coefficients:
\begin{equation}
    \left(F_m(t)\right)^r=\sum_{k=1}^n \tilde{a}_{k,r} \cos{kt},\tilde{a}_{k,r}>0.
\end{equation}
Then, we have
\begin{equation}
    \sum_{k=1}^{n}\tilde{a}_{k,r}=\left(F_m(0)\right)^r=m^{2r},
\end{equation}
and hence $|\tilde{a}_{k,r}|<m^{2r}$.  Note that
\begin{equation}
\begin{split}
    \gamma_{n,r}&=\left(2\int_{0}^{\pi}\left(\frac{\sin mt/2}{\sin t/2}\right)^{2r}dt\right)^{-1}\\
    &\sim \left(m^{2r-1}\int_{0}^{m\pi/2} \left(\frac{\sin \tau}{\tau}\right)^{2r}d\tau\right)^{-1}\\
    &\sim n^{-(2r-1)},
\end{split}
\end{equation}
where $x_n\sim y_n$ means $x_n$ and $y_n$ are infinitesimals of the same order as $n\to\infty$.
Therefore, it follows that
\begin{equation}
    a_{k,r}=\tilde{a}_{k,r}\gamma_{n,r}\leq \mathcal{O}\left(m^r n^{-2r+1}\right) \leq M_r n
\end{equation}
for some constant $M_r>0$. Similarly, for $0<k\leq 2r-2$, we have
\begin{equation}
\begin{split}
    \int_{0}^{\pi}t^k K_{n,r}(t)dt &\sim \gamma_{n,r}\int_{0}^{\pi}t^k \left(\frac{\sin mt/2}{t}\right)^{2r}dt\\
    &\sim n^{2r-k-1}\int_{0}^{m\pi/2}\tau^k \left(\frac{\sin \tau}{\tau}\right)^{2r}d\tau\\
    &\leq B_r n^{-k}.
\end{split}
\end{equation}

\end{proof}

\begin{lemma}
    \label{Lp_poly_approx}
    Let $f:[-\pi,\pi]\to\mathbb{R}$ be an $L^p$ function. For any fixed $r\in \mathbb{N}^+$ and any $n\geq r$, let $\mathcal{T}_n:L^p([-\pi,\pi])\to \Pi_n$ be the linear operator defined by 
\begin{equation}
    \mathcal{T}_n(f)(x)=\int_{-\pi}^{\pi}\left[(-1)^{r+1}\Delta_t^r \left(f,x\right)+f\left(x\right)\right]K_{n,r}(t)dt,
\end{equation}
where $K_{n,r}(t)$ is given by \cref{prop_jackson_moment}. Then we have (i) $\left\| \mathcal{T}_n(f)-f \right\|_{p}\leq B_r \omega_r\left(f,n^{-1}\right)_p$ for some constant $B_r$; (ii) $\mathcal{T}_n(f)(x)$ can be written as a trigonometric series
\begin{equation}
    \mathcal{T}_n(f)(x)=\sum_{j=0}^{n}\alpha_j \int_{-\pi}^{\pi}f\left(u\right)\cos{(ju)}du \cos{(j x)}+\alpha_j \int_{-\pi}^{\pi}f\left(u\right)\sin{(ju)}du \sin{(j x)},
\end{equation}
with $\left| \alpha_j \right|\leq C_r n$ for some constant $C_r$ dependent only on $r$; (iii)  $\left\| \mathcal{T}_n (f)\right\|_p \leq r \left\| f\right\|_p$.

\end{lemma}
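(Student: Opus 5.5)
The plan is to prove the three assertions of \cref{Lp_poly_approx} separately. Each one rests on the moment and coefficient bounds of the kernel $K_{n,r}$ from \cref{prop_jackson_moment}, together with the periodic extension of $f$.

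\textbf{Step (i).} Since $\int K_{n,r}=1$, I would write
\[
f(x)-\mathcal{T}_n(f)(x)=(-1)^{r}\int_{-\pi}^{\pi}\Delta_t^r(f,x)K_{n,r}(t)\,dt .
\]
Taking the $L^p$ norm in $x$ and applying Minkowski's integral inequality gives
\[
\|f-\mathcal{T}_n f\|_p\le\int_{-\pi}^{\pi}\|\Delta_t^r f\|_p\,K_{n,r}(t)\,dt\le\int_{-\pi}^{\pi}\omega_r(f,|t|)_p\,K_{n,r}(t)\,dt ,
\]
where positivity of $K_{n,r}$ is used and $\Delta_{-t}^r$ is handled by periodicity. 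Next I would use the standard property $\omega_r(f,\lambda s)_p\le(1+\lambda)^r\omega_r(f,s)_p$ with $s=n^{-1}$ and $\lambda=n|t|$. This reduces the bound to $\omega_r(f,n^{-1})_p\int(1+n|t|)^rK_{n,r}(t)\,dt$. Expanding binomially, the remaining integral is at most $\sum_{k\le r}\binom rk n^k\int|t|^kK_{n,r}$, and \cref{prop_jackson_moment} bounds this by a constant $B_r$. The moment bound is available for $k\le 2r-2$, which covers $k\le r$ whenever $r\ge2$. The case $r=1$ needs separate care: there $k=1>2r-2=0$, so I would instead take the kernel exponent $2r$ large enough, as is done in DeVore–Lorentz. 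This moment bookkeeping is the step I expect to be the main obstacle.

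\textbf{Step (ii).} I would expand $\Delta_t^r(f,x)=\sum_{k=0}^r(-1)^{r-k}\binom rk f(x+kt)$. The $k=0$ term combines with the added $f(x)$; with the sign $(-1)^{r+1}$ the two cancel, leaving
\[
\mathcal{T}_n f(x)=\sum_{k=1}^r(-1)^{k+1}\binom rk\int f(x+kt)K_{n,r}(t)\,dt .
\]
In each term I would substitute $u=x+kt$ and use $2\pi$-periodicity. Writing $K_{n,r}(t)=\sum_j a_{j,r}\cos jt$, the term becomes $\frac1k\int f(u)K_{n,r}((u-x)/k)\,du$. I would then expand $\cos(j(u-x)/k)$ by the addition formula. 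Only frequencies $j/k$ that are integers survive after averaging over the $k$ periods, because $\int_0^{2\pi k}$ of the non-integer frequencies cancels; this is the standard fact that $\int f(x+kt)K(t)\,dt$ has coefficients $a_{mk}$ at frequency $m$. Collecting terms, the coefficient of $\cos(mx)\int f(u)\cos(mu)\,du$ (and likewise for $\sin$) is
\[
\alpha_m=\sum_{k\le r,\;mk\le n}(-1)^{k+1}\binom rk\frac{a_{mk,r}}{\pi},
\]
so $|\alpha_m|\le 2^rM_rn/\pi=:C_rn$, with the $1/\pi$ coming from the Fourier normalization.

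\textbf{Step (iii).} Using the same representation and Minkowski's inequality, each term satisfies $\|\int f(\cdot+kt)K(t)\,dt\|_p\le\|f\|_p\int K=\|f\|_p$, since $K\ge0$ and translation preserves the $L^p$ norm. Summing over $k$ gives $\|\mathcal{T}_nf\|_p\le\sum_{k=1}^r\binom rk\|f\|_p=(2^r-1)\|f\|_p$. To obtain the stated constant $r$, I would need to exploit cancellations; if that fails, I would accept the constant $2^r-1$, which only changes a constant in \cref{approx_Lp}.
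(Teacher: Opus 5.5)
Your proposal follows the paper's proof step by step. In (i) it uses Minkowski's integral inequality, the scaling $\omega_r(f,\lambda s)_p\le(1+\lambda)^r\omega_r(f,s)_p$ and the kernel moments of \cref{prop_jackson_moment}. In (ii) it cancels the $k=0$ term, substitutes $u=x+kt$, and keeps only indices $l$ divisible by $k$. In (iii) it applies Minkowski together with $K_{n,r}\ge 0$. You stop short of the literal statement in two places, and both are exactly where the paper's own argument breaks down. So these are not gaps relative to the paper, and your caution is justified.

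\textbf{(i) with $r=1$.} The paper also needs $\int(1+n|t|)^rK_{n,r}(t)\,dt\le B_r$, i.e.\ moments up to order $r$. But \cref{prop_jackson_moment} only supplies moments for $k\le 2r-2$, which covers $k=r$ only when $r\ge 2$. For $r=1$ the operator is $\mathcal{T}_n f=\int f(\cdot+t)K_{n,1}(t)\,dt$, which is the Fej\'er mean $\sigma_n f$, and (i) is then false for $p=\infty$. Take the $2\pi$-periodic extension of $|x|$: it has $\omega_1(f,1/n)_\infty=1/n$, yet $\sigma_n f(0)-f(0)\asymp n^{-1}\log n$. So replacing the kernel by some $K_{n,r'}$ with $2r'-2\ge r$ is necessary, not merely convenient. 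This changes $\mathcal{T}_n$, but your versions of (ii) and (iii) go through verbatim with $a_{l,r'}$ in place of $a_{l,r}$, and that is all \cref{poly_approx_multivar} uses.

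\textbf{(iii).} The paper obtains the constant $r$ from the fixed-$t$ claim $\|\bar{\Delta}_t^r f\|_p\le r\|f\|_p$, and this claim is false. For $r=2$ we have $\bar{\Delta}_t^2 f(x)=2f(x+t)-f(x+2t)$. Take $f=\mathbf{1}_I$ with $|I|<t$ and $t$ small. The two translates then have disjoint supports, so $\|\bar{\Delta}_t^2 f\|_1=3\|f\|_1>2\|f\|_1$. The fixed-$t$ route therefore gives exactly your $2^r-1$. Whether the averaged operator $\mathcal{T}_n$ has norm at most $r$ is not settled by either argument. Your fallback is the safe version; it only replaces $r^d$ by $(2^r-1)^d$ in \cref{poly_approx_multivar} and \cref{approx_Lp}.

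\textbf{(ii).} Your coefficient $\sum_{k}(-1)^{k+1}\binom{r}{k}a_{jk,r}$, with no factor $1/k$, is the correct one. The Jacobian $1/k$ from $u=x+kt$ cancels against the $k$ periods that $u$ sweeps, so the paper's extra $1/k$ is a slip. Your factor $1/\pi$ is also spurious, because the lemma is stated with the unnormalized integrals $\int f(u)\cos(ju)\,du$. Both slips only affect constants. Note also that $K_{n,r}\ge 0$ and $\int K_{n,r}=1$ give $|a_{l,r}|\le 1/\pi$ for $l\ge 1$. Hence $|\alpha_j|\le (2^r-1)/\pi$ uniformly in $n$, which is stronger than the claimed $C_r n$.
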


\begin{proof}
For property (i), using the fact that
\begin{equation}
    \omega_r\left(f,t\right)_p\leq (nt+1)^r\omega_r\left(f,n^{-1}\right)_p,
\end{equation}
the generalized Minkovski inequality, and \cref{prop_jackson_moment}, we have
\begin{equation}
\begin{split}
    \left\| \mathcal{T}_n(f)-f \right\|_{p} &\leq \left\| \int_{-\pi}^{\pi}\Delta_t^r \left(f,x\right)K_{n,r}(t)dt \right\|_p\\
    &\leq \int_{-\pi}^{\pi}\omega_r \left(f,|t|\right)_pK_{n,r}(t)dt \\
    &\leq \omega_r\left(f,n^{-1}\right)_p \int_{-\pi}^{\pi}(nt+1)^r K_{n,r}(t)dt\\
    &\leq B_r \omega_r\left(f,n^{-1}\right)_p.
\end{split}
\end{equation}
For property (ii), note that $\mathcal{T}_n(f)$ can be rewritten as
\begin{equation}
\label{eq_TnfRe}
\begin{split}
    \mathcal{T}_n(f)(x)&=\int_{-\pi}^{\pi}\left(\sum_{k=1}^{r}(-1)^{k+1}\binom{r}{k}f\left(x+kt\right)\right) K_{n,r}(t)dt\\
    &=\sum_{k=1}^r\sum_{l=1}^n(-1)^{k+1}\binom{r}{k}a_{l,r}\int_{-\pi}^{\pi}f\left(x+kt\right)\cos{lt}dt\\
    &=\sum_{\substack{l=0,\ldots,n,\\k=1,\ldots,r,\\k \text{ divides }l}}\frac{(-1)^{k+1}}{k}\binom{r}{k}a_{l,r}\left(\int_{-\pi}^{\pi}f\left(u\right)\cos{\frac{l}{k}u}du\cos{\frac{l}{k}x}+\int_{-\pi}^{\pi}f\left(u\right)\sin{\frac{l}{k}u}du\sin{\frac{l}{k}x}\right),
\end{split}
\end{equation}
where in the last equation we set $u=x+kt$.
For convenience of notation, we omit the subscript $r$ and denote for $j=0,\ldots,n$,
\begin{equation}
    \alpha_j=\sum_{k=1}^{\min\left\{r,\lfloor\frac{n}{j}\rfloor\right\}}a_{jk,r}\frac{(-1)^{k+1}}{k}\binom{r}{k}.
\end{equation}
Then we rewrite Eq. \eqref{eq_TnfRe} as
\begin{equation}
    \mathcal{T}_n(f)(x)=\sum_{j=0}^{n}\alpha_j \int_{-\pi}^{\pi}f\left(u\right)\cos{(ju)}du \cos{(j x)}+\alpha_j \int_{-\pi}^{\pi}f\left(u\right)\sin{(ju)}du \sin{(j x)}.
\end{equation}
By \cref{prop_jackson_moment}, we have 
\begin{equation}
\begin{split}
    \left| \alpha_j \right|\leq M_r n \sum_{k=1}^{r} \frac{(-1)^{k+1}}{k}\binom{r}{k}.
\end{split}
\end{equation}
% Hence, $\mathcal{T}_n(f)(x)$ can be written as a trigonometric series
% \begin{equation}
%     \mathcal{T}_n(f)(x)=\sum_{j=0}^n c_j \cos 2j\pi x+s_j \sin 2j\pi x
% \end{equation}
% with $c_j,s_j$ bounded by
% \begin{equation}
%     \max_{j=0,\ldots,n} \left\{|c_j|,|s_j|\right\}\leq C_r n \left\|f\right\|_1
% \end{equation}
% for some constant $C_r$ depending only on $r$.
For property (iii), we use $\bar{\Delta}_t^r$ to denote the operator
\begin{equation}
    \bar{\Delta}_t^r(f)(x)=(-1)^{r+1}\Delta_t^r(f,x)+f(x),\quad \forall x\in [-\pi,\pi].
\end{equation}
Since we have
\begin{equation}
    \left\| \bar{\Delta}_t^r(f) \right\|_p\leq r \left\|f\right\|_p,
\end{equation}
then by the generalized Minkovski inequality, 
\begin{equation}
\begin{split}
    \left\|\mathcal{T}_n(f)\right\|_p &=\left(\int_{-\pi}^{\pi} \left|\int_{-\pi}^{\pi}\bar{\Delta}_t^r(f)(x) K_{n,r}(t)dt\right|^p dx\right)^{1/p}\\
    &\leq  \int_{-\pi}^{\pi}\left( \left|\bar{\Delta}_t^r(f)(x) K_{n,r}(t)\right|^p dx \right)^{1/p}dt\\
    &= \int_{-\pi}^{\pi}\left( \left|\bar{\Delta}_t^r(f)(x)\right|^p  dx \right)^{1/p} K_{n,r}(t) dt\\
    &\leq r \left\|f\right\|_p.
\end{split}
\end{equation}
\end{proof}

\begin{lemma}
\label{poly_approx_multivar}
    For any $f\in L^p([-1,1]^d)$, there is a linear operator $\mathcal{T}^d_n:  L^p([-1,1])^d \to \Pi_n^d$, such that (i)
 $ \left\|\mathcal{T}_n^d(f)-f\right\|_p\leq r^d B_{r} \omega_r^d (f,n^{-1})$ for some constant $B_{r}$ depending only on $r$; (ii) especially, if $f$ is either even or odd for each variable, then
\begin{equation}
    \mathcal{T}_n^d(f)=\sum_{0\leq \boldsymbol{j}\leq n} a_{j_1,\ldots,j_d} \prod_{k=1}^d \cos(j_k\pi x_k-\frac{\eta_k\pi}{2}),
\end{equation}
where $\eta_k\in \{0,1\}$ indicates the parity of $f$ with respect to the $k$-th variable such that
\begin{equation}
    f(x_1,\ldots,-x_k,\ldots,x_d)=(-1)^{\eta_k}f(x_1,\ldots,x_k,\ldots,x_d),
\end{equation}
and $\left|a_{\boldsymbol{j}}\right|\leq (C_r n)^d \left\|f\right\|_1$ for some constant
$C_{r}$ dependent only on $r$; (iii) under the assumption of (ii), for any $N_1,N_2\in\mathbb{N}^+$, there exists a 3D ReLU NN $\mathcal{N}_{W,K,H}$ with $W=\max\left\{8N_1d,N_1^d(4+d)\right\}$, $K=N_2+d$, and $$H=\max\left\{ N_2+1+\lceil\log_2(N_2+1)+\frac{\log_2 3}{2}\rceil,\lceil\log_2 N_1\rceil \right\},$$ whose function $\Phi:[-1,1]^d\to \mathbb{R}$ satisfies 
\begin{equation}
    \left\|f-\Phi\right\|_p\leq r^d B_r \omega_r^d\left(f,N_1^{-1}\right)+\frac{3d}{2}\left\|f\right\|_1\left(4C_r N_1\right)^d 2^{-N_2}.
\end{equation}
\end{lemma}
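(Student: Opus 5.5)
The construction is a tensor product of the univariate operator of \cref{Lp_poly_approx}, after rescaling $[-1,1]$ to $[-\pi,\pi]$ via $x\mapsto \pi x$. The multivariate trigonometric polynomial it produces is then realized by a 3D network built from \cref{approx_tri} and \cref{multi_prod}.

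\textbf{Step 1: the operator and its error.} Let $\mathcal{T}_n^{(k)}$ denote the operator of \cref{Lp_poly_approx} acting on the $k$-th variable only, with the other variables frozen. Set $\mathcal{T}_n^d=\mathcal{T}_n^{(1)}\circ\cdots\circ\mathcal{T}_n^{(d)}$. For (i) we telescope:
\begin{equation*}
f-\mathcal{T}_n^d f=\sum_{k=1}^d \mathcal{T}_n^{(1)}\cdots\mathcal{T}_n^{(k-1)}\bigl(f-\mathcal{T}_n^{(k)}f\bigr).
\end{equation*}
Fubini and property (iii) of \cref{Lp_poly_approx} give the bound $\|\mathcal{T}_n^{(i)}g\|_p\le r\|g\|_p$, since the univariate bound holds for each frozen slice. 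Property (i), applied slicewise in the $k$-th direction, bounds $\|f-\mathcal{T}_n^{(k)}f\|_p$ by $B_r\,\omega_r(f,n^{-1})_p$, where the modulus is taken in direction $k$ as in Eq.~\eqref{eq_modulus}. The $k$-th term is therefore at most $r^{k-1}B_r\,\omega_r$. Summing over $k$ gives at most $d\,r^{d-1}B_r\,\omega_r$. I read $\omega^d_r$ in the statement as this directional modulus, so this is the claimed bound $r^dB_r\,\omega_r^d$ after enlarging the constant.

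\textbf{Step 2: the expansion.} By (ii) of \cref{Lp_poly_approx}, each $\mathcal{T}_n^{(k)}$ maps to combinations of $\cos(j\pi x_k)$ and $\sin(j\pi x_k)$ with coefficients $\alpha_j\int f\cos$ and $\alpha_j\int f\sin$. Composing the $d$ operators, $\mathcal{T}_n^d f$ is a sum over $\boldsymbol{j}$ and sign patterns of $\prod_k\alpha_{j_k}$ times $\int f(\boldsymbol{u})\prod_k\cos(j_k\pi u_k-\eta_k'\pi/2)\,d\boldsymbol{u}$ times the same product evaluated at $\boldsymbol{x}$.

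If $f$ has parity $\eta_k$ in $x_k$, every integral with $\eta'_k\neq\eta_k$ vanishes, because its integrand is odd in $u_k$. Only the pattern $\boldsymbol{\eta}'=\boldsymbol{\eta}$ survives. Hölder's inequality gives $|\int f\prod\cos|\le\|f\|_1$, and with $|\alpha_j|\le C_rn$ this yields $|a_{\boldsymbol{j}}|\le (C_rn)^d\|f\|_1$, up to the normalization constant from the change of variables.

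\textbf{Step 3: the network.} Take $n=N_1$. For each $k$ and each $j\le N_1$, \cref{approx_tri} gives a network $\Psi_{j,k}$ with $|\Psi_{j,k}-\cos(j\pi x_k-\eta_k\pi/2)|\le 2^{-N_2}$. These are placed in parallel, $N_1d$ subnetworks of width $8$, so the width is $8N_1d$. The depth is $N_2+1$, and the height is the stated $\max\{\cdot,\lceil\log_2N_1\rceil\}$.

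Next feed the outputs, clipped to $[-1,1]$ (or scaled by a constant slightly above $1$), into the $N_1^d$ copies of $\widetilde{\prod}_d$ from \cref{multi_prod}, with $M=1$ and the same height parameter. This adds width $N_1^d(4+d)$ and depth $d-1$, so the total depth is $N_2+d$. A final linear layer combines the products with the coefficients $a_{\boldsymbol{j}}$.

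For each $\boldsymbol{j}$, the product-of-differences identity used in the proof of \cref{holomorphic_R^d} bounds the error of replacing each factor by its approximation by $d\,2^{-N_2}$. \cref{multi_prod} with $H\ge N_2$ adds at most $6(d-1)2^{-2(N_2+1)}$. Together these give at most $\tfrac32 d\,2^{-N_2}$. Multiplying by $|a_{\boldsymbol{j}}|$ and summing over the at most $(N_1+1)^d\le(2N_1)^d$ indices gives a sup-norm error of at most $\tfrac32d\|f\|_1(2C_rN_1)^d2^{-N_2}$. The $L^p$ norm on $[-1,1]^d$ adds a factor $2^{d/p}\le 2^d$, which fits within $(4C_rN_1)^d$. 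Adding the bound from (i) completes (iii).

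\textbf{Main obstacle.} The delicate part is Step 2. One must check that the rescaled operator $\mathcal{T}_n^d$ really produces products of exactly one cosine-or-sine per variable with the parity matching $\boldsymbol{\eta}$, and that the coefficient bound survives the cancellation of the mixed terms. The constants from the change of variables $u\mapsto\pi u$ must also be tracked, and they are absorbed into $C_r$. The sup-to-$L^p$ conversion and the network bookkeeping are routine.
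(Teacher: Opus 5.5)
Your proof follows the paper's route: the tensorized operator with the telescoping bound, parity to kill the mixed sine/cosine terms, then \cref{approx_tri} and \cref{multi_prod}. Your choice $H'\ge N_2$ for the product network is actually better than the paper's $H'=N_1$. It makes the product error $\mathcal{O}(2^{-2N_2})$ and keeps that network's height $H'+1$ inside the stated $H$; the paper's choice guarantees neither.

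The final count in Step~3, however, is wrong. You have $|a_{\boldsymbol j}|\le (C_rN_1)^d\|f\|_1$, a per-term error $\tfrac32 d\,2^{-N_2}$, and $(N_1+1)^d\le(2N_1)^d$ terms. The total is therefore $\tfrac32 d\|f\|_1\,(C_rN_1)^d(2N_1)^d\,2^{-N_2}=\tfrac32 d\|f\|_1\,(2C_rN_1^2)^d\,2^{-N_2}$, not $\tfrac32 d\|f\|_1(2C_rN_1)^d2^{-N_2}$. A factor $N_1^d$ has been dropped, so (iii) does not follow from (ii) as formulated. The paper's proof makes the same slip: it factors out $(C_rn)^d\|f\|_1$ and then bounds $\sum_{0\le\boldsymbol j\le n}$ as if it had a single term.

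The missing ingredient is a sharper coefficient bound. Since $K_{n,r}\ge 0$ and $\int_{-\pi}^{\pi}K_{n,r}=1$, every cosine coefficient satisfies $|a_{k,r}|=\bigl|\tfrac1\pi\int_{-\pi}^{\pi}K_{n,r}(t)\cos kt\,dt\bigr|\le\tfrac1\pi$. Hence $|\alpha_j|\le 2^r/\pi$ with no factor of $n$, and after the change of variables $|a_{\boldsymbol j}|\le 2^{rd}\|f\|_1$. This implies (ii) as stated. Fed into your Step~3, it gives $\tfrac32 d\|f\|_1(2N_1)^d2^{rd}2^{d/p}2^{-N_2}$, which has the claimed form $\tfrac32 d\|f\|_1(4C_rN_1)^d2^{-N_2}$ with $C_r=2^r$.

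There are two smaller issues in Step~1. First, applying the univariate estimate (i) slice by slice yields $\bigl(\int\omega_r(f(\cdot,\boldsymbol y),n^{-1})_p^p\,d\boldsymbol y\bigr)^{1/p}$. An integral of suprema dominates the supremum of integrals, so this is larger than the directional modulus, not smaller. Instead, rerun the Minkowski-inequality argument from the proof of \cref{Lp_poly_approx} directly in $L^p([-1,1]^d)$ with $\Delta^r_{k,t}$. Second, use the exact sum $\sum_{k=1}^d r^{k-1}\le r^d$ (valid for $r\ge 2$) rather than $d\,r^{d-1}$. That gives the stated constant $r^dB_r$ without enlarging $B_r$ by a $d$-dependent factor.
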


\begin{proof}
   
    For property (i), let $\mathcal{T}_{n,k}$ be the operator defined in \cref{Lp_poly_approx} for the $k$-th variable after a scaling $x_k\mapsto x_k/\pi$ and set
\begin{equation}
    \mathcal{T}_n^k=\mathcal{T}_{n,1} \circ \mathcal{T}_{n,2}\circ \cdots \circ \mathcal{T}_{n,k}.
\end{equation}
Then by properties (i) and (iii) of \cref{Lp_poly_approx}, we have
\begin{equation}
\begin{split}
    \left\|\mathcal{T}_n^d(f)-f\right\|_p&\leq  \sum_{k=1}^d \left\|\mathcal{T}_n^{k} (f)-\mathcal{T}_n^{k-1} (f)\right\|_p\\
    &\leq \sum_{k=1}^dr^{k-1} \left\|\mathcal{T}_{n,k} (f)-f\right\|_p\\
    &\leq r^dB_r \omega_r^d (f,n^{-1}),
\end{split}
\end{equation}
where we set $\mathcal{T}_n^{0}(f)=f$.

For property (ii), we assume that $f(x_1,\ldots,x_{d-1},-x_d)=(-1)^{\eta_d}f(x_1,\ldots,x_{d-1},x_d)$. By \cref{Lp_poly_approx}, we have
\begin{equation}
    \mathcal{T}_{n}^d(f)(x)=\sum_{j=0}^{n}\alpha_{d,j} \int_{-\pi}^{\pi}f\left(x_1,\ldots,x_{d-1},\frac{u_d}{\pi}\right)\cos\left(ju_d-\frac{\eta_d \pi}{2}\right)du_d \cos{\left(j\pi x_d-\frac{\eta_d \pi}{2}\right)},
\end{equation}
with $\left|\alpha_{d,j}\right|\leq C_r n$.
Hence, by the definition of $\mathcal{T}_n^d$, we have
\begin{equation}
\begin{split}
    \mathcal{T}_n^d(f)&=\sum_{0\leq\boldsymbol{j}\leq n} \int_{[-\pi,\pi]^d}\left( f\left( \frac{\boldsymbol{u}}{\pi} \right) \prod_{k=1}^d \cos{\left(j_k u_k-\frac{\eta_k\pi}{2}\right)}\right) d\boldsymbol{u} \prod_{k=1}^d \alpha_{j_k,k} \cos\left(j_k\pi x_k-\frac{\eta_k\pi}{2}\right)\\
    &=\sum_{0\leq\boldsymbol{j}\leq n} a_{\boldsymbol{j}} \prod_{k=1}^d \cos(j_k\pi x_k-\frac{\eta_k\pi}{2}),
\end{split}
\end{equation}
where
\begin{equation}
    \left|a_{\boldsymbol{j}}\right|=\left| \prod_{k=1}^d \alpha_{j_k,k}\int_{[-\pi,\pi]^d}\left( f\left( \frac{\boldsymbol{u}}{\pi} \right) \prod_{k=1}^d \cos{\left(j_k u_k-\frac{\eta_k\pi}{2}\right)}\right) d\boldsymbol{u} \right|\leq (C_rn)^d \left\|f\right\|_1.
\end{equation}

For property (iii), by \cref{approx_tri}, for each $\cos{\left(j_k\pi x_k-\frac{\eta_k\pi}{2}\right)}$, there is a 3D ReLU NN $\Psi_{j_k,k}$ of width $8$, depth $N_2+1$ and height $$\max\left\{N_2+1+\lceil \log_2(N_2+1)+\frac{1}{2}\log_2 3 \rceil,\lceil \log_2 n \rceil\right\}$$ such that 
\begin{equation}
    \sup_{x\in [-1,1]}\left|\Psi_{j_k,k}(x)-\cos{\left(j_k\pi x_k-\frac{\eta_k\pi}{2}\right)}\right|\leq 2^{-N_2}.
\end{equation}
Let $\widetilde{\prod}_d:[-2,2]^d\to\mathbb{R}$ be defined in \cref{multi_prod}, which is implemented by a 3D ReLU NN of width $4+d$, depth $d-1$ and height $H^{\prime}+1$, such that
\begin{equation}
    \left\|\widetilde{\prod}_d\left(\Psi_{j_1,1}(x_1),\ldots,\Psi_{j_d,d}(x_1)\right)-\prod_{k=1}^d \Psi_{j_k,k}(x_k)\right\|_{p}\leq3(d-1)2^{-2H^{\prime}+2d-1}.
\end{equation}
We set
\begin{equation}
    \Phi(x)=\sum_{0\leq \boldsymbol{j}\leq n}a_{\boldsymbol{j}}\widetilde{\prod}_d\left( \Psi_{j_1,1}(x_1),\ldots,\Psi_{j_d,d}(x_d) \right),
\end{equation}
then it follows that
\begin{equation}
\begin{split}
\label{eq_f-Phi}
    \left\| f-\Phi\right\|_p
    \leq &\left\| f-\mathcal{T}_n^d(f) \right\|_p+\left\| \Phi-\mathcal{T}_n^d(f) \right\|_p\\
    \leq& r^d B_r \omega_r^d(f,n^{-1})_p+\\&(C_rn)^d \left\|f\right\|_1\sum_{0\leq\boldsymbol{j}\leq n}\left\|\widetilde{\prod}_d\left(\Psi_{j_1,1}(x_1),\ldots,\Psi_{j_d,d}(x_d)\right)-\prod_{k=1}^d\cos{\left( j_k \pi x_k -\frac{\eta_k\pi}{2} \right)}\right\|_p.
\end{split}
\end{equation}
To bound the summation in the RHS of Eq. \eqref{eq_f-Phi}, we have
\begin{equation}
\begin{split}
    &\left\|\widetilde{\prod}_d\left(\Psi_{j_1,1}(x_1),\ldots,\Psi_{j_d,d}(x_1)\right)-\prod_{k=1}^d\cos{\left( j_k \pi x_k -\frac{\eta_k\pi}{2} \right)}\right\|_p\\
    \leq& \left\|\widetilde{\prod}_d\left(\Psi_{j_1,1}(x_1),\ldots,\Psi_{j_d,d}(x_1)\right)-\prod_{k=1}^d\Psi_{j_k,k}(x_k)\right\|_p\\&+\left\|\prod_{k=1}^d\Psi_{j_k,k}(x_k)-\prod_{k=1}^d\cos{\left( j_k \pi x_k -\frac{\eta_k\pi}{2} \right)}\right\|_p\\
    \leq& 3(d-1)2^{-2H^{\prime}+2d-1}\\&+\sum_{k=1}^d\left\|\left(\Psi_{j_k,k}(x_k)-\cos{\left( j_k \pi x_k -\frac{\eta_k\pi}{2} \right)}\right)\prod_{l<k}\Psi_{j_l,l}(x_l)\prod_{k<l\leq d} \cos{\left( j_l \pi x_l -\frac{\eta_l\pi}{2} \right)}\right\|_p\\
    \leq & 3(d-1)2^{-2H^{\prime}+2d-1}+d\cdot2^{-N_2+2d}.
\end{split}
\end{equation}
Especially, taking $H^{\prime}=n=N_1$, we obtain
\begin{equation}
    \left\|f-\Phi\right\|_p\leq r^d B_r \omega_r^d\left(f,N_1^{-1}\right)_p+\frac{3d}{2} \left\|f\right\|_1 \left(4C_r N_1\right)^d 2^{-N_2}.
\end{equation}
Note that $\Phi$, by our construction, has a similar structure to \cref{Fig_HoloNet}, except that here we approximate the trigonometric basis instead of the Hermite polynomial basis. Hence, $\Phi$ can be given by a 3D ReLU NN of width $\max\left\{8N_1d,N_1^d(4+d)\right\}$, depth $N_2+d$, and height $\max\left\{ N_2+1+\lceil\log_2(N_2+1)+\frac{\log_2 3}{2}\rceil,
\right.$\\$\left.\lceil\log_2 N_1\rceil \right\}$.
\end{proof}

\begin{proof}[Proof of \cref{approx_Lp}]
    Note that for any function $f(x)$ defined on $[-1,1]$, we can always decompose $f$ into $f(x)=g(x)+h(x)$ such that $g$ is even and that $h$ is odd. Actually, we can simply set
\begin{equation}
    g(x)=\frac{f(x)+f(-x)}{2} \quad \text{and} \quad h(x)=\frac{f(x)-f(-x)}{2}.    
\end{equation} Then, we have
\begin{equation}
    \omega_r(g,t)_p\leq \omega_r(f,t)_p \quad \text{and}\quad\omega_r(h,t)_p\leq \omega_r(f,t)_p.
\end{equation}
Thus, for any $f\in L^p([-1,1])^d$, applying this decomposition to each variable in order yields $f=\sum_{i=1}^{2^d}f_i$, where each $f_i$ is either even or odd for each variable. Since $\mathcal{T}_n^d$ is linear, we have $\mathcal{T}_n^d(f) = \mathcal{T}_n^d\left(\sum_{i=1}^{2^d} f_i\right) = \sum_{i=1}^{2^d} \mathcal{T}_n^d(f_i)$. 
Then applying \cref{poly_approx_multivar} to each term $\mathcal{T}_n^d(f_i)$, the result follows directly.
\end{proof}

\noindent\textbf{Remark.}
    We can extend \cref{approx_Lp} to the weak-$L^1$ ($WL^1$) space, which contains all measurable functions on $[a,b]$ with bounded quasi-norm
\begin{equation}
\label{eq_weakL1}    \left\|f\right\|_{WL^1([a,b])}=\sup_{\lambda>0}\lambda\cdot m \left\{x\in[a,b]:|f(x)|\geq\lambda\right\},
\end{equation}
where $m(\cdot)$ is the Lebesgue measure. The corresponding weak-$L^1$ modulus of continuity is given by
\begin{equation}
\label{eq_weakL1Mod}
    \omega(f,t)_{WL^1}=\sup_{0<h\leq t}\left\|f(\cdot+h)-f(\cdot)\right\|_{WL^1([a,b-h])}.
\end{equation}
In \cite[Theorem 3.1]{Aliev2023Jackson}, the author constructed a trigonometric polynomial 
\begin{equation}
    U_{n}(x) = \frac{3}{2\pi n(2n^2 + 1)} \cdot \int_{-\pi}^{\pi} f_1(t) \cdot \left( \frac{\sin \frac{n(t-x)}{2}}{\sin \frac{t-x}{2}} \right)^4 dt
\end{equation}
of degree $2n-2$, such that
\begin{equation*}
    \left\|U_{n  }-f_1\right\|_{WL^1}\leq C^{\prime}\omega\left(f,\frac{\pi}{2n}\right)_{WL^1},
\end{equation*}
for some absolute constant $C^{\prime}>0$. Thus, similar to \cref{approx_Lp}, we can use \cref{eq_NNApproxTri} to approximate $f$ by a neural network $\mathcal{N}_{32(N_1-1),N_2+1,H}$ with $$H=\max\left\{N_2+1+\lceil \log_2(N_2+1)+\frac{1}{2}\log_2 3 \rceil,\lceil \log N_1 \rceil\right\},$$ whose function $\Phi$ satisfies that
\begin{equation*}
    \left\|\Phi-f\right\|_{WL^1}\leq C\left(\omega\left(f,N_1^{-1}\right)_{WL^1} +N_1^22^{-N_2}\right).
\end{equation*}
Note that the weak-$L^1$ space meets in many areas of mathematics, \textit{e.g.}, the conjugate functions of Lebesgue
 integrable functions. The difficulty of dealing with the weak-$L^1$ space is that the weak-$L^1$ space is not a normed space. Moreover, infinitely differentiable (even continuous) functions are not dense in this space. Our techniques can overcome the difficulty because we explicitly construct a trigonometric polynomial with the error controlled by the weak-$L^1$ modulus of continuity.

 \section{Conclusion}

In this work, we have addressed two fundamental questions concerning the approximation capabilities of ReLU NNs. Our work has revealed that the efficient representation of sawtooth functions is pivotal for constructing polynomial and trigonometric approximations. Leveraging this insight, we have utilized a height-augmented neural network architecture to give a more efficient representation of the sawtooth function. First, we have demonstrated that the height-augmented architecture yields superior approximation rates for several important classes of analytic functions. Compared to prior works, our construction can substantially improve the approximation efficiency. This provides a theoretically grounded pathway for designing more parameter-efficient networks. Second, for the first time, we have provided a quantitative and non-asymptotic approximation error bound of arbitrary order $r$ for general $L^p$ functions. This result offers explicit and computable error estimates, enriching the theoretical understanding of network approximation in the foundational spaces of modern analysis. We believe that our techniques can also be used to solve other important challenges in deep learning approximation theory in the future.
 \bibliographystyle{elsarticle-num} 
 \bibliography{reference}

%% else use the following coding to input the bibitems directly in the
%% TeX file.

%% Refer following link for more details about bibliography and citations.
%% https://en.wikibooks.org/wiki/LaTeX/Bibliography_Management

% \begin{thebibliography}{00}

% %% For numbered reference style
% %% \bibitem{label}
% %% Text of bibliographic item

% \bibitem{lamport94}
%   Leslie Lamport,
%   \textit{\LaTeX: a document preparation system},
%   Addison Wesley, Massachusetts,
%   2nd edition,
%   1994.

% \end{thebibliography}
\end{document}